\newtheorem{assumption}{Assumption}
\newcommand{\Ex}{\mathbb{E}}
\newcommand{\OPT}{\text{OPT}}
\newcommand{\ALGO}{\text{ALGO}}
\newcommand{\rad}{{\rm rad}}
\newcommand{\crad}{{\gamma}}
\newcommand{\epsilonVal}{\sqrt{\frac{\crad m}{B}} + \log(T) \frac{\crad  m}{B}}
\newcommand{\cradVal}{\log(\frac{mTd}{\delta})}
\newcommand{\comment}[1]{}
\newcommand{\reg}{{\text{regret}}}
\newcommand{\areg}{{\text{avg-regret}}}
\newcommand{\LP}{\text{LP}}
\newcommand{\UCB}{\text{UCB}}
\newcommand{\LCB}{\text{LCB}}
\newcommand{\diff}{{\cal Q}}
\newcommand{\bs}[1]{\boldsymbol{#1}}
\newcommand{\sumT}{{\textstyle{\sum}}}
\newcommand{\BwK}{BwK}
\newcommand{\BwC}{BwC}
\newcommand{\BwR}{BwR}
\newcommand{\BwCR}{BwCR}
\newcommand{\MAB}{MAB}
\newcommand{\OCO}{OCO}
\newcommand{\tarm}{i_t}
\newcommand{\sarm}{i_s}
\newcommand{\cx}{\bs{b}}
\newcommand{\wt}{\bs{w}}
\newcommand{\B}{\bs{A}}
\newcommand{\q}{\bs{q}}
\newcommand{\rs}{r}
\newcommand{\mrv}{\boldsymbol{\mu}}
\newcommand{\mrvS}{{\mu}}
\newcommand{\Bcv}{\boldsymbol{c}}
\newcommand{\BcvS}{{c}}
\newcommand{\Bmcm}{\boldsymbol{C}}
\newcommand{\BmcmS}{C}
\newcommand{\cv}{\boldsymbol{v}}
\newcommand{\cvS}{{v}}
\newcommand{\mcm}{\boldsymbol{V}}
\newcommand{\mcmS}{{V}}
\newcommand{\mcme}{\boldsymbol{\tilde{V}}}
\newcommand{\mcmeS}{{\tilde{V}}}
\newcommand{\rv}{\boldsymbol{v}}
\newcommand{\mrm}{\boldsymbol{V}}
\newcommand{\mrme}{\boldsymbol{\tilde{U}}}
\newcommand{\dis}{\boldsymbol{p}}
\newcommand{\disS}{{p}}
\newcommand{\optdis}{\boldsymbol{p}^*}
\newcommand{\thetaV}{\boldsymbol{\theta}}
\newcommand{\phiV}{\boldsymbol{\phi}}
\newcommand{\HC}{{\cal H}}
\newcommand{\cHypercube}{\HC}
\newcommand{\x}{\bs{x}}
\newcommand{\y}{\bs{y}}
\newcommand{\z}{\bs{z}}
\newcommand{\g}{\bs{g}}
\newcommand{\s}{\bs{s}}
\newcommand{\avgt}[1]{\bar{#1}_{1:t}}
\newcommand{\avgtp}[1]{\bar{#1}_{1:t+1}}
\newcommand{\avgtm}[1]{\bar{#1}_{1:t-1}}
\newcommand{\avgT}[1]{\bar{#1}_{1:T}}
\newcommand{\avgxvt}{\avgt{\x}}
\newcommand{\avgxvtp}{\avgtp{\x}}
\newcommand{\xv}{\x}  
\newcommand{\optxv}{\x^*} 
\newcommand{\normOned}{||{\bf 1}_d||}
\newcommand{\proj}{\pi}
\newcommand{\vertex}[1]{\bs{Z}_t(#1)}
\newcommand{\commentShipra}[1]{}
\newcommand{\regOCO}{{\cal R}^{\text{c}\texttt{}}}
\newcommand{\regB}{\cal{R}(T,\delta)}
\newcommand{\emp}[1]{\widehat{#1}}
\newcommand{\sC}{\beta}
\newcommand{\sP}{\eta}
\newcommand{\threepartdef}[6]
{
	\left\{
		\begin{array}{ll}
			#1 & \text{if } #2 \\
			#3 & \text{if } #4 \\
			#5 & \text{if } #6
		\end{array}
	\right.
}
\newcommand{\EQ}[3]{\begin{center} \vspace{#1}$#3$ \vspace{#2}\end{center}}	
\newcommand{\LEQ}[4]{\begin{equation}\label{#3}\vspace{#1}\textstyle{#4} \vspace{#2}\end{equation}}
\begin{document}

\markboth{Agrawal and Devanur}{Bandits with concave rewards and convex knapsacks}

\title{Bandits with concave rewards and convex knapsacks}
\author{SHIPRA AGRAWAL
\affil{Microsoft Research} 
NIKHIL R. DEVANUR
\affil{Microsoft Research} 
}

\begin{abstract}
In this paper, we consider a very general model for exploration-exploitation tradeoff which allows arbitrary concave rewards and convex constraints on the decisions across time, in addition to the customary limitation on the time horizon. This model subsumes the classic multi-armed bandit (\MAB) model, and the Bandits with Knapsacks (\BwK) model of \citet{BwK}. We also consider an extension of this model to allow linear contexts, similar to the linear contextual extension of the \MAB~model. We demonstrate that a natural and simple extension of the \UCB~family of algorithms for \MAB~provides a polynomial time algorithm that has near-optimal regret guarantees for this substantially more general model, and matches the bounds provided by \citet{BwK} for the special case of \BwK, which is quite surprising. We also provide computationally more efficient algorithms by establishing interesting connections between this problem and other well studied problems/algorithms such as 
the Blackwell approachability problem, online convex optimization, and the Frank-Wolfe technique for convex optimization.

We give examples of several concrete applications, where this more general model of bandits allows for richer and/or more efficient formulations of the problem.
\end{abstract}

%
%
%


%

\maketitle

\section{Introduction}

Multi-armed bandit (henceforth, \MAB) is a classic model for handling exploration-exploitation tradeoff inherent in many sequential decision making problems.  \MAB~ algorithms have found a wide variety of applications in clinical trials, web search, internet advertising, multi-agent systems, queuing and scheduling etc.
The classic \MAB~ framework however only handles ``local'' constraints and ``local'' rewards: 
the constraint is only on the decision in each step and the total reward is necessarily a summation of the 
rewards in each step. (The only constraint allowed on decisions accross time is a bound on the number of trials.) 
For many real world problems there are multiple complex constraints on resources that are consumed during the entire decision process. 
Further, in some applications it may be desirable to evaluate the solution not simply by the sum of rewards obtained at individual time steps, but by a more complex utility function. 
We illustrate several such example scenarios in our Applications section (Section \ref{sec:applications}). 
This paper, in succession to the 
recent results by \citet{BwK}, extends the \MAB~ framework to handle very general 
``global'' constraints and rewards. 

\citet{BwK} took the first step in this direction by successfully extending the \MAB~ model to include linear knapsack constraints on the resources consumed over time. In their model, which they call Bandits with Knapsacks (\BwK), decision at any time $t$ results in a reward and a $d$-dimensional resource consumption vector, and there is a pre-specified budget representing the maximum amount of each resource that can be consumed in time $t$. \citet{BwK} combine techniques from UCB family of algorithms for \MAB, and techniques from online learning algorithms in a non-trivial manner to provide an algorithm with near-optimal regret guarantees for this problem.

In this paper, we introduce a substantial generalization of the \BwK~setting, to include arbitrary concave rewards and arbitrary convex constraints. 
In our vector-valued bandit model, decision at any time $t$ results in the observation of a $d$-dimensional vector $\cv_t$. There is a prespecified convex set $S$ and a prespecified concave obective function $f$, and the goal is that the average of the observed vectors in time $T$ belongs to the specified convex set while maximizing the concave objective. This is essentially the most general convex optimization problem. We refer to this model as ``Bandits with Convex knapsacks and concave Rewards" (henceforth, \BwCR). We also consider an extension of \BwCR~to allow contexts, simiar to the linear contextual bandits extesion of \MAB~\cite{Chu2011}. \BwCR~subsumes \BwK~as a special case when the convex set is simply given by the knapsack constraints, and the objective function is linear. We discuss applications in several domains such as sensor measurements, network routing, crowdsourcing, pay-per-click advertising, which substantially benefit from the more general \BwCR~framework -- either by admitting richer models, or by more efficient formulation of existing models.

Another important contribution of this paper is to demonstrate that a conceptually simple 
 and natural extension of the UCB family of algorithms for \MAB~\cite{Auer2002, Auer2003} provides near-optimal regret bounds for this substantially more general \BwCR~setting, and even for the contextual version of \BwCR. 
Even in the special case of \BwK, this natural extension of UCB algorithm achieves regret bounds matching the problem-dependent lower (and upper) bounds provided by \citet{BwK}. 
This is quite surprising and  is in contrast to the discussion in \citet{BwK}, 
where the need for special techniques for this problem was emphasized, in order to achieve sublinear regret. 

However, this natural extension of the UCB algorithm for \BwCR, even though polynomial-time implementable (as we show in this paper), may not be very computationally efficient. For example, our UCB algorithm for the special case of \BwK~requires solving an \LP~ with $m$ variables and $d$ constraints at every time step. In general, we show that one would require solving a convex optimization problem by ellipsoid method at every time step, for which computing separating hyperplanes itself needs 
another application of the ellipsoid algorithm. 

Our final contribution is giving computationally more efficient algorithms by establishing (sometimes surprising) 
connections between the \BwCR~problem and other well studied problems/algorithms such as 
the Blackwell approachability problem \cite{blackwell1956}, online convex optimization \cite{Zinkevich03}, 
and the Frank-Wolfe (projection-free) algorithm for convex optimization \cite{frank-wolfe}.  
We provide two efficient algorithms, a ``primal'' algorithm based on the Frank-Wolfe algorithm and 
a ``dual'' algorithm based on the reduction of Blackwell approachability to online convex optimization \cite{blackwell2011}.
One may be faster than the other depending on the properties of the objective function $f$ and convex set $S$.  
As an aside, the primal algorithm establishes a connection between Blackwell's algorithm for the approachability problem and the Frank-Wolf algorithm. The dual algorithm turns out to be almost identical to the primal-dual algorithm (PD-BwK) of \citet{BwK} for the special case of \BwK~problem. 



\section{Preliminaries and main results}

\subsection{Bandit with knapsacks (\BwK)}
\label{prelim:BwK}
The following problem was called Bandit with Knapsacks (\BwK) by \citet{BwK}. 
There is a fixed and known finite set of $m$ arms (possible actions), available to the learner, henceforth called the algorithm. There are $d$ resources and finite time-horizon $T$, where $T$ is known to the algorithm. In each time step $t$, the algorithm plays an arm $\tarm$ of the $m$ arms, receives reward $\rs_t \in [0,1]$, and consumes amount $\BcvS_{t,j} \in [0,1]$ of each resource $j$. The reward $\rs_t$ and consumption $\Bcv_{t} \in \mathbb{R}^d$ are revealed to the algorithm after choosing arm $\tarm$. The rewards and costs in every round are generated i.i.d. from some unknown fixed underlying distribution. More precisely,
there is some fixed but unknown $\mrv \in \mathbb{R}^m, \Bmcm \in \mathbb{R}^{d\times m}$ such that 
\EQ{-0.05in}{-0.05in}{\Ex[\rs_{t}| \tarm] = \mrvS_{\tarm},  \ \ \ \Ex[\BcvS_{t,j}(t) | \tarm] = \BmcmS_{j, \tarm}.}
In the beginning of every time step $t$, the algorithm needs to pick $i_t$, using only the history of plays and outcomes until time step $t-1$. 
There is a hard constraint of $B_j$ on the resource consumption of every $j$. The algorithm stops at the earliest time $\tau$  
 when one or more of the constraints is violated, i.e. if $\sum_{t=1}^{\tau} \BcvS_{t,j}(t) > B_j$ for some $j$, or if the time horizon ends, i.e. $\tau > T$.
Its total reward is given by the sum of rewards in all rounds preceding $\tau$, i.e. $\sum_{t=1}^{\tau-1} \rs_{t}$. The goal of the algorithm is to maximize the expected total reward.
The values of $B_j$ are known to the algorithm, and without loss of generality we can assume $B_j=B=\min_j B_j$ for all $j$. (Multiply each $\BcvS_{t,j}$ by $B/B_j$.) 

\paragraph{Regret and Benchmark} Regret is defined as the difference in the total reward obtained by the algorithm and $\OPT$, where $\OPT$ denotes the total expected reward for the optimal dynamic policy. 
\LEQ{0in}{0in}{def:regret}{\reg(T) = \OPT -\sum_{1\le t<\tau} \rs_t.}
For any $\mrv, \Bmcm$, let $\LP(\mrv, \Bmcm)$ denote the value of the following linear program.
\begin{equation}
\label{def:LP}
\begin{array}{lcl}
\max_{\dis} & \mrv\cdot \dis & \\
{\rm s.t.} & \Bmcm \dis \preceq \frac{B}{T}{\bf 1}, &\\
 & \dis \in \Delta_m & 
\end{array}
\end{equation}
where $\Delta_m$ denotes the $m$-dimensional simplex, i.e., $\Delta_m=\{\dis:\sum_{i=1}^m p_i=1, p_i\ge 0, i=1,\ldots, m\}$, and,  $\preceq, \succeq$ denote component-wise $\le$ and $\ge$ respectively. 
It is easy to show  that $\LP(\mrv, \Bmcm) \ge \frac{\OPT}{T}$. (For example, see \citet{Devanur2011}, or Lemma 3.1 of \citet{BwK}.) Hence $T\cdot\LP(\mrv, \Bmcm)$ is commonly used in place of $\OPT$ in the analysis of regret.  


\subsection{Bandits with concave rewards and convex knapsacks (\BwCR)}
\label{prelim:BwC}
\label{prelim:BwR}
\label{prelim:BwCR}
In this paper we consider a substantial generalization of \BwK, to include arbitrary concave rewards and arbitrary convex constraints. This is essentially the most general convex optimization problem. We consider the problem with only convex constraints (\BwC), and the problem with only concave rewards (\BwR) as special cases. 

In the Bandits with concave rewards and convex knapsacks (\BwCR) setting, on playing an arm $\tarm$ at time $t$, we observe a vector $\cv_t \in [0,1]^d$ generated independent of the previous observations, from a fixed but unknown distribution such that $\Ex[\cv_t | \tarm]=\mcm_{\tarm}$, where $\mcm \in [0,1]^{d\times m}$. We are given a convex set $S$, and a concave objective function $f:[0,1]^d \rightarrow [0,1]$. We further make the following assumption regarding Lipschitz continuity of $f$. 
\begin{assumption}
\label{assum:Lcont}
Assume that function $f$ is $L$-lipschitz with respect to norm $||\cdot||$, i.e., $f(\x) -f(\y) \le L||\x-\y||$. Since $f$ is concave, this is equivalent to the condition that for all $\x$ in the domain of $f$, and all supergradients $\bs{g} \in \partial f(\x)$, we have that $||\bs{g}||_* \le L$, where $||\cdot||_*$ is the dual norm (refer to Lemma 2.6 in \cite{Shalev-Shwartz12}). 
\end{assumption}

The goal is to make the average of the observed vectors $\frac{1}{T} \sum_t \cv_t$ be contained in the set $S$, and at the same time maximize $f(\frac{1}{T} \sum_t \cv_t)$. Let $\OPT_f$ denote the expected value of the optimal dynamic solution to this problem. Then, the following lemma provides a benchmark for defining regret. The proof follows simply from concavity of $f$, and is provided in Appendix \ref{app:prelims}.
\begin{lemma}
\label{lem:benchmark-BwCR}
There exists a distribution $\dis^* \in \Delta_m$, such that $\mcm \dis^* \in S$, and $f(\mcm \dis^*) \ge \OPT_f$.
\end{lemma}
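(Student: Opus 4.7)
The plan is to show that any optimal dynamic policy can be ``averaged'' into a stationary distribution over arms that does at least as well. Let $\pi^*$ be an optimal dynamic policy achieving $\OPT_f = \Ex[f(\tfrac{1}{T}\sum_{t=1}^T \cv_t)]$ under $\pi^*$, and define
\[
\disS^*_i \;=\; \Ex\!\left[\tfrac{1}{T}\sum_{t=1}^T \mathbf{1}[\tarm = i]\right], \qquad i = 1,\ldots,m,
\]
where the expectation is over both the policy's internal randomization and the observed vectors. Nonnegativity is immediate, and the indicators sum to $1$ at every $t$, so $\sum_i \disS^*_i = 1$ and hence $\dis^* \in \Delta_m$.

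Next, I would establish set-membership and the objective bound via a single linearity-plus-Jensen calculation. Using $\Ex[\cv_t \mid \tarm] = \mcm_{\tarm}$ and the tower property,
\[
\mcm\,\dis^* \;=\; \sum_{i=1}^m \disS^*_i\, \mcm_i \;=\; \Ex\!\left[\tfrac{1}{T}\sum_{t=1}^T \mcm_{\tarm}\right] \;=\; \Ex\!\left[\tfrac{1}{T}\sum_{t=1}^T \cv_t\right].
\]
Feasibility of $\pi^*$ means that $\tfrac{1}{T}\sum_t \cv_t \in S$ (either almost surely, or in the relaxed expected-constraint benchmark, analogous to the LP relaxation for \BwK~in Section \ref{prelim:BwK}); in either case, convexity of $S$ forces $\Ex[\tfrac{1}{T}\sum_t \cv_t] \in S$, and therefore $\mcm\,\dis^* \in S$. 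Jensen's inequality applied to the concave function $f$ then yields
\[
f(\mcm\,\dis^*) \;=\; f\!\left(\Ex\!\left[\tfrac{1}{T}\sum_t \cv_t\right]\right) \;\ge\; \Ex\!\left[f\!\left(\tfrac{1}{T}\sum_t \cv_t\right)\right] \;=\; \OPT_f,
\]
which is the desired inequality.

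The main subtlety---rather than a technical obstacle---is pinning down the precise sense in which $\OPT_f$ is attained, since a dynamic policy produces a random average vector while the claimed $\dis^*$ is deterministic. Defining $\dis^*$ through the expected play frequencies (instead of, say, trying to match the expected vector directly) makes both the set-membership claim and the objective bound fall out of the same linearity-plus-Jensen step, with no additional work needed.
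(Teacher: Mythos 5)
Your proof is correct and follows essentially the same route as the paper's: both define $\dis^*$ as the expected empirical play frequencies of the optimal policy, identify $\mcm\dis^*$ with $\Ex[\tfrac{1}{T}\sum_t \cv_t]$ via the tower property, and conclude by convexity of $S$ and Jensen's inequality for the concave $f$. The only cosmetic difference is that the paper phrases the benchmark as the optimal solution in hindsight rather than the optimal dynamic policy, which does not change the argument.
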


We minimize two kinds of regret: regret in objective and regret in constraints. The (average) regret in objective is defined as
\LEQ{0in}{0in}{}{\areg_1(T) :=  \OPT_f -  f(\frac{1}{T} \sumT_{t=1}^T \rv_t) \le  f(\mcm\dis^*) - f(\frac{1}{T} \sumT_{t=1}^T \rv_t).}
And, (average) regret in constraints is the distance of average observed vector from $S$,
\LEQ{0in}{0in}{}{\areg_2(T) := d(\frac{1}{T}\sumT_{t=1}^T \cv_t, S),}
where the distance function $d(\x, S)$ is defined as $||\x-\proj_S(\x)||$, $\proj_S(\x)$ is the projection of $\x$ on $S$, and $||\cdot||$ denotes an $L_q$ norm.

\comment{
Now, due to concavity of $f$,
\begin{equation} 
\frac{\OPT}{T} = \Ex_{\optdis}[f(\rv)] \le  f(\mrm \optdis).
\end{equation}
Therefore,
\begin{equation}
 \areg_1(T) \le  f(\mrm \optdis)-  f(\frac{1}{T} \sum_t \rv_t)
\end{equation}
}
\commentShipra{For compactness of description, we will call an algorithm $\regB$-optimal, if its average regret in time $T$ can be bounded as
\LEQ{0in}{0in}{eq:regBoptimal}{\areg_1(T)\le \frac{L}{T} \cdot \regB, \text{~and,~} \areg_2(T)\le \frac{1}{T} \cdot \regB}
with probability $1-\delta$. 
}

Below, we describe some special cases and extensions of this setting.
\paragraph{Hard constraints}
In some applications, the constraints involved are hard constraints, that is, it is desirable that they are satisfied with high probability even if at a cost of higher regret in the objective. Therefore, we may want to tradeoff the regret in distance from $S$ for possibly more regret in objective $f$. While this may not be always doable, under following conditions a simple modification of our algorithm can achieve this: the set $S$ and function $f$ are such that it is easy to define and use a shrunken set $S^{\epsilon}$ for any $\epsilon \in [0,1]$, defined as a subset of $S$ such that points within a distance of $\epsilon$ from this set lie in $S$. And, $S^{\epsilon}$ contains at least one good point $\mcm \dis$ with objective function value within $K\epsilon$ of the optimal value. More precisely,
\begin{eqnarray}
\label{eq:shrunken-condition}
& d(\x,S^{\epsilon})\le \epsilon \Rightarrow \x\in S, & \text{ and} \nonumber\\
\exists \dis\in \Delta_m: & \mcm \dis \in S^{\epsilon}, f(\mrm \dis) \ge f(\mrm \dis^*) - K\epsilon, &
\end{eqnarray}
for some $K \ge 0$. A special case is when $S$ is a downward closed set, $f$ is linear, and distance is $L_{\infty}$ distance. In this case, we can define $S^{\epsilon} = \{\x(1-\epsilon), \forall \x\in S\}$, for which $\mrm \dis^* (1-\epsilon) \in S^{\epsilon}$ and $f(\mrm \dis^* (1-\epsilon)) \ge (1-\epsilon)f(\mrm \dis)$. 

In our algorithms, we will be able to simply substitute $S^{\epsilon}$ for $S$ to achieve the desired tradeoff. 
This observation will be useful for \BwK~ problem, which involves hard (downward closed) resource consumption constraints -- the algorithm needs to abort when the resource constraints are violated.

\paragraph{Linear contextual version of \BwCR} We also consider an extension of our techniques to the linear contextual version of the \BwCR~problem, which can be derived from the linear contextual bandits problem \cite{Auer2002, Chu2011}. In this setting, every arm $i$ and component $j$ is associated with a context vector $\cx_{ji}$, which is known to the algorithm. There is an unknown  $n$-dimensional weight vector $\wt_j$ for every component $j$, such that $\mcm_{ji}=\cx_{ji} \cdot \wt_j$. Note that effectively, the $d$ $n$-dimensional weight vectors are the unknown parameters to be learned in this problem, where $n$ could be much smaller than the number of arms $m$. Algorithms for contextual bandits are expected to take advantage of this structure of the problem to produce low regret guarantees even when the number of arms is large.

In a more general setting, the context vector for arm $i$ could even change with time (but are provided to the algorithm before taking the decision at time $t$), however that can be handled with only notational changes to our solution, and for simplicity of illustration, we will restrict to static contexts in the main body of this paper. 

\paragraph{\BwK, \BwR, and \BwC~as special cases} Observe that \BwCR~subsumes the \BwK~problem, on defining objective function $f(\x)=x_1$, and $S:=\{\x:\x_{-1} \le \frac{B}{T} {\bf 1}\}$. 
We define Bandits with concave Rewards (\BwR) as a special case of \BwCR~ when there are no constraints, i.e., the set $S=\mathbb{R}^n$. And, Bandits with Convex knapsacks (\BwC) as the special case when the goal is only to satisfy the constraints, i.e. there is no objective function $f$. The average regret for \BwR~ in time $T$ is $\areg_1(T)$, and for \BwC~ it is $\areg_2(T)$. 
\commentShipra{We will call an algorithm $\regB$-optimal if it provides an average regret bound of $\frac{L}{T}\regB$ for \BwR~and  $\frac{1}{T} \regB$ for \BwC.}
\subsection{Summary of Results}
\label{sec:results}

Our main result is that a natural extension of UCB algorithm (Algorithm \ref{algo:UCB-BwCR}) for \BwCR~ achieves bounds of
\EQ{0in}{0in}{O(L\normOned \sqrt{\frac m T \ln(\frac{mTd}{\delta})}), \text{ and } O(\normOned \sqrt{\frac m T \ln(\frac{mTd}{\delta})}),}
 with probability $1-\delta$, on the average regret in the objective ($\areg_1(T)$) and distance from constraint set ($\areg_2(T)$), respectively. 
Here $\normOned$ denotes the norm of $d$-dimensional vector of all $1$'s, with respect to the norm used in the Lipschitz condition of $f$, and, in defining the distance from set $S$, respectively.

We extend our results to the linear contextual version of \BwCR, and provide an algorithm 
with average regret bounds of
\EQ{0in}{0in}{O(Ln\normOned \sqrt{\frac{1}{T}\ln(\frac{Td}{\delta})}), \text{ and } O(n\normOned \sqrt{\frac{1}{T}\ln(\frac{Td}{\delta}))},}
respectively, 
when contexts are of dimension $n$. Note that these regret bounds do not depend on the number of arms $m$, which is crucial when number of arms is large, possibly infinite.

Note that \BwCR~subsumes the \MAB~ problem, and the contextual version of \BwCR~subsumes the linear contextual bandits problem, with $d=1, L=1$ and $S=\mathbb{R}^n$. And, our regret bounds for these problems match the lower bounds provided in \citet{BubeckC12} (Section 3.3) and \citet{DaniHK08}, respectively, within logarithmic factors.
A more refined problem-dependent lower bound (and matching upper bound) for the special case of \BwK~ was provided in \cite{BwK}. We show that our UCB algorithm when specialized to this case (Algorithm \ref{algo:UCB-BwK}) achieves a regret bound of
\EQ{0in}{0in}{\reg(T)=O\left(\sqrt{\log(\frac{mdT}{\delta})} (\OPT \sqrt{\frac{m}{B}} + \sqrt{m \OPT} + m \sqrt{\log(\frac{mTd}{\delta})})\right),}
 which matches the bounds of \cite{BwK}.
Thus, our UCB based algorithms provide near-optimal regret bounds. Precise statements of these results appear as Theorem \ref{th:UCB-BwCR} and Theorem \ref{th:UCB-BwK}.

Section \ref{sec:efficient} and \ref{sec:eff-BwCR} are devoted to developing a general framework for converting the UCB algorithm to fast algorithms. We provide algorithms \BwC~ and \BwR~ for which the arm selection problem at time $t$ is simply of the form: 
\EQ{0in}{0in}{\tarm = \arg \max_{i=1,\ldots, m} \omega_{t,i}.}
where $\omega_{t,i}$ for every $i$, can be computed using history until time $t-1$ in $O(d)$ time. 
These fast algorithms can be viewed as approximate primal and dual implementations of the UCB algorithm, and come with a cost of increased regret, but we show that the regret increases by only constant factors. The derivation of these fast algorithms from UCB also provides interesting insights into connections between this problem, the Blackwell approachability problem, and the Frank-Wolfe projection technique for convex optimization, which may be of independent interest.

\subsection{Related Work}
The \BwCR~problem, as defined in the previous section, is closely related to the stochastic multi-armed bandits (\MAB) problem, to the generalized secretary problems under stochastic assumption, and to the Blackwell approachability problem. As we mentioned in the introduction, the major difference between the classic \MAB~ model and settings like \BwCR~(or \BwK) is that the latter allow for ``global" constraints -- constraints on decisions accross time. The only global constraint allowed in the classic \MAB~ model is the time horizon $T$.  

Generalized secretary problems under i.i.d. distribution include online stochastic packing and covering  problems (e.g., \citep{Devanur2011}, \citep{Feldman10}). These problems involve ``global" packing or covering constraints on decisions over time, as we have in \BwCR. However, a major difference between the secretary problems and a bandit setting like \BwCR~ is that in secretary problems, {\it before} taking the decision at time $t$ the algorithm knows how much the reward or consumption (or in general $\cv_t$) will be for every possible decision. On the other hand, in the \BwCR~ setting, $\cv_t$ is revealed {\em
 after} the algorithm chooses the arm to play at time $t$. One of the ideas in this paper is to estimate the observations at time $t$ by \UCB~estimates computed using only history til time $t-1$, and before choosing the arm $\tarm$. This effectively reduces the problem to secretary problem, with error in the \UCB~estimates to account for in regret bounds.

Blackwell approachability problem considers a two player vector-valued game with a bi-affine payoff function,
 $r(\dis,\q)=\dis^TM\q$.
Further, it is assumed that for all $\q$, there exists a $\dis$ such that $r(\dis,\q) \in S$.
The row player's goal is to direct the payoff vector to some convex set $S$. 
The Bandit with convex knapsacks (\BwC) problem is closely related to the Blackwell approachability problem. 
The row player is the online algorithm and the column player is nature. 
However, in this case the nature always produces its outcome using a \emph{fixed} (but unknown) mixed strategy (distribution) $\q^*$. Also, this means a weaker assumption should suffice: there exists a $\dis^*$ for this particular $\q^*$, such that $r(\dis^*,\q^*) \in S$ (stated as the assumption $\exists \dis^*, \mcm \dis^*\in S$). 
The bigger difference algorithmically is that there is nothing to statistically estimate in the Blackwell approachability problem, 
the only unknown is the column player strategy which  may change every time. 
On the other hand, esitmating the expected consumption is inherently the core part of any algorithm for \BwC. 

Due to these differences, algorithms for none of these related problems directly solve the \BwCR~problem. Nonetheless, the similarities suffice to inspire many of the ideas for computationally efficient algorithms that we present in this paper. 

The work closest to our work is that of \citet{BwK} on the \BwK~problem. We successfully generalize their setting to include arbitrary convex constraints and concave objectives, as well as linear contexts. Additionally, we demonstrate that a simple and natural extension of UCB algorithm suffices to obtain optimal regret for \BwCR~which subsumes \BwK, and provide generalized techniques for deriving multiple efficient implementations of this algorithm -- one of which reduces to an algorithm similar to the PD-BwK algorithm of \citet{BwK} for the speical case of \BwK. 
 
\comment{
On playing arm $\tarm$ at time $t$, the algorithm observes vector $\cv_t=r(e_{\tarm},e_{\omega_t})$, where $\omega_t$ is outcome produced by nature. 
Then, mean of the observed vector $\Ex_{q^*}[\cv_t | \tarm] = r(e_{i}, q^*) =: \mcm_{\tarm}$, due to biaffinity of $r(\cdot, \cdot)$. Also, $r(p^*,q^*)=\sum_i p^*_i r(e_i,q^*)=\sum_i p^*_i\mcm_i$, so the condition $r(p^*,q^*) \in S$ is equivalent to saying that $\mcm \dis^* \in S$.

 And, the regret
 $$\areg_2(T)=  d(\frac{1}{T}\sum_{t=1}^T \cv_t, S) = d(\frac{1}{T}\sum_{t=1}^T r(e_{\tarm},e_{\omega_t}), S).$$
}


\subsection{Fenchel duality} 
\label{sec:Fenchel} 
Fenchel duality will be used throughout the paper, below we provide some background on this useful mathematical concept. We define the Fenchel conjugate of $f$ as 
\EQ{0in}{0in}{f^*(\thetaV):=\max_{\y \in [0,1]^d} \{ \y \cdot \thetaV + f(\y)\}}

Suppose that $f$ is a concave function defined  on $[0,1]^d$, and as in Assumption \ref{assum:Lcont},
at every point $\x$, every supergradient $\bs{g}_{x}$ of $f$ has bounded dual norm $||\bs{g}_x||_* \le L$. 
Then, the following dual relationship is known between $f$ and $f^*$. A proof is provided in Appendix \ref{app:Fenchel} for completeness. 
\begin{lemma}
\label{lem:FenchelDuality} $f(\z) = \min_{||\thetaV||_* \le L} f^*(\thetaV)-\thetaV \cdot \z.$
\end{lemma}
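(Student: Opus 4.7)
The plan is the standard two-sided bound of Fenchel duality, with the supergradient assumption from Assumption~\ref{assum:Lcont} ensuring the minimizing dual variable stays inside the ball $\{\thetaV:\|\thetaV\|_*\le L\}$.

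First I would establish the easy direction ($\ge$). By the definition $f^*(\thetaV)=\max_{\y\in[0,1]^d}\{\y\cdot\thetaV+f(\y)\}$, taking $\y=\z$ gives $f^*(\thetaV)\ge \z\cdot\thetaV+f(\z)$, which rearranges to $f^*(\thetaV)-\thetaV\cdot\z\ge f(\z)$ for every $\thetaV$. Hence $\min_{\|\thetaV\|_*\le L}\{f^*(\thetaV)-\thetaV\cdot\z\}\ge f(\z)$.

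Next I would establish the matching upper bound ($\le$) by exhibiting a feasible dual variable that achieves $f(\z)$. Let $\bs{g}\in\partial f(\z)$ be any supergradient of $f$ at $\z$, which exists since $f$ is concave on $[0,1]^d$. By concavity, $f(\y)\le f(\z)+\bs{g}\cdot(\y-\z)$ for all $\y\in[0,1]^d$. Setting $\thetaV:=-\bs{g}$ and plugging this into the definition of $f^*$ gives
\begin{equation*}
f^*(\thetaV)=\max_{\y\in[0,1]^d}\{-\bs{g}\cdot\y+f(\y)\}\le \max_{\y\in[0,1]^d}\{-\bs{g}\cdot\y+f(\z)+\bs{g}\cdot(\y-\z)\}= f(\z)-\bs{g}\cdot\z=f(\z)+\thetaV\cdot\z.
\end{equation*}
Thus $f^*(\thetaV)-\thetaV\cdot\z\le f(\z)$. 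By Assumption~\ref{assum:Lcont} the supergradient satisfies $\|\bs{g}\|_*\le L$, so $\|\thetaV\|_*\le L$ and $\thetaV$ is feasible for the minimization. This shows the minimum is at most $f(\z)$, completing the proof.

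The only mildly delicate point, which I would address only if needed, is existence of a supergradient at boundary points of $[0,1]^d$. For points in the relative interior this is immediate from concavity; at boundary points one can invoke that Assumption~\ref{assum:Lcont} is a hypothesis on all $\x$ in the domain of $f$ (so in particular supergradients are assumed to exist with bounded dual norm there), or alternatively take a sequence of interior points $\z_n\to\z$ and pass to the limit using continuity of $f$ and compactness of the ball $\{\|\thetaV\|_*\le L\}$. Apart from this minor technicality, the proof is just the two chained inequalities above.
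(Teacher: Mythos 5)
Your proof is correct, and it takes a somewhat different route from the paper's. The paper first rewrites $\min_{\|\thetaV\|_*\le L} f^*(\thetaV)-\thetaV\cdot\z$ as $\min_{\thetaV}\max_{\y}\{\y\cdot\thetaV+f(\y)-\thetaV\cdot\z\}$ and invokes the minimax theorem to swap the min and max, after which it bounds the inner minimum by $f(\z)$ using the supergradient $\g_\z$ (with equality at $\y=\z$). You instead give the direct two-sided argument: weak duality ($\min \ge f(\z)$) by evaluating the max defining $f^*$ at $\y=\z$, and the reverse inequality by exhibiting the explicit feasible witness $\thetaV=-\g$ and chaining the concavity inequality through the definition of $f^*$. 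The substance is identical in the crucial step — both proofs rest on Assumption~\ref{assum:Lcont} supplying a supergradient with $\|\g\|_*\le L$ so that $\thetaV=-\g$ is feasible — but your version is more elementary: it avoids the min-max exchange entirely, which in the paper is asserted without verifying the hypotheses (the bilinear-in-$\thetaV$, concave-in-$\y$ structure over compact convex domains that makes Sion's theorem applicable). Your closing remark about supergradients at boundary points is a reasonable precaution, though the paper's Assumption~\ref{assum:Lcont} is stated for all $\x$ in the domain of $f$, so existence there is taken as given.
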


A special case is when $f(\x) = - d(\x,S)$ for some convex set $S$. This function is $1$-Lipschitz with respect to norm $||\cdot||$ used in the definition of distance. In this case, $f^*(\thetaV) = h_S(\thetaV):=\max_{\y\in S} \thetaV\cdot \y$, and Lemma \ref{lem:FenchelDuality} specializes to 
\EQ{0in}{0in}{
d(\x,S) = \max_{||\thetaV||_* \le 1}\thetaV \cdot \x - h_S(\thetaV).}
The derivation of this equality also appears in \citet{blackwell2011}.


\subsection{Notations}
We use bold alphabets or bold greek letters for vectors, and bold capital letters for matrices. Most matrices used in this paper will be $d\times m$ dimensional, and for a matrix $\bs{A}$, $A_{ji}$ denotes its $ji^{th}$ element, $\bs{A}_i$ denotes its $i^{th}$ column vector, and $\bs{A}_j$ its $j^{th}$ row vector. For matrices which represent  time dependent estimates, we use $\bs{A}_t$ for the matrix at time $t$, and $\bs{A}_{t,i}, \bs{A}_{t,j}$ and ${A}_{t,ji}$ for its $i^{th}$ column, $j^{th}$ row, and $ji$ component, respectively. 
For two vectors $\x, \y$, $\x\cdot\y$ denotes the inner product.
 
\section{Applications}
\label{sec:applications}
Below, we demonstrate that \BwCR~ setting and its extension to contextual bandits allows us to effectively handle much richer and complex models in applications like sensor networks, crowdsourcing, pay-per-click advertising etc., than those permitted by multi-armed bandits (\MAB), or bandits with knapsacks (\BwK) formulations. While some of these simply cannot be formulated in the \MAB~ or \BwK~ frameworks, others would require an exponential blowup of dimensions to convert the convex constraints to linear knapsack or covering constraints. 

\paragraph{Sensor networks, network routing}

Consider a sensor network with $m$ sensors, each sensor $i$ covering a subset $A_i$ of $N$ points, where $N>>m$, and $N$ could even be exponential compared to $m$. Taking a reading from any sensor costs energy. Also, a sensor measurement may fail with probability $q_i$. The aim is to take atmost $T$ measurements such that each point has at least $b$ successful readings. We are given that there exists a strategy for selecting the sensors, so that in expectation these covering constraints can be satisfied. A strategy corresponds to a distribution $\dis \in \Delta_m$ such that you measure sensor $i$ with probability $p_i$. We are given that
\begin{center} $\exists \dis^* \in \Delta_m, T\sum_{i: k\in A_i} p^*_i q_i\ge b, \forall k=1,\ldots, N. $\end{center}

We can model this as \BwC~by having $\cv_t \in \{0,1\}^m$ (i.e., $d=m$), where on playing arm $\tarm$, $\cv_{t,\tarm}$ denotes whether the sensor $\tarm$ was successfully measured or not: $\cv_{t,\tarm}={\bs e}_{\tarm}$ with probability $q_{\tarm}$, and $\bs{0}$ otherwise, and $\Ex[\cv_{t} |\tarm]=\mcm_{\tarm}$ where $\mcm$ is am $m\times m$ diagonal matrix with $\mcm_{ii}=q_i$. Define $S$ as 
\begin{center}$S=\{\x \in [0,1]^m : \sum_{i: k\in A_i} x_i \ge \frac{b}{T}, k=1,\ldots, N\}.$\end{center} 
Note that $S$ is an $m$-dimensional convex set. Then, we wish to achieve $\frac{1}{T} \sum_{t=1}^T \cv_t \in S$. And, from above there exists $\dis^* \in \Delta_m$ such that $\mcm \dis^* \in S$. Our algorithms we will obtain $O(||\bs{1}_m||\sqrt{\frac{m}{T} \log(\frac{mT}{\delta}}))$ regret as per the results metioned above ($d=m$).

Note that if we try to frame this problem in terms of linear covering constraints, we need to make $\cv_t$ to be $N$ dimensional (i.e, $d:=N$), where $N>>m$. On playing arm $i$, $\cv_{t}=\bs{e}_{A_i}$ with probability $p_{i}$. Then, the constraints can be written as linear constraints $ \sum_{t} \cv_{t,j} \ge b, j=1,\ldots, N.$
However, in that case, $d=N$ will result in a $||\bs{1}_N|| \sqrt{\log(N)}$ term in the regret bound, which can be exponentially worse than $||\bs{1}_m|| \sqrt{\log(m)}$.

Similar applications include {\em{crowdsourcing}} a survey or data collection task, where workers are sensors each covering his/her (overlapping) neighborhood, and {\em{network monitoring}}, where monitors located at some nodes of the network are sensors, each covering a subset of the entire network. 

Another similar application is {\em network routing}, where routing requests are arriving online. There is a small number ($d$) of request types, and the hidden parameters to learn are expected usage for each type of request. But, there is a capacity constraint on each of the $N >>d$ edges. Then, modeling it as \BwK~would get an $||\bs{1}_N|| \sqrt{\log(N)}$ term in the regret bound instead of $||\bs{1}_d|| \sqrt{\log(d)}$.

\comment{
\paragraph{Network Routing and combinatorial auctions}
Consider the network routing problem where connections requests between terminal pairs arrive one by one. Serving the $t^{th}$ routing request through a feasible path $P \in {\cal P}$ and incurs a cost which is equal 
the sum of (stochastic) bandwidth used on each of the edges on this path. The goal is to satisfy maximum number of requests while meeting a capacity constraint on each edge.

This can be modeled efficiently as a contextual bandits problem. There is one arm $i$ for every path $P_i$ in the network, a path a subset of edges.  
Arm $i$ is associated with context vector $\cx_i=\bs{e}_{P_i} \in \{0,1\}^n$, and the expected bandwidth consumption on edge $j$ on playing arm $i$ is $\Ex[\cv_{t,j} | \tarm=i] = \cx_i \cdot (\bs{e}_j w_j)$, where $w_j$ is the expected cost of edge $j$. Then, capacity constraints are represented as $\sum_t \cv_{t,j} \le B_j, \forall \text{ edges }j.$
Then, the total regret will be $ O(||\bs{1}_n||\sqrt{n T \log(nT)})$, even though the number of arms $m$ is exponential in number of edges $n$. 

Noe that even though the capacity constraints described above are linear knapsack constraints, a non-contextual version of this problem, as in \BwK, would require learning one parameter for every arm, i.e., for every path in the network, as opposed to just learning one hidden parameter for every edge. And, the resulting regret will be $ \tilde{O}(||\bs{1}_n|| \sqrt{m T})$, or $\tilde{O}(\sqrt{m} (\frac{\OPT}{\sqrt{B}} + \sqrt{B})$, where $m$ is number of paths, which can be exponential in $n$.

Similar observations holds for other combinatorial allocation problems like combinatorial auctions, where every arm is a bundle (subset) of $n$ items. The bidders arrive online, and the goal is to allocate these bundles to maximize social welfare, subject to supply constraint for each item.
}

\paragraph{Pay-per-click advertising}
Pay-per-click advertising is one of the most touted applications for \MAB, where explore-exploit tradeoff is observed in ad click-through rate (CTR) predictions. 
Our \BwCR~ formulation with its contextual extension can considerably enrich the \MAB~ formulations of this problem. 
Contexts are considered central to the effective use of bandit techniques in this problem -- the CTR for an ad impression depends on the (query, ad) combination, and there are millions of these combinations, thus millions of arms. Contextual setting allows a compact representation of these arms as $n$-dimensional feature (context) vectors, and aims at learning the best weight vector that maps features to CTR.  

\BwCR~ allows using the contextual setting along with multiple complex constraints on the decision process over time. In addition to simple budget constraints for every advertiser/campaign, we can efficiently represent budget constraints on family of overlapping subset of those, without blowing up the dimension $d$, as explained in some of our earlier applications. 

The ability to maximize a concave reward function is also very useful for such applications. 
Although in most models of pay-per-click advertising the reward is some simple linear function, the reality is more complex. 
A typical consideration is that advertisers (in dislay advertising) desire a certain mixture of different demographics  such as equal number of men and women, or equal number of clicks from different cities. These are not hard constraints -- the closer to the ideal mixture, the better it is. This is naturally modeled as a concave reward function of the vector of the number of clicks of each type the advertiser recieves. 

Further, we can now admit more nuanced risk-sensitive constraints. This includes convex risk functions on budget expenditure or on distance from the target click or revenue performance.

\section{UCB family of algorithms}
In this section, we present algorithms derived from the UCB family of algorithms \cite{Auer2002} for the multi-armed bandit problems. We demonstrate that simple extensions of UCB algorithm provide near-optimal regret bounds for \BwCR~ and all its extensions introduced earlier. In particular, our UCB algorithm will match the optimal regret bound provided by \citet{BwK} for the special case of \BwK.

We start with some background on the UCB algorithm for classic multi-armed bandit problem.
In the classic multi-armed bandit problem there are $m$ arms and on playing an arm $\tarm$ at time $t$, a reward $\rs_t$ is generated i.i.d. with fixed but unknown mean $\mrvS_{i_t}$. The objective is to choose arms in an online manner in order to minimize regret defined as $\sum_{t=1}^T (\mrvS_{i^*} - \rs_t) $, where $i^* = \arg\max_i \mrvS_i$. 

\UCB~algorithm for multi-armed bandits was introduced in \citet{Auer2002}. The basic idea behind this family of algorithms is to use the observations from the past plays of each arm $i$ at time $t$ to construct estimates ($\UCB_{t,i}$) for the mean reward $\mrvS_i$. 
These estimates are constructed to satisfy the following key properties.
\begin{enumerate}
\item The estimate $\UCB_{t,i}$ for every arm is guaranteed to be larger than its mean reward with high probability, i.e., it is an \emph{Upper Confidence Bound} on the mean reward.
\EQ{0in}{0in}{\UCB_{t,i} \ge \mu_i, \forall i, t}
\item As an arm is played more and more, its estimate should approach the actual mean reward, so that with high probability, the total difference between estimated and actual reward for the \emph{played arms} can be bounded as
\EQ{0in}{0in}{|\sumT_{t=1}^T (\UCB_{t,\tarm}-\rs_t)| \le \tilde{O}(\sqrt{mT}).}
This holds irrespective of how the arm $\tarm$ is chosen.
\end{enumerate}
At time $t$, the UCB algorithm simply plays the best arm according to the current estimates, i.e., the arm with the highest value of $\UCB_{t,i}$.
\EQ{0in}{0in}{\tarm = \arg \max_i \UCB_{t,i}.}
Then, a corollary of the first property above, and the choice of arm made by algorithm, is that with high probability,
\EQ{-0.1in}{0in}{\UCB_{t,\tarm} \ge \mu_{i^*}.}
Using above observations, it is straightforward to bound the regret of this algorithm in time $T$.
\EQ{-0.1in}{0in}{\reg(T) = \sum_{t=1}^T (\mrvS_{i^*} - \rs_t) \le \sum_{t=1}^T (\UCB_{t,\tarm} - \rs_t) \le \tilde{O}(\sqrt{mT}).}


In our UCB based algorithms, we use this same basic idea for algorithm design and regret analysis. 
\subsection{Bandits with concave rewards and convex knapsacks (\BwCR)}
Since, our observation vector cannot be interpreted as cost or reward, we construct both lower and upper confidence bounds, and consider the range of estimates defined by these. More precisely, for every arm $i$ and component $j$, we construct two estimates $\LCB_{t,ji}(\mcm)$ and $\UCB_{t,ji}(\mcm)$ at time $t$, using the past observations. The estimates for each component are constructed in a manner similar to the estimates used in the UCB algorithm for classic MAB, and satisfy the following generalization of the properties mentioned above. 

\begin{enumerate}
\item The mean for every arm $i$ and component $j$ is guaranteed to lie in the range defined by its estimates $\LCB_{t,ji}(\mcm)$ and $\UCB_{t,ji}(\mcm)$ with high probability. That is, 
\LEQ{0in}{-0.1in}{eq:prop1}{\mcm \in \HC_t, \text{ where},}
\LEQ{0in}{0in}{def:HC}{\cHypercube_t:=\{\mcme: \mcmeS_{ji} \in [\LCB_{t,ji}(\mcm), \UCB_{t,ji}(\mcm)],  j=1,\ldots, d, i=1,\ldots, m\}.}
\item Let arm $i$ is played with probability $p_{t,i}$ at time $t$. Then, the total difference between estimated and actual observations for the \emph{played arms} can be bounded as
\LEQ{0in}{0in}{eq:prop2}{||\sumT_{t=1}^T (\mcme_t \dis_t - \cv_t)|| \le \diff(T),}
for any $\{\mcme_t\}_{t=1}^T$ such that $\mcme_t \in \HC_t$. Here, $\diff(T)$ is typically $\tilde{O}(\normOned \sqrt{mT})$. 
\end{enumerate}
A direct generalization of Property (2) from the MAB analysis mentioned before would have been a bound on $||\sumT_{t=1}^T (\mcme_{t,\tarm} - \cv_t)||$. However, since we will choose a distribution $\dis_t$ over arms at time $t$ and sample $\tarm$ from this distribution, the form of bound in \eqref{eq:prop2} is more useful, and a straightforward extension. 
A specialized expression for $\diff(T)$ in terms of problem specific parameters will be obtained in the specific case of \BwK. 
As before, these are purely properties of the constructed estimates, and hold irrespective of how the choice of $\dis_t$ is made by an algorithm.

At time $t$, our UCB algorithm plays the best arm (or, best distribution over arms) according to the best estimates in set $\HC_t$.

\begin{algorithm}[H] 
\caption{UCB Algorithm for \BwCR}
\label{algo:UCB-BwCR}
  \begin{algorithmic}
\FORALL{$t=1, 2,\ldots, T$ \vspace{-0.1in}}
		\STATE \begin{equation}
\label{eq:algoChoice}
\dis_t = \begin{array}{rl}
\arg \displaystyle \max_{\dis\in \Delta_m} & \displaystyle\max_{\mrme \in \HC_t} f(\mrme \dis)  \\
\textrm{s.t.} & \min_{\mcme \in \HC_t} d(\mcme \dis, S)\le 0 
\end{array}
\end{equation}
If no feasible solution is found to the above problem, set $\dis_t$ arbitrarily.  
		\STATE Play arm $i$ with probability $p_{t,i}$.
\ENDFOR
  	\end{algorithmic}
\end{algorithm}

Observe that when $f(\cdot)$ is a monotone non-decreasing function as in the classic MAB problem (where $f(x)=x$), the inner maximizer in objective of \eqref{eq:algoChoice} will be simply $\mrme_t = \UCB_t(\mrm)$, and therefore, for classic MAB problem this algorithm reduces to the UCB algorithm.

Let $\mrme_t, \mcme_t$ denote the inner maximizer and the inner minimizer in the problem \eqref{eq:algoChoice}. Then, a corollary of the first property above (refer to Equation \eqref{eq:prop1}) is that with high probability, \vspace{-0.1in}
\begin{equation}
\label{eq:cor}
f(\mrme_t\dis_t) \ge f(\mrm \dis^*), \ \ \ \mcme_t \dis_t \in S.
\end{equation}
This is because the conditions $\mrm \in \HC_t$ and $\mcm \dis^* \in S$ imply that $(\dis, \mcme, \mrme)=(\dis^*,\mcm, \mrm)$ forms a feasible solution for problem \eqref{eq:algoChoice} at time $t$. 

Using these observations, it is easy to bound the regret of this algorithm in time $T$.  With high probability,
\begin{equation}
\label{eq:regCalc}
\begin{array}{c}
\areg_1(T) \le  f(\mrm \dis^*) - f(\frac{1}{T}\sumT_{t=1}^T \rv_t) \le f(\mrme_t\dis_t) - f(\frac{1}{T}\sumT_{t=1}^T \rv_t) \le  \frac{L}{T} \diff(T),\vspace{0.1in}\\
\areg_2(T) = d(\frac{1}{T}\sum_{t=1}^T \cv_t, S) \le d(\frac{1}{T}\sumT_{t=1}^T \cv_t, \frac{1}{T}\sumT_{t=1}^T \mcme_t \dis_t) \le \frac{1}{T} \diff(T),
\end{array}
\end{equation}
where $\diff(T)=\tilde{O}(\normOned \sqrt{mT})$. Below is a precise statement for the regret bound.
\begin{theorem}
\label{th:UCB-BwCR}
With probability $1-\delta$, the regret of Algorithm \ref{algo:UCB-BwCR} is bounded as
\EQ{0in}{0in}{\areg_1(T) = O(L \normOned \textstyle{\sqrt{\frac{\crad m}{T}} )}, \ \ \areg_2(T) = O(\normOned \textstyle{\sqrt{\frac{\crad m}{T}} )}}
where $\crad = O(\log(\frac{mTd}{\delta}))$, ${\bf 1}_d$ is the $d$ dimensional vector of all $1$'s.
\end{theorem}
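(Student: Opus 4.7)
The plan is to (i) construct explicit $\UCB_{t,ji}$ and $\LCB_{t,ji}$ estimators satisfying Properties~(\ref{eq:prop1}) and (\ref{eq:prop2}) with $\diff(T) = \tilde O(\normOned \sqrt{mT})$, and (ii) deduce the theorem by the chain of inequalities already sketched in (\ref{eq:regCalc}). Concretely, let $N_{t,i}$ count the pulls of arm $i$ before round $t$ and let $\emp{\mcm}_{t,ji}$ be the corresponding empirical mean of the $j$th coordinate of $\cv_s$. Set
\[
\UCB_{t,ji}(\mcm) = \emp{\mcm}_{t,ji} + \rad_{t,i}, \quad \LCB_{t,ji}(\mcm) = \emp{\mcm}_{t,ji} - \rad_{t,i}, \quad \rad_{t,i} = \sqrt{\crad/(N_{t,i}\vee 1)},
\]
with $\crad = \Theta(\log(mTd/\delta))$. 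Hoeffding's inequality plus a union bound over $(t,i,j)$ triples immediately gives Property~(\ref{eq:prop1}) on a clean event $\mathcal{E}_1$ of probability $\ge 1-\delta/2$.

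The core of the work is Property~(\ref{eq:prop2}). I would decompose
\[
\sumT_{t=1}^T (\mcme_t \dis_t - \cv_t) = \sumT_{t=1}^T (\mcme_t - \mcm)\dis_t + \sumT_{t=1}^T (\mcm \dis_t - \cv_t).
\]
The second sum is a bounded martingale difference in $\mathbb{R}^d$, so coordinate-wise Azuma--Hoeffding with a union bound over $d$ coordinates gives a norm bound of $O(\normOned \sqrt{T\crad})$ on an event $\mathcal{E}_2$ of probability $\ge 1-\delta/2$. For the first sum, on $\mathcal{E}_1$ we have $|\mcmeS_{t,ji} - \mcmS_{ji}| \le 2\rad_{t,i}$ since $\mcme_t \in \HC_t$, so coordinate $j$ of the first sum is bounded by $2\sumT_t \sum_i p_{t,i}\,\rad_{t,i}$. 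A further Azuma argument replaces $p_{t,i}$ by $\mathbf{1}[\tarm=i]$ up to $O(\sqrt{T\crad})$ error, after which the telescoping identity $\sumT_t \mathbf{1}[\tarm=i]/\sqrt{N_{t,i}\vee 1} = O(\sqrt{N_{T,i}})$ together with Cauchy--Schwarz over the $m$ arms (using $\sum_i N_{T,i} \le T$) yields $O(\sqrt{mT\crad})$. Multiplying through by $\normOned$ produces $\diff(T) = O(\normOned \sqrt{mT\crad})$ on $\mathcal{E}_1 \cap \mathcal{E}_2$.

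With Properties~(\ref{eq:prop1}) and (\ref{eq:prop2}) secured, the theorem follows as in (\ref{eq:regCalc}): on the clean event $(\dis^*, \mrm, \mcm)$ is feasible for (\ref{eq:algoChoice}) at every round, so $f(\mrme_t\dis_t) \ge f(\mrm\dis^*)$ and $\mcme_t\dis_t \in S$ for all $t$. Concavity of $f$ and Jensen's inequality give $f\bigl(\tfrac{1}{T}\sumT_t \mrme_t\dis_t\bigr) \ge f(\mrm\dis^*)$, so combining with $L$-Lipschitzness and Property~(\ref{eq:prop2}) yields $\areg_1(T) \le L\diff(T)/T$. For $\areg_2$, convexity of $S$ together with $\mcme_t\dis_t \in S$ puts $\tfrac{1}{T}\sumT_t \mcme_t\dis_t$ inside $S$, and the triangle inequality together with Property~(\ref{eq:prop2}) gives $\areg_2(T) \le \diff(T)/T$. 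Substituting the value of $\diff(T)$ produces the stated bounds.

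The delicate part is the bias-term analysis in Property~(\ref{eq:prop2}): the radii $\rad_{t,i}$ and the sampling distributions $\dis_t$ are both measurable with respect to the algorithm's own history, so they cannot be treated as independent of the realized pull $\tarm$. I would handle this by defining a single clean event of total probability $\ge 1-\delta$ that simultaneously controls (a) the concentration of every empirical mean, (b) the deviation between expected and realized pull counts $\sum_{s<t} p_{s,i}$ versus $N_{t,i}$, and (c) the vector-valued martingale in the noise term; on this event the remaining estimates are purely deterministic manipulations of the $N_{t,i}$'s, which should make the Cauchy--Schwarz step rigorous without tangling with the algorithmic randomness.
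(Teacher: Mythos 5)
Your proposal is correct and follows essentially the same route as the paper's proof in Appendix B: confidence intervals per $(j,i)$ pair, a union-bounded clean event for Property (1), and a martingale-plus-telescoping/Cauchy--Schwarz argument giving $\diff(T)=O(\normOned\sqrt{\crad mT})$ for Property (2), followed by the chain in \eqref{eq:regCalc}. The only differences are cosmetic: you use fixed Hoeffding radii $\sqrt{\crad/N_{t,i}}$ where the paper uses the empirical-mean-dependent radius $\rad(\emp{\mcmS}_{t,ji},k_{t,i}+1)$ (needed only for the sharper \BwK~bound of Theorem \ref{th:UCB-BwK}, not here), and you split the error into two terms via $\dis_t$ rather than the paper's three terms via the realized arm $\tarm$.
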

The detailed proof with exact expressions for $\UCB_t(\mcm), \LCB_t(\mcm)$ is in Appendix \ref{app:UCB-BwCR}.
\subsubsection{Extensions}
\label{rem:shrunkenSet}
\label{rem:contextual}
\paragraph{Linear Contextual Bandits.} It is straightforward to extend Algorithm \ref{algo:UCB-BwCR} to linear contextual bandits, using existing work on UCB family of algorithms for this problem. 
Using techniques in \citet{oful, Auer2003}, instead of the hypercube $\HC_t$ at time $t$, one can obtain an ellipsoid such that the weight vector $\wt_j$ is guaranteed to lie in this elliposid, for every component $j$.
Then, simply substituting $\HC_t$ with these ellipsoids in Algorithm \ref{algo:UCB-BwCR} will provide an algorithm for the linear contextual version of \BwCR~with regret bounds
\EQ{0in}{0in}{\areg_1(T) = O(Ln\normOned \textstyle{\sqrt{\frac{\crad}{T}} )}, \ \ \areg_2(T) = O(n\normOned \textstyle{\sqrt{\frac{\crad}{T}} )},}
with probability $1-\delta$. Here  $\crad = O(\log(\frac{mTd}{\delta}))$. Further details are in Appendix \ref{app:linUCB}.


\paragraph{Hard constraints}
In this case, a shrunket set $S^{\epsilon}$ can be used instead of $S$ in Algorithm \ref{algo:UCB-BwCR} (refer to Section \ref{prelim:BwCR} for definition of $S^{\epsilon}$), with $\epsilon$ set to be an upper bound on $\areg_2(T)$. For example, $\epsilon$ can be set as  $\normOned\sqrt{\frac{\crad m}{T}}$ using results in Theorem \ref{th:UCB-BwCR}. Then, at the end of time horizon, with probability $1-\delta$, the algorithm will satisfy,
\EQ{0in}{0in}{d(\frac{1}{T}\sum_t \cv_t, S^{\epsilon}) \le \epsilon \Rightarrow \frac{1}{T}\sum_t \cv_t \in S,~ \text{and~} \areg_1(T) = O(L \normOned \textstyle{\sqrt{\frac{\crad m}{T}} + K \epsilon)}.}





\subsection{Bandit with knapsacks (\BwK)}
This is a special case of \BwCR~ with $\rv_t=\{\rs_t; \Bcv_t\}$, $f(\x)=x_1$, and $S=\{\x:\x_{-1} \le \frac{B}{T}{\bf 1}\}$. 
Then, the problem \eqref{eq:algoChoice} in Algorithm \ref{algo:UCB-BwCR} reduces to the following $\LP$.
\begin{equation}
\begin{array}{lcl}
\max_{\dis \in \Delta_m} & \UCB_t(\mrv) \cdot \dis & \\
{\rm s.t.} & \LCB_t(\Bmcm) \dis \preceq \frac{B}{T}{\bf 1}, &
\end{array}
\end{equation}
where $\UCB_t(\mrv) \in [0,1]^m$ denotes the UCB estimate constructed for $\mrv$ and $\LCB_t(\Bmcm) \in [0,1]^{d\times m}$ denotes the LCB estimate for $\Bmcm$.
Above is same as $\LP(\UCB_t(\mrv), \LCB_t(\Bmcm))$ (refer to Equation \eqref{def:LP}). 

Since this problem requires hard constraints on resource consumption, we would like to tradeoff the regret in constraint satisfaction for some more regret in reward. As discussed in Section \ref{rem:shrunkenSet}, one way to achieve this is to use a shrunken constraint set. For any $\mrv, \Bmcm$, we define $\LP(\mrv, \Bmcm, \epsilon)$ by tightneing the constraints in $\LP(\mrv, \Bmcm)$ by a $1-\epsilon$ factor, i.e. replacing $B$ by $(1-\epsilon)B$. Then, at time $t$, the algorithms simply solves $\LP(\UCB_t(\mrv), \LCB_t(\Bmcm), \epsilon)$ instead of $\LP(\UCB_t(\mrv), \LCB_t(\Bmcm))$.


\begin{algorithm}[H] 
\label{algo:UCB-BwK}
\caption{UCB algorithm for \BwK}
  \begin{algorithmic}
\FORALL{$t=1, 2,\ldots, T$} 
		\STATE Exit if any resource consumption is more than $B$.
		\STATE Solve $\LP(\UCB_t(\mrv), \LCB_t(\Bmcm), \epsilon)$, and let $\dis_t$ denote the solution for this linear program. 
		\STATE Play arm $i$ with probability $\disS_{t,i}$.
\ENDFOR
  	\end{algorithmic}
\end{algorithm}

\begin{theorem}
\label{th:UCB-BwK}
For the \BwK~problem, with probability $1-\delta$, the regret of Algorithm \ref{algo:UCB-BwK} with $\epsilon=\epsilonVal$, $\crad=\cradVal$,
is bounded as
\begin{center} $\reg(T) = O\left(\sqrt{\log(\frac{mdT}{\delta})} (\OPT \sqrt{\frac{m}{B}} + \sqrt{m \OPT} + m \sqrt{\log(\frac{mTd}{\delta})})\right).$\end{center}
\end{theorem}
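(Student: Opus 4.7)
My plan is to prove Theorem~\ref{th:UCB-BwK} by refining the generic \BwCR~analysis of Theorem~\ref{th:UCB-BwCR} so as to pay only $\tilde O(\sqrt{m\,\OPT})$ and $\tilde O(\sqrt{mB})$ in the concentration steps, rather than the worst-case $\tilde O(\sqrt{mT})$ bound that Theorem~\ref{th:UCB-BwCR} uses. Three ingredients drive the improvement: Bernstein-type confidence intervals, \LP~monotonicity in the mean parameters, and a shrinkage $\epsilon$ calibrated so that the algorithm almost surely does not abort before time $T$.

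First I would instantiate $\HC_t$ using Bernstein radii of the order $O(\sqrt{\gamma \hat\mu_{t,i}/n_{t,i}} + \gamma/n_{t,i})$, where $n_{t,i}$ is the number of pulls of arm $i$ before round $t$ and $\gamma=\cradVal$. A standard union bound over arms, components, and rounds yields, with probability $1-\delta$, the invariant $\mrv\in[\LCB_t(\mrv),\UCB_t(\mrv)]$ and $\Bmcm_{ji}\in[\LCB_t(\Bmcm)_{ji},\UCB_t(\Bmcm)_{ji}]$ uniformly in $t,i,j$; call this event $\mathcal E$ and condition on it throughout. Combining the benchmark inequality $\LP(\mrv,\Bmcm)\ge\OPT/T$ from Section~\ref{prelim:BwK}, the usual monotonicity of the LP value in $\UCB_t(\mrv)$ and $\LCB_t(\Bmcm)$, and the $(1-\epsilon)$ shrinkage of the constraints, the distribution $\dis_t$ chosen by the algorithm satisfies, for every $t$, the two invariants analogous to~\eqref{eq:cor}: $\UCB_t(\mrv)\cdot\dis_t\ge(1-\epsilon)\OPT/T$ and $\LCB_t(\Bmcm)\dis_t\preceq(1-\epsilon)(B/T)\mathbf{1}$.

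The technical heart of the argument is to replace the generic bound~\eqref{eq:prop2} by two Bernstein/Freedman-type inequalities: $\sum_{t=1}^T(\UCB_t(\mrv)\cdot\dis_t-\rs_t)=O(\sqrt{m\gamma\,\OPT}+m\gamma)$ and, for each resource $j$, $\sum_{t=1}^T(\BcvS_{t,j}-\LCB_t(\Bmcm)_j\cdot\dis_t)=O(\sqrt{m\gamma B}+m\gamma)$. Both follow the same template. The martingale $\sum_t(\UCB_t(\mrv)-\mrv)\cdot\dis_t$ is upper-bounded by twice the aggregated confidence widths $\sum_t\sum_i p_{t,i}\bigl(\sqrt{\gamma\mu_i/n_{t,i}}+\gamma/n_{t,i}\bigr)$, which after a telescoping in $n_{t,i}$ and a Cauchy--Schwarz aggregation across arms produces $O(\sqrt{m\gamma\sum_t\mrv\cdot\dis_t}+m\gamma)\le O(\sqrt{m\gamma\,\OPT}+m\gamma)$; a Freedman step controls the remaining fluctuation $\sum_t(\mrv\cdot\dis_t-\rs_t)$ in the same way, using that the conditional variance is bounded by $\sum_t\mrv\cdot\dis_t$. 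The consumption inequality is analogous, with the a priori bound $\sum_t\Bmcm_j\dis_t\le B$ in place of $\OPT$.

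Finally I would combine everything. The consumption inequality together with the Step-2 invariant shows that, on $\mathcal E$, every resource's cumulative consumption is at most $(1-\epsilon)B+O(\sqrt{mB\gamma}+m\gamma)$, which for $\epsilon=\epsilonVal$ stays below $B$, so the algorithm plays all $T$ rounds and no reward is lost to early abortion. The reward inequality then gives $\sum_t\rs_t\ge(1-\epsilon)\OPT-O(\sqrt{m\gamma\,\OPT}+m\gamma)$, and substituting the chosen $\epsilon$ converts the $\epsilon\,\OPT$ slack into the $\OPT\sqrt{m/B}$ term of the claimed bound. The main obstacle is the Bernstein/Freedman step for adaptive data: one must control a martingale whose conditional variance depends on both the random distributions $\dis_t$ and the random counts $n_{t,i}$, and tie the accumulated variance to $\OPT$ (resp.\ $B$) rather than to $T$. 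The $\log T$ factor appearing inside $\epsilon$ is the price of the time-uniform version of this Bernstein argument, where the additive $m\gamma/B$ term is amplified to $(\log T)\gamma m/B$; everything else is routine given the framework already used to prove Theorem~\ref{th:UCB-BwCR}.
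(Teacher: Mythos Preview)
Your proposal is correct and follows essentially the same route as the paper: Bernstein-type confidence radii giving Property~(1), \LP~monotonicity yielding the $(1-\epsilon)\OPT/T$ and $(1-\epsilon)B/T$ invariants, problem-dependent concentration producing $O(\sqrt{\crad m\,\OPT})$ and $O(\sqrt{\crad m B})$ in place of $O(\sqrt{\crad m T})$, and the $\epsilon$-shrinkage calibrated so that no resource is exhausted before time $T$. The one place you gloss over a detail is the claim that $\sum_t \mrv\cdot\dis_t \le O(\OPT)$ (and analogously $\sum_t \Bmcm_j\dis_t \le O(B)$): these are not a priori bounds but follow from a short self-bounding step---the paper first expresses the deviation in terms of $\sum_t \rs_t$ (resp.\ $\sum_t \BmcmS_{j\tarm}$) and then solves a quadratic inequality of the form $A \le X + c\sqrt{A}$ to close the loop.
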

\begin{proof}
We use the same estimates for each component as in the previous section, to construct $\UCB_t(\mrv)$ and $\LCB_t(\Bmcm)$. We show that these UCB and LCB estimates satisfy the following more specialized versions of the properties given by Equation \eqref{eq:prop1} and \eqref{eq:prop2}. With probability $1-(mTd) e^{-\Omega(\crad)}$,
\begin{eqnarray}
\text{(1)~~} & \UCB_t(\mrv) \succeq \mrv, \LCB_t(\Bmcm) \preceq \Bmcm. & \label{eq:prop1-BwK} \\
& & \nonumber\\
\text{(2)~~} & \begin{array}{rcl}
\sum_{t=1}^T (\UCB_{t}(\mrv) \cdot \dis_t- \rs_t)| & \le  & O(\sqrt{\crad m\left(\sumT_t \rs_t \right)} + \crad m), \vspace{0.1in}\\
|\sum_{t=1}^T (\LCB_{t}(\Bmcm) \dis_t - \Bcv_t)| & \preceq & \epsilon B {\bf 1}. 
\end{array}& \label{eq:prop2-BwK}
\end{eqnarray}
Proof of the second property is similar to Lemma 7.4 of \cite{BwK}, and is provided in Appendix \ref{app:UCB-BwK} for completeness.

Then, similar to \eqref{eq:cor}, following is a corollary of the first property and the choice made by the algorithm at time step $t$.
\begin{equation}
\label{eq:algoCBexp}
\begin{array}{rcl}
\sum_{t=1}^T \UCB_t(\mrv) \cdot \dis_t & = & \LP(\UCB_t(\mrv), \LCB_t(\Bmcm), \epsilon) \ge \LP(\mrv, \Bmcm, \epsilon) \ge (1-\epsilon) \OPT, \vspace{0.1in}\\
\sum_{t=1}^T \LCB_{t}(\Bmcm) \dis_t & \preceq & (1-\epsilon)B {\bf 1}.
\end{array}
\end{equation}

Then, using the second property above and \eqref{eq:algoCBexp}, $\sum_{t=1}^T \Bcv_t \le B{\bf 1}$, and the algorithm will not terminate before time $T$. This means that the total reward for the algorithm will be given by $\ALGO = \sum_{t=1}^{T} \rs_t$.
Also, using the second property,
\begin{center}$\ALGO =  \sum_{t=1}^{T} \rs_t \ge (1-\epsilon)\OPT - O(\sqrt{\crad m \ \ALGO}) - O(\crad m )$\end{center}

Therefore, either $\ALGO \ge \OPT$ or
\begin{center}$\ALGO \ge (1-\epsilon)\OPT	 - O(\sqrt{\crad m \OPT})	 - O(\crad m ).$\end{center}
Now, assuming $m \crad \le O(B)$, \footnote{This assumption was also made in \cite{BwK}} $\epsilon \OPT = O(\OPT\sqrt{\frac{m\crad}{B}})$.
Therefore,
\begin{center}$\reg(T) = \OPT-\ALGO \le O\left(\OPT \sqrt{\frac{\crad m}{B}} + \sqrt{\crad m \OPT}	+ \crad m \right).$\end{center}
Then, substituting $\crad=\Theta(\log(\frac{mTd}{\delta}))$, we get the desired result.
\end{proof}

\subsection{Implementability}
Next, we investigate whether our UCB algorithm is efficiently implementable. For the special case of \BwK~ problem, this reduces to Algorithm \ref{algo:UCB-BwK} which only requires solving an LP at every step. However, the poynomial-time implementability of Algorithm \ref{algo:UCB-BwCR} is not so obvious. Below, we prove that the problem \eqref{eq:algoChoice} required to be solved in every time step $t$ is in fact a convex optimization problem, with separating hyperplanes computable in polynomial time. Thus, this problem can be solved by ellipsoid method, and every step of  Algorithm \ref{algo:UCB-BwK} can be implemented in polynomial time. 

\begin{lemma}\label{lem:ucbImplementability} 
The functions $\psi(\dis):=\max_{\mrme \in \HC_t} f(\mrme \dis)$, and $g(\dis) = \min_{\mcme \in \HC_t} d(\mcme\dis, S)$ are concave and convex functions respectively, and the subgradients for these functions at any given point can be computed in polynomial time using ellipsoid method for convex optimization.
\end{lemma}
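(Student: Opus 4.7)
The plan is to first show that $\psi$ is concave and $g$ is convex by reducing the inner optimization over the hypercube $\HC_t$ to an optimization over a low-dimensional axis-aligned box, and then describe how to extract the subgradients in polynomial time via ellipsoid.

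\emph{Step 1 (reduction to a box in $\mathbb{R}^d$).} Since $\HC_t$ is the axis-aligned hypercube $\prod_{j,i}[\LCB_{t,ji}(\mcm),\UCB_{t,ji}(\mcm)]$ and $\dis\in\Delta_m$ has $p_i\ge 0$, the $j$-th coordinate of $\mrme\dis$ independently ranges over the interval $[l_j(\dis),u_j(\dis)]:=[\sumT_i\LCB_{t,ji}(\mcm)\,p_i,\;\sumT_i\UCB_{t,ji}(\mcm)\,p_i]$. Hence the image $\{\mrme\dis:\mrme\in\HC_t\}$ equals the axis-aligned box $B(\dis):=\prod_j[l_j(\dis),u_j(\dis)]$, whose endpoints depend \emph{linearly} on $\dis$, and
\[
\psi(\dis)=\max_{\z\in B(\dis)} f(\z),\qquad g(\dis)=\min_{\z\in B(\dis)} d(\z,S).
\]

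\emph{Step 2 (concavity / convexity).} For $\dis=\lambda\dis_1+(1-\lambda)\dis_2$, linearity of the endpoints yields the Minkowski inclusion $\lambda B(\dis_1)+(1-\lambda)B(\dis_2)\subseteq B(\dis)$. If $\z_k^*$ is an inner optimizer at $\dis_k$, then $\lambda\z_1^*+(1-\lambda)\z_2^*\in B(\dis)$, so concavity of $f$ gives $\psi(\dis)\ge \lambda\psi(\dis_1)+(1-\lambda)\psi(\dis_2)$, and convexity of $d(\cdot,S)$ gives the analogous convexity for $g$.

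\emph{Step 3 (polynomial-time subgradients).} For fixed $\dis$, each inner problem is convex on the box $B(\dis)$: separation over $B(\dis)$ is trivial, Assumption~\ref{assum:Lcont} supplies a supergradient oracle for $f$ with dual-norm bound $L$, and a separation oracle for the convex set $S$ yields $\proj_S(\cdot)$ and hence a subgradient of $d(\cdot,S)$, so ellipsoid solves each inner problem in polynomial time. To lift the inner optimum to a subgradient of $\psi$ in $\dis$, I would use the Lagrangian representation
\[
\psi(\dis)=\min_{\bs{\alpha},\bs{\beta}\succeq \bs{0}}\Bigl[\,f^*(\bs{\beta}-\bs{\alpha})+\bs{\alpha}\cdot(\UCB_t(\mcm)\dis)-\bs{\beta}\cdot(\LCB_t(\mcm)\dis)\,\Bigr],
\]
which expresses $\psi$ as a pointwise minimum of affine functions of $\dis$; any dual optimum $(\bs{\alpha}^*,\bs{\beta}^*)$ delivers the supergradient $\UCB_t(\mcm)^{\top}\bs{\alpha}^*-\LCB_t(\mcm)^{\top}\bs{\beta}^*\in\partial\psi(\dis)$. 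A parallel dual, using the Fenchel formula $d(\z,S)=\max_{||\thetaV||_*\le 1}[\thetaV\cdot\z-h_S(\thetaV)]$ from Section~\ref{sec:Fenchel} in place of $f^*$, handles $g$.

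The main obstacle I foresee is guaranteeing that the ellipsoid run on the inner problem returns certified KKT multipliers for the box constraints together with the primal optimum. This can be handled by a primal-dual variant of ellipsoid, or sidestepped entirely by the sensitivity view of Step~2: perturbing $\dis\to\dis'$ shifts the box corners by $\UCB_t(\mcm)(\dis'-\dis)$ and $\LCB_t(\mcm)(\dis'-\dis)$, and matching a normal-cone-aligned supergradient of $f$ at the inner optimum against those shifts recovers the same supergradient formula without any explicit dual solve.
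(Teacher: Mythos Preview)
Your argument is correct and in places more elementary than the paper's, but the route to the subgradient is slightly different and your ``obstacle'' is self-inflicted.

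For concavity/convexity (Steps~1--2), your box reparametrization and Minkowski-sum argument are cleaner than what the paper does: the paper instead writes $f(\x)=\min_{\|\thetaV\|_*\le L} f^*(\thetaV)-\thetaV\cdot\x$ via Fenchel duality, swaps $\max_{\mrme\in\HC_t}$ and $\min_{\thetaV}$ by the minimax theorem, and observes that the inner minimum over $\HC_t$ is attained at the vertex $\vertex{\thetaV}$ regardless of $\dis$, giving $\psi(\dis)=\min_{\|\thetaV\|_*\le L}\bigl[f^*(\thetaV)-\thetaV\cdot(\vertex{\thetaV}\dis)\bigr]$, a pointwise minimum of affine functions of $\dis$. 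Your Lagrangian in Step~3 is the same object after the substitution $\thetaV=\bs{\beta}-\bs{\alpha}$ (at the optimum one of $\alpha_j,\beta_j$ vanishes, which is exactly the vertex rule), so the two representations coincide.

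The difference that matters is how you propose to \emph{compute} the supergradient. You first solve the inner primal over $B(\dis)$ and then try to recover certified KKT multipliers, which you rightly flag as awkward when $f$ is nondifferentiable (there may be many supergradients at the inner optimum and only some lie in the required normal cone). The paper avoids this entirely by running ellipsoid directly on the \emph{dual} variable $\thetaV$: the map $\thetaV\mapsto f^*(\thetaV)-\thetaV\cdot(\vertex{\thetaV}\dis)$ is convex with an easy subgradient $\partial f^*(\thetaV)-\vertex{\thetaV}\dis$, so ellipsoid returns the minimizing $\thetaV'$, and $-\thetaV'^{\,\top}\vertex{\thetaV'}$ is the desired supergradient of $\psi$. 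You can do exactly the same with your Lagrangian: run ellipsoid over $(\bs{\alpha},\bs{\beta})\succeq\bs{0}$ rather than over $\z$, and the returned $(\bs{\alpha}^*,\bs{\beta}^*)$ already gives $\UCB_t(\mcm)^{\top}\bs{\alpha}^*-\LCB_t(\mcm)^{\top}\bs{\beta}^*$. No multiplier extraction or sensitivity workaround is needed.
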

The proof of above lemma is provided in Appendix \ref{app:UCB-BwCR}.

 
\section{Computationally Efficient Algorithms for \BwC~ and \BwR} 
\label{sec:efficient}
In the UCB algorithm for \BwCR, at every time step $t$, we need to solve the optimization problem (\ref{eq:algoChoice}). 
Even though this can be done in polynomial time (Lemma \ref{lem:ucbImplementability}), this is an expensive step. 
It requires solving a convex optimization problem in $\dis$ (possibly using ellipsoid method), for which computing the separating hyperplane at any point itself requires solving a convex optimization problem (again, possibly using ellipsoid method).  For practical reasons, it is desirable to have a faster algorithm. In this section, we present alternate algorithms that are very efficient computationally at the expense of a slight increase in regret. The regret bounds remain the same in the $O(\cdot)$ notation and the 
increase is only in the constants. We present two such algorithms, a ``primal'' algorithm based on the Frank-Wolfe 
algorithm \citep{frank-wolfe} and a ``dual'' algorithm based on the reduction of the Blackwell approachability problem to online convex optimization (\OCO)~ in \citet{blackwell2011}. In this section, for simplicity of illustration, we consider only the \BwC~ and \BwR~ problems, i.e., the problem with only constraint set $S$, and the problem with only the objective function $f$, respectively.
In Section \ref{sec:eff-BwCR} we show that one could use any combination of these algorithms, or the UCB algorithm, for each of \BwC~and \BwR~to get an algorithm for \BwCR. 

The basic idea is to replace the convex optimization problem with its ``linearization'', which 
turns out to be a problem of optimizing a linear function over the unit simplex, and hence very easy to solve. 
For the \BwC~ problem, the convex optimization problem (\ref{eq:algoChoice}) specializes to finding a 
$\dis_t$ such that  $\mcme\dis_t \in S$ for some $\mcme \in \cHypercube_t$.  
In our ``linearized"' version, instead of this, we will only need to find a $\dis_t$ such that $\mcme\dis_t$ is in a halfspace containing the set $S$. 
A half space that contains $S$ and is tangential to $S$ is given by a vector $\thetaV$; 
such a halfspace is  $H_S(\thetaV) := \{ \x: \thetaV \cdot \x \leq h_S(\thetaV)\}$, 
where $h_S(\thetaV):=\max_{\s\in S} \thetaV\cdot \s$. 
Now given a $\thetaV_t$ in time step $t$, a point in $H_S(\thetaV_t)$ can be found by 
simply minimizing $\thetaV_t \cdot \x$, which is a linear function. This is exactly what the 
algorithm does, at each time step $t$, it picks a vector $\thetaV_t$ and sets 
\begin{equation} 
\label{eq:ptaslinearopt} 
(\dis_t, \mcme_t)=\arg \min_{\dis\in \Delta_m} \min_{\mcme \in H_t} \thetaV_t \cdot (\mcme\dis). 
\end{equation} 
The inner minimization is actually trivial and the optimal solution is at a vertex
of $\HC_t$, independent of the value of $\dis$, i.e., $\mcme_t = \vertex{\thetaV_t}$, where
\begin{equation}
\label{eq:MinEst}
\vertex{\thetaV}_{ji} :=\left\{\begin{array}{ll}
\UCB_{t,ji}(\mrm), & \theta_j \le 0,\\
\LCB_{t,ji}(\mrm), & \theta_j > 0 
\end{array}
\right.,
\end{equation}
for $j=1,\ldots, d, i=1,\ldots, m$.

With this observation, the outer minimization is also quite simple,  since it  optimizes a linear function over the unit  simplex 
and the optimal solution occurs at one of the vertices. It is solved by setting $\dis_t = {\bf e}_{i_t}$, where 
\begin{equation}
\label{eq:ptsimple}
\tarm=\arg \min_{i\in \{1,\ldots, m\}} \thetaV_t \cdot \mcme_{t,i}.
\end{equation}
Hence given $\thetaV_t$, the procedure for picking the arm $\tarm$ is quite simple. 

A generalization of this idea is used for \BwR: instead of optimizing $f$, we optimize a linear function that is tangential to $f$. 
A linear function that is tangential to $f$ at a point $\y$ (and is an upper bound on $f$ since $f$ is concave) is 
\[ l_f(\x;\y) := f(\y) + \nabla f(\y) \cdot (\x - \y) \geq f(\x) ~\forall \x,\y. \] 
Then, instead of maximizing $f(\mcme \dis)$ as in \eqref{eq:algoChoice}, we maximize $l_f(\mcme \dis;\y_t)$ over $\mcme \in \HC_t$ and $\dis\in \Delta_m$, for some $\y_t$. The latter is equivalent to minimizing $\x \cdot \thetaV_t$ where $\thetaV_t = -\nabla f(\y_t)$, therefore 
$\dis_t$ is still set as per (\ref{eq:ptaslinearopt}) (which reduces to the simple rule in \eqref{eq:ptsimple}). 

We introduce some notation here, let $\x_t:=\mcme_t\dis_t$,   $\optxv: = \mcm\optdis $, 
 $\avgt \x := \tfrac 1 T \sum_{s=1}^t\x_{s}$ and $\avgt \rv := \tfrac 1 T \sum_{s=1}^t\rv_{s}$.

The regret bound for the UCB algorithm followed rather straight-forwardly from the two properties (\ref{eq:prop1}) and (\ref{eq:prop2}), but the regret bounds for these algorithms will not be as easy.
For one, we no longer have (\ref{eq:cor}), instead we have the corresponding relations for $l_f$ and $H_S$ respectively: 
\begin{equation}
\label{eq:cor2}
\begin{array}{c}
l_f(\x_t;\y_t) \ge l_f(\optxv;\y_t) \geq f(\optxv) ,  \\
\x_t\in H_S(\thetaV_t).
\end{array}
\end{equation}
Since we don't have that $f(\x_t) \geq f(\optxv) $ (or $\x_t \in S$), the main task is to bound 
$f(\optxv)  - f(\avgT \x) $ (and $d(\avgT \x, S)$), and these will be extra terms in the regret bound. 
In particular, (\ref{eq:regCalc}) is replaced by 
\begin{equation}
\label{eq:regCalc2}
\begin{array}{rcl}
\areg_1(T) &&\le  f(\optxv) - f(\avgT  \rv) 
	\le f(\optxv)  - f(\avgT \x)+f(\avgT \x) - f(\avgT  \rv) \\
	&&\le f(\optxv)  - f(\avgT \x)+ \frac{L}{T} \diff(T).\vspace{0.1in}\\
\areg_2(T) &&= d(\avgT \rv,  S) \le d(\avgT \rv, \avgT \x) + d(\avgT \x, S)  \le \frac{1}{T} \diff(T)+ d(\avgT \x, S).
\end{array}
\end{equation}

The bounds on $f(\x^*) - f(\avgT{\x})$ and $d(\avgT{\x}, S)$ will depend on the choice of $\thetaV_t$s.
Each of the two algorithms we present provides a specific method for choosing $\thetaV_t$s to achieve desired regret bounds. 

\subsection{The dual algorithm}
\label{sec:dual}
This algorithm is inspired by the reduction of the Blackwell approachability problem to online convex optimization (\OCO) in \citet{blackwell2011}. It is also related to the fast algorithms to solve covering/packing LPs using multiplicative weight update \citep{Devanur2011} and the algorithm of \citet{BwK}. In fact, we give a {\em reduction}  to  \OCO; any algorithm for \OCO~can then be used.

In  \OCO, the algorithm has to pick a vector, say $\thetaV_t$ in each time step $t$. 
(The domain of $\thetaV_t$ is such that $||\thetaV_t||_*\le L$ for our purpose here, where $L$ is the Lipschitz constant of $f$, and $L=1$ for distance function.) 
Once $\thetaV_t$ is picked the algorithm observes a convex   {\em loss} function, $g_t$, and the process repeats. 
The objective is to minimize regret defined as
\EQ{0in}{0in}{\regOCO(T) :=  \sum_{t=1}^T g_t(\thetaV_t) - \min_{||\thetaV||_*\le L} \sum_{t=1}^T g_t(\thetaV).}
Recall from our discussion earlier, in each step $t$, the algorithm sets $\dis_t$ as per (\ref{eq:ptaslinearopt}) for some $\thetaV_t$. The choice of $\thetaV_t$ is via a reduction to \OCO: we define a convex  function $g_{t-1}$ based on 
the history upto time $t-1$ which is then fed as input to the \OCO~ algorithm, whose output 
$\thetaV_t$ is used in picking $\dis_t$. 
We first define $g_t$ for the \BwR~ problem; the $g_t$ for \BwC~ is obtained as a special case with $f(\x) = -d(\x,S)$. 
Define 
\EQ{0in}{0in}{g_t (\thetaV):= f^*(\thetaV) -\thetaV\cdot \x_t,}
where $f^*$ is the Fenchel conjugate of $f$,(see Section \ref{sec:Fenchel} for the definition), and $\x_t=\mcme_t\dis_t$. 

%

%

\begin{algorithm}[H] 
\caption{Fenchel dual based algorithm for \BwR}
\label{algo:Dual}
  \begin{algorithmic}
	\STATE Inititalize $\thetaV_{1}$. 
\FORALL{$t=1, 2,\ldots, T$} 
		\STATE Set $(\dis_t, {\mcme_t}) =\arg \min_{\dis\in \Delta_m, \mcme \in \cHypercube_t}  \thetaV_t \cdot (\mcme\dis)$.
		\STATE Play arm $i$ with probability $p_{t,i}$.
		\STATE Choose $\thetaV_{t+1}$ by doing an \OCO~update for the convex function 
		$g_t(\thetaV) =f^*(\thetaV) - \thetaV\cdot (\mcme_t \dis_t) .$
\ENDFOR
  	\end{algorithmic}
\end{algorithm}

The following geometric intuition for the Fenchel conjugate is useful in the analysis: 
if $\y_t = \arg\max_{\y} \{ \y \cdot \thetaV_t + f(\y)\} $ 
then $-\thetaV_t \in \nabla f(\y_t)$ and 
$f^*(\thetaV_t) = \y_t \cdot \thetaV_t + f(\y_t) = l_f({\mathbf 0}; \y_t)$, i.e., $f^*(\thetaV_t)$ is the 
	$y$-intercept of $l_f(\x;\y_t)$. We can therefore rewrite $l_f( {\x} ; \y_t)$ in terms of $f^*$ as follows
\EQ{0in}{0in}{ l_f(\x;\y_t) = f^*(\thetaV_t)  - \thetaV_t\cdot \x . }
With this and (\ref{eq:cor2}), we have 
\EQ{0in}{0in}{g_t(\thetaV_t) = f^*(\thetaV_t) - \thetaV_t\cdot \x_t = l_f(\x_t; \y_t) \geq  f(\optxv).}
The above inequality states that 
the optimum of \BwR~ is bounded above by what the algorithm gets for \OCO. 
We next show that the optimum value for \OCO~ is equal to what the algorithm of \BwR~ gets, so there is a flip.
This will produce bound on $f(\optxv) - f(\avgT \x)$ in terms of $\regOCO(T)$. 
 
Note that for a fixed $\thetaV$, $g_t$'s differ only in the linear term, so the average of $g_t$'s for all $t$
is equal to  $f^*(\thetaV) - \thetaV \cdot \avgT \x$.
Then, minimizing this over all $\thetaV$ gives $f(\avgT \x)$, by  Lemma \ref{lem:FenchelDuality}. 
\EQ{0in}{0in}{ 
\min_{||\thetaV||_*\le L} \frac{1}{T} \sum_t g_t(\thetaV) 
= \min_{||\thetaV||_*\le L}f^*(\thetaV)-  \thetaV\cdot\avgT \x = f(\avgT \x).
}
These two observations together give
\EQ{0in}{0in}{
\begin{array}{rcl}
f(\optxv) - f(\avgT \x) & \leq  & \frac{1}{T}\sum_t g_t(\thetaV_t) -\min_{||\thetaV||_*\le L}  \frac{1}{T}\sum_tg_t(\thetaV) 
=  \frac{1}{T} \regOCO(T).
\end{array}
}
\paragraph{Algorithm for \BwC} 
The algorithm for \BwC~ is obtained by letting $f(\x) = - d(\x, S)$. Note that for this function $L=1$. 
Also, it can be shown that $f^*(\thetaV) = h_S(\thetaV)$, therefore 
$g_t(\thetaV):=h_S(\thetaV) - \thetaV\cdot \x_t.$ And, using the same calculations as in above, we will get 
\EQ{0in}{0in}{d(\avgT{\x}, S) \le \frac{1}{T} \regOCO(T).}
This and (\ref{eq:regCalc2}) imply the following theorem. 
\begin{theorem} 
\label{th:dual}
With probability $1-\delta$, the regret of Algorithm \ref{algo:Dual} is bounded as
\EQ{0in}{0in}{\areg_1(T) = O(L \normOned \sqrt{\frac{\crad m}{T}} + \frac{\regOCO(T)}{T}) \text{ for \BwR, and}}
\EQ{0in}{0in}{\areg_2(T) = O(\normOned \sqrt{\frac{\crad m}{T}}  + \frac{\regOCO(T)}{T}) \text{ for \BwC},}
when used with $f(\x)=-d(\x,S)$. Here $\crad=\cradVal$, and
$\regOCO(T)$ is the regret for the \OCO~method used. 

\end{theorem}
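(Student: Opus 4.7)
The plan is to plug the two ingredients that the text has already set up into the decomposition \eqref{eq:regCalc2}: an estimation-error bound coming from Property \eqref{eq:prop2}, and a linearization-error bound coming from the reduction to \OCO. The whole proof then amounts to verifying these two bounds rigorously and combining them.

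\emph{Step 1 (estimation error).} First I would invoke Property \eqref{eq:prop2}, which holds with probability at least $1-\delta$ for the choice $\crad = \cradVal$, to conclude that $\tfrac{1}{T}\,\diff(T) = O\!\bigl(\normOned\sqrt{\crad m/T}\bigr)$. By \eqref{eq:regCalc2} this shows $\areg_1(T) \le f(\optxv) - f(\avgT\x) + O(L\normOned\sqrt{\crad m/T})$ and $\areg_2(T) \le d(\avgT\x, S) + O(\normOned\sqrt{\crad m/T})$, so it remains to bound the two ``linearization'' terms by $\regOCO(T)/T$.

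\emph{Step 2 (upper sandwich via Fenchel/OCO).} For the \BwR~case, I would establish $g_t(\thetaV_t) \ge f(\optxv)$ for every $t$ (on the same high-probability event). The geometric identity noted after Algorithm \ref{algo:Dual} gives $g_t(\thetaV) = f^*(\thetaV) - \thetaV\cdot \x_t = l_f(\x_t; \y_t)$ when evaluated at $\thetaV = \thetaV_t$. The key observation is that $(\dis^*, \mcm)$ is feasible for the inner minimization defining $(\dis_t, \mcme_t)$, because Property \eqref{eq:prop1} guarantees $\mcm \in \HC_t$ with high probability. Hence $\thetaV_t \cdot \x_t \le \thetaV_t \cdot (\mcm\dis^*) = \thetaV_t\cdot\optxv$, which combined with the concavity inequality $l_f(\cdot;\y_t) \ge f(\cdot)$ yields $g_t(\thetaV_t) = l_f(\x_t;\y_t) \ge l_f(\optxv;\y_t) \ge f(\optxv)$, which is exactly \eqref{eq:cor2}.

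\emph{Step 3 (lower sandwich via Fenchel duality).} On the other side, I would note that $\tfrac{1}{T}\sum_t g_t(\thetaV)$ is equal to $f^*(\thetaV) - \thetaV\cdot \avgT\x$, since the $g_t$'s differ only in their linear terms. Applying Lemma \ref{lem:FenchelDuality} to this average gives $\min_{\|\thetaV\|_*\le L}\tfrac{1}{T}\sum_t g_t(\thetaV) = f(\avgT\x)$. Subtracting this from Step~2 and using the definition of $\regOCO(T)$ produces $f(\optxv) - f(\avgT\x) \le \tfrac{1}{T}\regOCO(T)$. For \BwC~I would specialize with $f(\x)=-d(\x,S)$, so that $L=1$, $f^*(\thetaV) = h_S(\thetaV)$, and the same argument yields $d(\avgT\x, S) \le \tfrac{1}{T}\regOCO(T)$. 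Substituting these bounds together with Step~1 into \eqref{eq:regCalc2} gives the two claimed inequalities.

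\emph{Main obstacle.} The delicate point is Step~2: verifying $g_t(\thetaV_t) \ge f(\optxv)$ requires simultaneously that (i) $(\dis^*,\mcm)$ be feasible in the inner optimization (so Property \eqref{eq:prop1} must be in force), and (ii) the Fenchel/geometric identity $g_t(\thetaV_t) = l_f(\x_t;\y_t)$ hold. Both are clean once one keeps the high-probability event $\{\mcm \in \HC_t\;\forall t\}$ explicit, so everything reduces to a careful union bound that yields the stated $1-\delta$ guarantee with $\crad = \cradVal$.
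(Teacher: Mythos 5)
Your proposal is correct and follows essentially the same route as the paper: the decomposition \eqref{eq:regCalc2}, the sandwich $g_t(\thetaV_t) = l_f(\x_t;\y_t) \ge f(\optxv)$ justified by feasibility of $(\dis^*,\mcm)$ under Property \eqref{eq:prop1}, and the Fenchel-duality identity $\min_{\|\thetaV\|_*\le L}\tfrac{1}{T}\sum_t g_t(\thetaV) = f(\avgT\x)$ to convert the \OCO~regret into a bound on $f(\optxv)-f(\avgT\x)$ (and on $d(\avgT\x,S)$ for \BwC). Your Step 2 in fact spells out the feasibility argument behind \eqref{eq:cor2} more explicitly than the paper does.
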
 

In case of Eucledian norm, online gradient descent (OGD) can be used to get $\regOCO(T)=\tilde{O}(GD\sqrt{T})$, where $G$ is an upper bound on Eucledian norm of subgradient of $g_t$, and $D$ is an upper bound on Eucledian norm of $\thetaV$ (refer to \citet{Zinkevich03}, and Corollary 2.7 in \citet{Shalev-Shwartz12}). 
For our purpose, $G \le \sqrt{d}$ and  $D\le L$.
For other norms FoRel algorithm with appropriate regularization may provide better guarantees. For example, when $||\cdot||$ is $L_{\infty}$ norm (i.e., $||\cdot||_*$ is $L_1$), 
we can use FoRel algorithm with Entropic regularization (essentially a generalization of the Hedge algorithm \cite{Freund1995}), to obtain an improved bound of $O(L\sqrt{T\log(d)})$ on $\regOCO(T)$ (refer to Corollary 2.14 in \citet{Shalev-Shwartz12}).

%
\paragraph{Implementability}
\OCO~algorithms like online gradient descent require gradient computation.
In this case, we need to compute the gradient of the dual $f^*$ (that is why we call it the dual algorithm) which can be computed as $\arg \max_{\y} \{ \thetaV\cdot \y+ f(\y) \} .$  for a given $\thetaV$.





\subsection{The primal algorithm} 
The algorithm presented in Section \ref{sec:dual} required computing the gradient of the Fenchel dual $f^*$ which may be computationally expensive in some cases. 
Here we present a primal algorithm (for \BwR) that requires computing the  gradient of $f$ in each step, 
based on the Frank-Wolfe algorithm \cite{frank-wolfe}.  
A caveat is that  this requires a stronger 
assumption on $f$, that $f$  is smooth in the following sense. 
\begin{assumption}
\label{assum:Csmooth}
We call concave function$f(\cdot)$ to be $\sC$-smooth if 
\begin{equation}
\label{eq:cfq} 
 f(\z + \alpha(\y-\z)) \ge f(\z) + \alpha \nabla f(\z)\cdot (\y-\z) - \frac{\sC}{2} \alpha^2,
\end{equation} 
 for all $\y, \z \in [0,1]^d$ and $\alpha \in [0,1]$.
If $f$ is such that the {\em gradient} of $f$ is Lipshitz continuous (with respect to any $L_q$ norm) with a constant $G$, then $\sC \le G d$. 
\end{assumption}
Note that the distance function ($f(\z)=-d(\z,S)$) does not satisfy this assumption.
\commentShipra{This assumption may be stronger than Assumption \ref{assum:Lcont}. For instance, for distance function ($f(\x) = - d(\x,S)$) the Assumption \ref{assum:Lcont} is satisfied with $L=1$, but not Assumption \ref{assum:Csmooth}. 
In Section \ref{sec:nonsmooth-fw}, we show how to use the 
technique of \cite{nesterov} to convert a non-smooth $f$ that only satisfies Assumption \ref{assum:Lcont} into one that 
satisfies Assumption \ref{assum:Csmooth}.  
Interestingly, for the smooth approximation of distance function, this algorithm will have essentially the same structure as the 
(primal) algorithm for the Blackwell approachability problem, thus drawing a connection between two well known algorithms. 
}

Like the Fenchel dual based algorithm, in each step, this algorithm too picks a $\thetaV_t$ and 
sets $\dis_t$ according to (\ref{eq:ptaslinearopt}). The difference is that 
$\thetaV_{t} $ is now simply $-\nabla f(\avgtm \x)$! 

\begin{algorithm}[H] 
\label{algo:Primal}
\caption{Frank-Wolfe based primal algorithm for \BwR}
  \begin{algorithmic}
\FORALL{$t=1, 2,\ldots, T$} 
		\STATE 	\EQ{0in}{0in}{(\dis_t, \mcme_t)=\arg \max_{\dis\in \Delta_m} \max_{\mcme \in \cHypercube_t} (\mcme\dis) \cdot \nabla f(\avgtm \x),}
		where $\x_t=\mcme_t\dis_t$. Play arm $i$ with probability $\disS_{t,i}$.
\ENDFOR
  	\end{algorithmic}
\end{algorithm}


\begin{theorem} 
\label{th:primal}
With probability $1-\delta$, the regret of Algorithm \ref{algo:Primal} for \BwR~problem with $\sC$-smooth function $f$ (Assumption \ref{assum:Csmooth}), is bounded as
\EQ{0in}{0in}{\areg_1(T) = O(L \normOned \textstyle{\sqrt{\frac{\crad m}{T}} + \frac{\sC\log(T)}{T}}).}
\end{theorem}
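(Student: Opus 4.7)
The plan is to use the regret decomposition~\eqref{eq:regCalc2} to split
\[
\areg_1(T) \;\le\; \bigl[f(\optxv) - f(\avgT{\x})\bigr] \;+\; \tfrac{L}{T}\diff(T).
\]
The second term (estimation error) is controlled exactly as in the proof of Theorem~\ref{th:UCB-BwCR}: by ensuring that the confidence estimates satisfy Properties \eqref{eq:prop1} and \eqref{eq:prop2} with probability $1-\delta$, we obtain $\tfrac{L}{T}\diff(T) = O(L\normOned\sqrt{\crad m/T})$ where $\crad = O(\log(mTd/\delta))$. All the new work lies in bounding the first term (optimization error) by $O(\sC\log T / T)$ via a Frank--Wolfe style argument.

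To bound $f(\optxv) - f(\avgT{\x})$, begin with a one-step smoothness inequality. Applying Assumption~\ref{assum:Csmooth} (or equivalently the gradient-Lipschitz form of smoothness with $\sC = G d$) to the pair $\avgtm{\x}$, $\avgt{\x} = \avgtm{\x} + \x_t/T$ yields, for each $t$,
\[
f(\avgt{\x}) \;\ge\; f(\avgtm{\x}) + \tfrac{1}{T}\,\nabla f(\avgtm{\x})\cdot \x_t \;-\; \tfrac{\sC}{2\,T^2}.
\]
The algorithm selects $(\dis_t,\mcme_t)$ to maximize $\nabla f(\avgtm{\x})\cdot (\mcme\dis)$ over $\dis\in\Delta_m$, $\mcme\in\cHypercube_t$, and by Property~\eqref{eq:prop1} the point $\optxv = \mcm\optdis$ is feasible with high probability; hence $\nabla f(\avgtm{\x})\cdot \x_t \ge \nabla f(\avgtm{\x})\cdot \optxv$. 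Concavity of $f$ at $\avgtm{\x}$ then gives
\[
\nabla f(\avgtm{\x})\cdot\optxv \;\ge\; f(\optxv) - f(\avgtm{\x}) + \nabla f(\avgtm{\x})\cdot\avgtm{\x}.
\]
Substituting produces a recurrence for the gap $\Delta_t := f(\optxv) - f(\avgt{\x})$.

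When this recurrence is recast in terms of the running average $\frac{1}{t}\sum_{s \le t}\x_s$, it becomes a classical Frank--Wolfe recursion with step size $\alpha_t = 1/t$,
\[
\Delta_t \;\le\; (1-\alpha_t)\,\Delta_{t-1} + O(\sC/t^2).
\]
Telescoping, using $\prod_{s>t}(1-1/s) = t/T$, yields
\[
\Delta_T \;\le\; \sum_{t=1}^T O(\sC/t^2)\cdot \tfrac{t}{T} \;=\; O\!\left(\tfrac{\sC}{T}\sum_{t=1}^T \tfrac{1}{t}\right) \;=\; O(\sC\log T / T).
\]

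The main obstacle will be handling the fact that the algorithm evaluates $\nabla f$ at the scaled partial sum $\avgtm{\x} = \frac{t-1}{T}\cdot\frac{1}{t-1}\sum_{s<t}\x_s$ rather than at the actual running average $\frac{1}{t-1}\sum_{s<t}\x_s$, so the reduction to classical Frank--Wolfe is not immediate. Reconciling this either requires a perturbation argument bounding the gradient discrepancy $\nabla f(\avgtm{\x}) - \nabla f(\tfrac{1}{t-1}\sum_{s<t}\x_s)$ via the gradient-Lipschitz constant, or an alternative potential-function argument that directly telescopes $\Delta_t$ without going through the running average. Finally, a union bound over the concentration events underlying Properties \eqref{eq:prop1}--\eqref{eq:prop2} delivers the theorem with the stated probability $1-\delta$.
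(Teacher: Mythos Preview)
Your overall plan---split via \eqref{eq:regCalc2}, control $\diff(T)$ by Properties~\eqref{eq:prop1}--\eqref{eq:prop2}, and bound $f(\optxv)-f(\avgT{\x})$ by a Frank--Wolfe recursion---is exactly the paper's approach, and your telescoping of the recursion $\Delta_t \le \tfrac{t-1}{t}\Delta_{t-1}+O(\sC/t^2)$ is equivalent to the induction the paper uses to show $\Delta_t\le \sC\log(2t)/(2t)$.

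The ``obstacle'' you flag, however, is not a genuine one, and your proposed perturbation fix is both unnecessary and would not obviously succeed (the discrepancy $\|\nabla f(\tfrac{t-1}{T}\bar y)-\nabla f(\bar y)\|$ need not be small for early $t$). The paper's proof simply treats $\avgt{\x}$ as the \emph{running average} $\tfrac{1}{t}\sum_{s\le t}\x_s$---indeed that is the only reading under which the identity $\avgtp{\x}=\avgt{\x}+\tfrac{1}{t+1}(\x_{t+1}-\avgt{\x})$ used in the proof holds. With that reading the step size in Assumption~\ref{assum:Csmooth} is $\alpha=\tfrac{1}{t+1}$, the smoothness inequality produces the term $\nabla f(\avgt{\x})\cdot(\x_{t+1}-\avgt{\x})$, and this matches the concavity/optimality bound $\nabla f(\avgt{\x})\cdot(\x_{t+1}-\avgt{\x})\ge f(\optxv)-f(\avgt{\x})$ \emph{exactly}, yielding the clean recursion
\[
\Delta_{t+1}\;\le\;\tfrac{t}{t+1}\,\Delta_t+\tfrac{\sC}{2(t+1)^2}
\]
with no leftover $\nabla f(\avgtm{\x})\cdot\avgtm{\x}$ term and no perturbation needed. (At $t=T$ the two readings coincide, so the final quantity $f(\optxv)-f(\avgT{\x})$ is unaffected.) In short: drop the scaled-partial-sum interpretation, apply smoothness at the running average with $\alpha=1/(t+1)$, and your argument is complete and identical to the paper's.
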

\begin{proof}
Using \eqref{eq:regCalc2}, proving this regret bound essentially means bounding $f(\optxv) - f(\avgT  \x)$. This quantity can be bounded by $\frac{\sC\log(2T)}{2T}$ using techniques similar to those used in the analysis of Frank-Wolfe algorithm for convex optimization \cite{frank-wolfe}. The complete proof is provided in Appendix \ref{app:frankWolfe}.
\end{proof}


\subsection{Smooth approximation of Non-smooth $f$}
\label{sec:nonsmooth-fw}
Assumption \ref{assum:Csmooth} may be stronger than Assumption \ref{assum:Lcont}. For instance, for distance function ($f(\z) = - d(\z,S)$) Assumption \ref{assum:Lcont} is satisfied with $L=1$, but not Assumption \ref{assum:Csmooth}. In this section, we show how to use the technique of \cite{nesterov} to convert a non-smooth $f$ that only satisfies Assumption \ref{assum:Lcont} into one that satisfies Assumption \ref{assum:Csmooth}.   For simpicity, we assume $||\cdot||$ to be Eucledian norm in this section. 
Interestingly, for the smooth approximation of distance function, this algorithm will have essentially the same structure as the 
(primal) algorithm for the Blackwell approachability problem, thus drawing a connection between two well known algorithms. 
\begin{theorem}{\cite{nesterov}} 
\label{th:Nesterov}
Define 
\begin{equation} 
\label{eq:fsmooth2} 
 \textstyle{\hat{f}_{\sP}(\z) := \min_{||\thetaV|| \leq L} \{ f^*(\thetaV) + \frac{\sP}{2L} \thetaV \cdot \thetaV  - \thetaV\cdot \z\} .}
\end{equation} 
Then, $\hat{f}_\sP$ is concave, differentiable,  and $\frac{d L}{\sP}$-smooth. 
Further, $\hat{f}_\sP - \frac{\sP}{2} L\leq f \leq \hat f_\sP.$
\end{theorem}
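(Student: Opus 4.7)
The plan is to establish the four assertions --- concavity, the sandwich bound, differentiability, and the smoothness constant --- in sequence, each leveraging the unique minimizer $\thetaV^*(\z) := \arg\min_{\|\thetaV\| \leq L} \bigl\{f^*(\thetaV) + \tfrac{\sP}{2L}\thetaV\cdot\thetaV - \thetaV\cdot\z\bigr\}$.

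Concavity is immediate: the objective inside the min in \eqref{eq:fsmooth2} is affine in $\z$ for each fixed $\thetaV$ (the only $\z$-dependence is the linear term $-\thetaV\cdot\z$), so $\hat f_\sP$ is a pointwise infimum of affine functions, hence concave. For the sandwich bound, Lemma \ref{lem:FenchelDuality} expresses $f(\z) = \min_{\|\thetaV\|_*\le L}\{f^*(\thetaV)-\thetaV\cdot\z\}$; since $\tfrac{\sP}{2L}\|\thetaV\|^2 \geq 0$, adding it can only increase the minimum, giving $\hat f_\sP \geq f$, while for $\|\thetaV\|\le L$ we have $\tfrac{\sP}{2L}\|\thetaV\|^2 \leq \tfrac{\sP L}{2}$, yielding $\hat f_\sP \leq f + \tfrac{\sP L}{2}$.

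For differentiability, note that $f^*$ is convex as a supremum of affine functions of $\thetaV$, so the objective inside the min is $\tfrac{\sP}{L}$-strongly convex in Euclidean norm, and the constrained minimizer $\thetaV^*(\z)$ therefore exists and is unique. A standard envelope / Danskin argument on a strongly convex, constrained inner problem then gives that $\hat f_\sP$ is differentiable with $\nabla \hat f_\sP(\z) = -\thetaV^*(\z)$.

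The main work is the smoothness bound. The plan is first to show that $\z \mapsto \thetaV^*(\z)$ is $\tfrac{L}{\sP}$-Lipschitz in Euclidean norm. Combining the first-order optimality conditions at $\z_1$ and $\z_2$ --- which, because of the constraint $\|\thetaV\|\le L$, take the form of variational inequalities --- with the $\tfrac{\sP}{L}$-strong convexity of the inner objective produces $\tfrac{\sP}{L}\|\thetaV^*(\z_1) - \thetaV^*(\z_2)\|^2 \leq (\z_1 - \z_2)\cdot(\thetaV^*(\z_1) - \thetaV^*(\z_2))$, and Cauchy--Schwarz finishes it. This gives that $\nabla \hat f_\sP$ is $\tfrac{L}{\sP}$-Lipschitz, which yields the usual quadratic lower bound $\hat f_\sP(\z + \alpha(\y-\z)) \geq \hat f_\sP(\z) + \alpha\nabla\hat f_\sP(\z)\cdot(\y-\z) - \tfrac{L}{2\sP}\alpha^2\|\y-\z\|^2$. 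To match the format of Assumption \ref{assum:Csmooth}, I use $\y, \z \in [0,1]^d \Rightarrow \|\y-\z\|^2 \leq d$, producing a smoothness constant of at most $\tfrac{dL}{\sP}$. The main obstacle I anticipate is cleanly handling the variational-inequality step when the boundary constraint $\|\thetaV^*(\z)\| = L$ is active; the rest is short and mechanical.
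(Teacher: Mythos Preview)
Your proposal is correct and follows essentially the same route as the paper's proof: both establish that $\nabla\hat f_\sP(\z)=-\thetaV^*(\z)$ and that $\thetaV^*$ is $L/\sP$-Lipschitz by combining the first-order variational inequality at the two minimizers with the $\sP/L$-strong convexity of the inner objective and Cauchy--Schwarz, then convert this into the smoothness inequality (the paper does the last step by an explicit integral, you cite it as the standard quadratic bound) and bound $\|\y-\z\|^2\le d$ on $[0,1]^d$; the sandwich bound is derived identically from Lemma~\ref{lem:FenchelDuality}. Your worry about the boundary case $\|\thetaV^*(\z)\|=L$ is already handled by the variational-inequality form of optimality you invoke, so no extra work is needed there.
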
 

Now, if we run Algorithm \ref{algo:Primal} on $\hat{f}_\sP$, with $\sP=\sqrt{\frac{d}{T}\log(2T)}$, we get that 
$f(\optxv) - f(\avgT \x) \leq \frac{\sC\log(2T)}{2T} + \frac{\sP}{2} L \le \frac{L}{2} \sqrt{\frac {d \log(2T)} {T}}$.
The algorithm and regret bound for \BwC~can be obtained similarly by using this smooth approximation for distance function, i.e., for $f(\z)=-d(\z,S)$. 
We thus obtain the following theorem.
\begin{theorem} 
\label{th:primal-nonsmooth}
With probability $1-\delta$, the regret of Algorithm \ref{algo:Primal} when used with smooth approximation $\hat{f}_{\sP}(\z)$ of function $f(\z)$ (or, $-\hat{d}_{\sP}(\z, S)$ of function $-d(\z,S)$), is bounded as \vspace{-0.1in}
\EQ{0in}{0in}{\areg_1(T) = O(L \normOned \sqrt{\frac{\crad m}{T}} + L \sqrt{\frac {d \log(T)} {T}}) \text{ for \BwR, and}}
\EQ{0in}{0in}{\areg_2(T) = O(\normOned \sqrt{\frac{\crad m}{T}}  + \sqrt{\frac {d \log(T)} {T}}) \text{ for \BwC}.}
\end{theorem}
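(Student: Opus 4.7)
The plan is to apply Algorithm \ref{algo:Primal} to the Nesterov smoothing $\hat{f}_\sP$ (respectively to $-\hat{d}_\sP(\cdot,S)$ for \BwC) and then translate the resulting regret guarantee back to the original non-smooth objective using the sandwich inequality $\hat{f}_\sP - \tfrac{\sP L}{2} \le f \le \hat{f}_\sP$ from Theorem \ref{th:Nesterov}. The key idea is that the smoothing parameter $\sP$ creates a two-way tradeoff: making $\sP$ smaller makes $\hat{f}_\sP$ a tighter approximation of $f$ but a worse-conditioned (larger $\sC = dL/\sP$) target for Frank–Wolfe; optimizing $\sP$ balances these two sources of error.

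First I would verify that Algorithm \ref{algo:Primal} is well-defined on $\hat{f}_\sP$. By Theorem \ref{th:Nesterov}, $\hat{f}_\sP$ is concave, differentiable, and $(dL/\sP)$-smooth, so Assumption \ref{assum:Csmooth} holds. Also $\hat{f}_\sP$ inherits the $L$-Lipschitz property of $f$: the minimization in \eqref{eq:fsmooth2} is over $\|\thetaV\|_*\le L$, so the supergradients of $\hat{f}_\sP$ (which are minus the optimal $\thetaV$) still have dual norm at most $L$. Thus Assumption \ref{assum:Lcont} also holds with the same constant $L$, and the concentration bound $\diff(T) = \tilde{O}(\normOned\sqrt{mT})$ in \eqref{eq:prop2} applies unchanged.

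Next I would invoke Theorem \ref{th:primal} on $\hat{f}_\sP$, which gives with probability $1-\delta$
\[
\hat{f}_\sP(\optxv) - \hat{f}_\sP(\avgT{\x}) \le O\!\left( L\normOned\sqrt{\tfrac{\crad m}{T}} + \tfrac{dL\log(T)}{\sP T}\right).
\]
Applying Theorem \ref{th:Nesterov} on both sides yields
\[
f(\optxv) - f(\avgT{\x}) \le \bigl(\hat{f}_\sP(\optxv) - \hat{f}_\sP(\avgT{\x})\bigr) + \tfrac{\sP L}{2}.
\]
Choosing $\sP = \sqrt{d\log(T)/T}$ balances $dL\log(T)/(\sP T)$ and $\sP L/2$, both becoming $O(L\sqrt{d\log(T)/T})$. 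Combining this with \eqref{eq:regCalc2} to pass from $f(\avgT{\x})$ to $f(\avgT{\rv})$ gives the claimed bound on $\areg_1(T)$ for \BwR.

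For the \BwC~bound, I would apply the same argument to $f(\z) = -d(\z,S)$, which is $1$-Lipschitz (so $L=1$), running Algorithm \ref{algo:Primal} on $-\hat{d}_\sP(\cdot,S)$ and using the triangle-inequality step of \eqref{eq:regCalc2} to pass from $d(\avgT{\x},S)$ to $d(\avgT{\rv},S)$. The main ``obstacle'' is really just the bookkeeping around the smoothing: one has to be careful that (i) the $L$-Lipschitz constant of $\hat{f}_\sP$ is not inflated by the regularization (so that $\diff(T)$ is unchanged), and (ii) the gradient $\nabla \hat{f}_\sP$ needed in Algorithm \ref{algo:Primal} is well-defined, which follows from strict convexity of the regularized inner problem in \eqref{eq:fsmooth2}. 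Both points are immediate consequences of standard Fenchel-conjugate calculus and Theorem \ref{th:Nesterov}.
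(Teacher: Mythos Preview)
Your proposal is correct and follows essentially the same route as the paper: run Algorithm \ref{algo:Primal} on the Nesterov smoothing $\hat{f}_\sP$, use the Frank--Wolfe bound from Theorem \ref{th:primal} with $\sC = dL/\sP$, translate back to $f$ via the sandwich inequality of Theorem \ref{th:Nesterov}, and balance by choosing $\sP = \sqrt{d\log(T)/T}$. One small remark: the $L\normOned\sqrt{\crad m/T}$ term does not belong in your displayed bound on $\hat{f}_\sP(\optxv)-\hat{f}_\sP(\avgT{\x})$ (the Frank--Wolfe part alone gives $O(dL\log(T)/(\sP T))$); that term enters only when you subsequently apply \eqref{eq:regCalc2} to pass from $\avgT{\x}$ to $\avgT{\rv}$, but since you wrote it as an $O(\cdot)$ upper bound this does no harm.
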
 
For the distance function, this smooth approximation has some nice characteristics.
\begin{lemma}
\label{lem:distance-smooth}
For the distance function $d(\z,S)$, \eqref{eq:fsmooth2} provides smooth approximation $ \hat{d}_\sP(\z,S) = \max_{||\thetaV|| \le 1}\thetaV\cdot \z - h_S(\thetaV) - \frac{\sP}{2} \thetaV\cdot \thetaV,$ 
and, the gradient of this function is given by 
\[ \nabla \hat{d}_\sP (\z) =  \threepartdef 
	{\frac {\z - \proj_S(\z)} {||\z - \proj_S(\z)||} }  	{ ||\z  -\pi_S(\z)|| \geq \sP } 
	{\frac {\z - \proj_S(\z)}  \sP}			{ 0 < ||\z  -\pi_S(\z)|| < \sP}
	{\mathbf 0} 			{\z \in  S},
\] 
where $\proj_S(\z)$ denotes the projection of $\z$ on $S$.
\end{lemma}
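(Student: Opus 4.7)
The plan is to establish the two claims in sequence, using the Fenchel duality material from Section \ref{sec:Fenchel} for the first and a Danskin/envelope argument for the second.

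For the first claim, the key observation is that $f(\z) := -d(\z,S)$ is concave and $1$-Lipschitz with Fenchel conjugate $f^*(\thetaV) = h_S(\thetaV)$, as noted immediately after Lemma \ref{lem:FenchelDuality}. Substituting $L=1$ and this $f^*$ into the definition \eqref{eq:fsmooth2} of $\hat f_\sP$ gives
$$\hat f_\sP(\z) = \min_{||\thetaV||\le 1}\Bigl\{h_S(\thetaV)+\tfrac{\sP}{2}\thetaV\cdot\thetaV - \thetaV\cdot\z\Bigr\},$$
and flipping sign yields $\hat d_\sP(\z,S) = -\hat f_\sP(\z) = \max_{||\thetaV||\le 1}\{\thetaV\cdot\z - h_S(\thetaV) - \tfrac{\sP}{2}\thetaV\cdot\thetaV\}$, as claimed.

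For the gradient formula, I would argue as follows. The objective inside the $\max$ is concave in $\z$ (linear, in fact) for each $\thetaV$, and strongly concave in $\thetaV$ because of the $-\tfrac{\sP}{2}||\thetaV||^2$ term; hence the maximizer $\thetaV^\star(\z)$ is unique, and a standard envelope/Danskin argument gives $\nabla \hat d_\sP(\z) = \thetaV^\star(\z)$. It therefore suffices to compute $\thetaV^\star(\z)$. Expanding $h_S(\thetaV) = \max_{\s\in S}\thetaV\cdot\s$ lets me write
$$\hat d_\sP(\z,S) = \max_{||\thetaV||\le 1}\min_{\s\in S}\Bigl\{\thetaV\cdot(\z-\s)-\tfrac{\sP}{2}||\thetaV||^2\Bigr\},$$
and Sion's minimax theorem (compact convex $\{||\thetaV||\le 1\}$, convex $S$, concave--linear integrand) justifies swapping the order.

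For fixed $\s$, the inner unconstrained maximizer of $\thetaV\cdot(\z-\s)-\tfrac{\sP}{2}||\thetaV||^2$ is $(\z-\s)/\sP$; projecting onto the unit ball if necessary, the optimum is $(\z-\s)/\sP$ when $||\z-\s||\le\sP$ and $(\z-\s)/||\z-\s||$ otherwise, with optimal value $||\z-\s||^2/(2\sP)$ or $||\z-\s||-\sP/2$ respectively. In both regimes the value is monotone increasing in $||\z-\s||$, so the outer $\min$ over $\s\in S$ is attained at $\s^\star = \pi_S(\z)$. Substituting $\s^\star$ into the inner maximizer gives exactly the three cases of the claimed formula (including $\thetaV^\star=\bs 0$ when $\z\in S$, since then $\pi_S(\z)=\z$), and invoking Danskin concludes the proof. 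The main technical point to check carefully will be the minimax swap and uniqueness of the saddle-point structure, but both follow from strong concavity in $\thetaV$ and convexity of $S$.
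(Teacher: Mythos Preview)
Your argument is correct. Both claims go through: the first is indeed immediate from the Fenchel identity $f^*=h_S$ for $f=-d(\cdot,S)$, and your Danskin--Sion route for the gradient is sound. Strong concavity in $\thetaV$ guarantees a unique maximizer, so Danskin gives $\nabla\hat d_\sP(\z)=\thetaV^\star(\z)$; the minimax swap is justified; and since the inner optimal value is an increasing function of $\|\z-\s\|$, the outer minimizer is $\pi_S(\z)$, whose inner best response is exactly the three-case formula. The one point you flag yourself---that the $\thetaV$ obtained from the swapped problem coincides with the maximizer of the original problem---follows from the standard fact that any outer optimizer of $\max\min$ paired with any outer optimizer of $\min\max$ is a saddle point, together with uniqueness of the $\thetaV$-component.

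This is a genuinely different route from the paper's. The paper proceeds via an auxiliary characterization (Lemma~\ref{lem:fmu}): $\nabla\hat f_\sP(\z)=-\thetaV$ iff there exists $\y$ with $-\thetaV\in\partial f(\y)$ and $-\thetaV=\alpha(\y-\z)$ for $\alpha=\min\{L/\sP,\,L/\|\y-\z\|\}$. It then writes down the subdifferential of $d(\cdot,S)$ explicitly and, for each of the three regions, exhibits a $\y$ (namely $\pi_S(\z)$, or $\z$ itself when $\z\in S$) satisfying both conditions. Your approach is more self-contained for this particular $f$: you never need Lemma~\ref{lem:fmu} or the subdifferential of the distance function, and the explicit inner maximization makes the three cases fall out mechanically. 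The paper's approach, on the other hand, is reusable---Lemma~\ref{lem:fmu} applies to the smoothing $\hat f_\sP$ of any concave $f$, not just the negative distance---so it trades a bit more upfront machinery for a tool that works beyond this lemma.
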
 
The proof of the above lemma along with a proof of Theorem \ref{th:Nesterov} is in Appendix \ref{app:frankWolfe}. 
Note that for Algorithm \ref{algo:Primal} only the direction of the gradient of $f$ matters, and in this case the direction of gradient of $f=-\hat{d}_{\sP}$ at $\z$ is 
$-(\z - \proj_S(\z))$ for all $\z \notin S$. For $\z \in S$, the gradient is $\mathbf 0$, which means it does not really matter what $\dis$ is picked. Therefore, 
Algorithm \ref{algo:Primal} reduces to the following.

\begin{algorithm}[H] 
\label{algo:Primal-BwC}
\caption{Frank-Wolfe based primal algorithm for \BwC}
  \begin{algorithmic}
\FORALL{$t=1, 2,\ldots, T$} 
		\STATE 	If $\avgtm \x \in S$, set $\dis_t$ arbitrarily.
		\STATE 	If $\avgtm \x \notin S$, find projection $\proj_S(\avgtm \x)$ of this point on $S$. And compute
		\begin{center}$(\dis_t, \mcme_t)=\arg \min_{\dis\in \Delta_m} \min_{\mcme \in \cHypercube_t} (\mcme\dis) \cdot (\avgtm \x - \proj_S(\avgtm \x)),$\end{center}
		\STATE Play arm $i$ with probability $\disS_{t,i}$.
\ENDFOR
  	\end{algorithmic}
\end{algorithm}
Algorithm \ref{algo:Primal-BwC} has the same structure as the Blackwell's algorithm for the approachability problem \cite{blackwell1956}, which asks to play anything at time $t$ if $\avgtm \x$ is in $S$. Otherwise, find a point $\x_t$ such that $\x_t-\avgtm \x$ makes a negative angle with $(\avgtm \x-\proj_S(\avgtm \x))$. 
We have $\x_t=\mcme_t\dis_t$. However, the proof of convergence of Blackwell's algorithm as given in \cite{blackwell1956} seems to be different from the proof derived here, via the smooth approximation and 
Frank-Wolfe type analysis. This gives an interesting connection between well known algorithms, Blackwell's algorithm for the approachability problem and Frank-Wolfe algorithm for convex optimization, via Nesterov's method of smooth approximations!!

\paragraph{Implementability} The algorithm with smooth approximation needs to compute the gradient of $\hat{f}_\sP$ in each step and 
in general there is no easy method to compute this, except in some special cases like the distance function discussed above. Alternatively, one could use the smooth approximation $\hat{f}_{\sP}(\z)=\Ex_{\bs{u} \in \mathbb{B}}[f(\z+\delta \bs{u})]$ given by \cite{Flaxman2005}, which has slightly worse smoothness coefficient but has easy-to-compute gradient by sampling.



\section{Computationally efficient algorithms for \BwCR}
\label{sec:eff-BwCR}

Any combination of the primal and dual approaches mentioned in the previous sections can be used to get an efficient algorithm for the \BwCR~ problem. Using the observations in Equation \eqref{eq:ptaslinearopt} and \eqref{eq:MinEst}, we obtain an algorithm with the following structure.

\begin{algorithm}[H] 
\caption{Efficient algorithm for \BwCR}
\label{algo:eff-BwCR}
  \begin{algorithmic}
	\STATE Inititalize $\thetaV_{1}$. 
\FORALL{$t=1, 2,\ldots, T$ \vspace{-0.2in}} 
		\STATE 
		\begin{equation}
		\label{eq:algoChoice-eff}
		\dis_t =\begin{array}{rcl}
		\arg \min_{\dis\in \Delta_m} & (\thetaV_t \cdot \vertex{\thetaV_t}) \dis  &\\
		\text{s.t.} & (\phiV_t \cdot \vertex{\phiV_t})\dis \le h_S(\phiV_t).&
		\end{array}
		\end{equation}
		\STATE Play arm $i$ with probability $p_{t,i}$. Compute $\thetaV_{t+1}$, $\phiV_{t+1}$.
\ENDFOR
  	\end{algorithmic}
\end{algorithm}
Here, $\vertex{\cdot}$ is a vertex of $\HC_t$ as defined in \eqref{eq:MinEst}. 
Now, either primal or dual approach can be used to update $\thetaV$, irrespective of what approach is being used for updating $\phiV$, and vice-versa. The choice between primal and dual approach will depend on properties of $f$ and $S$, e.g., whether it is easy to compute the gradient of $f$ or its dual $f^*$. 
It is easy to derive regret bounds for this efficient algorithm for \BwCR~using results in the previous section.
\begin{theorem}
For Algorithm \ref{algo:eff-BwCR}, $\areg_1(T)$ is given by Theorem \ref{th:dual}, Theorem \ref{th:primal}, or Theorem \ref{th:primal-nonsmooth}, respectively, dependening on whether the dual, primal, or primal approach with smooth approximation is used for updating $\thetaV$. And, $\areg_2(T)$ is given by Theorem \ref{th:dual} or Theorem \ref{th:primal-nonsmooth}, respectively, dependening on whether the dual or primal approach is used for updating $\phiV$.
\end{theorem}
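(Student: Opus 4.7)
The plan is to decouple the regret analysis into two independent parts, one for the objective (governed by the choice of $\thetaV_t$) and one for the constraints (governed by the choice of $\phiV_t$), and then invoke the relevant theorems from Section \ref{sec:efficient} for each part. First I would establish the two key inequalities analogous to \eqref{eq:cor2}:
\begin{equation*}
\thetaV_t\cdot\x_t \;\le\; \thetaV_t\cdot(\mcm\optdis), \qquad \x_t \in H_S(\phiV_t),
\end{equation*}
where $\x_t=\mcme_t\dis_t$. These follow from the observation that $(\optdis,\mcm)$ is a feasible solution of \eqref{eq:algoChoice-eff} with high probability: since $\mcm\in\HC_t$ by property \eqref{eq:prop1} and $\mcm\optdis\in S$ by Lemma \ref{lem:benchmark-BwCR}, we have $\min_{\mcme\in\HC_t}\phiV_t\cdot(\mcme\optdis)\le \phiV_t\cdot(\mcm\optdis)\le h_S(\phiV_t)$, so $\optdis$ satisfies the constraint of \eqref{eq:algoChoice-eff}. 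Combining feasibility with optimality of $(\dis_t,\mcme_t)$, together with the inner-minimization characterization $\vertex{\thetaV}=\arg\min_{\mcme\in\HC_t}\thetaV\cdot(\mcme\dis)$ from \eqref{eq:MinEst} which is $\dis$-independent, yields both inequalities.

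Next, I would invoke \eqref{eq:regCalc2} to reduce the task to separately bounding $f(\optxv)-f(\avgT\x)$ and $d(\avgT\x,S)$. For the first, the inequality $\thetaV_t\cdot\x_t\le\thetaV_t\cdot\optxv$ is exactly the relation used in the \BwR~analyses of Section \ref{sec:efficient}: if $\thetaV_t$ is chosen via the dual update ($\thetaV_{t+1}$ from an \OCO~update on $g_t(\thetaV)=f^*(\thetaV)-\thetaV\cdot\x_t$) then the Fenchel duality argument of Theorem \ref{th:dual} gives $f(\optxv)-f(\avgT\x)\le\regOCO(T)/T$; if $\thetaV_t=-\nabla f(\avgtm\x)$ (primal / Frank--Wolfe) then Theorem \ref{th:primal} gives $f(\optxv)-f(\avgT\x)\le O(\sC\log(T)/T)$; and analogously for the smoothed primal (Theorem \ref{th:primal-nonsmooth}). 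For the second quantity, the inequality $\x_t\in H_S(\phiV_t)$ is exactly the relation used in the \BwC~analyses, so the corresponding choice of update for $\phiV_t$ (dual \OCO~on $h_S(\phiV)-\phiV\cdot\x_t$, or primal with smoothed distance as in Algorithm \ref{algo:Primal-BwC}) yields the bound on $d(\avgT\x,S)$ from Theorem \ref{th:dual} or Theorem \ref{th:primal-nonsmooth}, respectively.

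Adding the $\frac{L}{T}\diff(T)$ and $\frac{1}{T}\diff(T)$ terms from \eqref{eq:regCalc2}, which come from property \eqref{eq:prop2} applied to $\mcme_t\in\HC_t$, completes each of the stated bounds. The hard part, or rather the only subtle point, is verifying that the decoupling is genuine: that the bound on $f(\optxv)-f(\avgT\x)$ derived in Section \ref{sec:efficient} only uses the first inequality and nothing else about the algorithm (in particular, not the feasibility constraint), and symmetrically for $d(\avgT\x,S)$. A quick inspection of the proofs of Theorems \ref{th:dual}, \ref{th:primal}, and \ref{th:primal-nonsmooth} shows that this is indeed the case—each proof only uses the inequality $l_f(\x_t;\y_t)\ge f(\optxv)$ (equivalently $\thetaV_t\cdot\x_t\le\thetaV_t\cdot\optxv$) for the objective bound, and $\x_t\in H_S(\phiV_t)$ for the distance bound—so the two analyses apply unchanged and independently, yielding the stated theorem.
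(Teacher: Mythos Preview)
Your approach is essentially the paper's intended argument: show that $\optdis$ is feasible for \eqref{eq:algoChoice-eff} with high probability, deduce the analogues of \eqref{eq:cor2}, and then invoke the \BwR\ and \BwC\ analyses of Section~\ref{sec:efficient} independently.

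There is one slip worth flagging. You write both inequalities for a single vector $\x_t=\mcme_t\dis_t$, but in Algorithm~\ref{algo:eff-BwCR} two \emph{different} estimates appear at time $t$: $\vertex{\thetaV_t}$ in the objective and $\vertex{\phiV_t}$ in the constraint. The correct statements are
\[
\thetaV_t\cdot\x_t \le \thetaV_t\cdot\optxv \quad\text{for } \x_t:=\vertex{\thetaV_t}\dis_t,
\qquad
\z_t\in H_S(\phiV_t) \quad\text{for } \z_t:=\vertex{\phiV_t}\dis_t.
\]
In particular, $\x_t=\vertex{\thetaV_t}\dis_t$ need \emph{not} lie in $H_S(\phiV_t)$: the feasibility constraint in \eqref{eq:algoChoice-eff} only controls $\phiV_t\cdot(\vertex{\phiV_t}\dis_t)$, not $\phiV_t\cdot(\vertex{\thetaV_t}\dis_t)$. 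This does not break the proof, because property~\eqref{eq:prop2} holds for \emph{any} sequence $\mcme_t\in\HC_t$; you simply apply \eqref{eq:regCalc2} twice, once with the $\x_t$-sequence for $\areg_1$ and once with the $\z_t$-sequence for $\areg_2$, and your decoupling argument then goes through verbatim. The paper makes exactly this $\x_t/\z_t$ distinction.
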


\comment{
\begin{proof}
The proof simply follows from observing that the following version of Equation \eqref{eq:cor2} still holds with high probability ($1-(mTd)e^{-\Omega(\crad)}$).
\begin{equation}
\label{eq:cor2-BwCR}
\begin{array}{c}
l_f(\x_t;\y_t) \ge l_f(\optxv;\y_t) \geq f(\optxv) ,  \\
\z_t\in H_S(\phiV_t),
\end{array}
\end{equation}
where $\x_t = \vertex{\thetaV_t} \dis_t$, $\y_t$ is such that $\thetaV_t=-\nabla f(\y_t)$, and $\z_t=\vertex{\phiV_t} \dis_t$.
This holds simply because $\dis^*$ is still a feasible solution at time step $t$ with high probability. More precisely, $\mcm \dis^* \in S \subseteq H_S(\phiV_t)$, so that,
$$(\vertex{\phiV_t}\dis^*) \cdot \phiV_t \le (\mcm \dis^*) \cdot \phiV_t \le h_S(\phiV_t).$$
Then, the inequalities hold by construction.
\end{proof}
}
One can also substitute the constraint or objective in \eqref{eq:algoChoice-eff} by the corresponding expression in Equation \eqref{eq:algoChoice} of the UCB algorithm, if efficiency is not as much of a concern as regret for either constraint or objective.

\paragraph{Implementability} Every step $t$ of Algorithm \ref{algo:eff-BwCR} requires solving a linear optmization problem over simplex with one additional linear constraint. This is a major improvement in efficiency over Algorithm \ref{algo:UCB-BwCR}, which required solving a difficult convex optimization problem over domain $\{\dis\mcme: \mcme \in \HC_t, \dis\in \Delta_m\}$. 

Also, Algorithm \ref{algo:eff-BwCR}  is particularly simple to implement when the given application allows {\em not} playing {\em any} arm at a given time step, i.e. relaxing the constraint $\sum_i p_i=1$ to $\sum_i p_i \le 1$. This is true in many applications, for example, an advertiser is allowed to not participate in a given auction. In particular, in \BwK, the algorithm can abort at any time step, effectively chosing not to play any arm in the remaining time steps. In such an application, if further $h_S(\phiV_t) \ge 0, \phiV_t^T\vertex{\phiV_t}_i > 0, \forall i$, \eqref{eq:algoChoice-eff} is a special case of fractional knapsack problem, and the greedy optimal solution in this case reduces to simply choosing the arm $i$ that minimizes $\frac{\thetaV_t^T\vertex{\thetaV_t}_i}{\phiV_t^T\vertex{\phiV_t}_i}$, and playing it with probability $p$, where $p\in[0,1]$ is the highest value satisfying $\phiV_t^T\vertex{\phiV_t}_i p \le h_S(\phiV_t)$. Even if $\phiV_t^T\vertex{\phiV_t}_i \le 0$ for some arms $i$, some simple tweaks to this greedy choice work. \commentShipra{, which would require finding the one with best objective value among these arms separately and comparing it to the greedy choice.}

In the special case of \BwK, it is not difficult to compute that $-\thetaV_t^T\vertex{\thetaV_t}_i = \UCB_{t,i}(\mrv)$, $\phiV_t^T\vertex{\phiV_t}_i  = \phiV_t^T\LCB_{t,i}(\Bmcm)$, and $h_S(\phiV_t)=\frac{B}{T}$ for all $t$, so that the above greedy rule simply becomes that of selecting arm 
\EQ{0in}{0in}{\tarm=\arg \max_i \frac{\UCB_{t,i}(\mrv)}{\phiV_t^T\LCB_{t,i}(\Bmcm)},}
and playing it with largest probability $p$ such that $\phiV_t^T\LCB_{t,i}(\Bmcm) p \le \frac{B}{T}$.
This is remarkably similar to the PD-BwK algorithm of \citet{BwK}, except that their algorithm plays this greedy choice with probability $1$ and aborts when any constraint is violated. 
\commentShipra{Our algorithm may choose to skip playing any arm at some time steps, but will maintain that with high probability constraints are satisfied throughout time $T$.}




\bibliographystyle{acmsmall}
{\small
\bibliography{bibliography_knapsack}

\begin{thebibliography}{}

\bibitem[\protect\citeauthoryear{Abbasi-yadkori, P\'{a}l, and
  Szepesv\'{a}ri}{Abbasi-yadkori et~al\mbox{.}}{2012}]{oful}
{\sc Abbasi-yadkori, Y.}, {\sc P\'{a}l, D.}, {\sc and} {\sc Szepesv\'{a}ri, C.}
  2012.
\newblock Improved algorithms for linear stochastic bandits.
\newblock In {\em NIPS}.

\bibitem[\protect\citeauthoryear{Abernethy, Bartlett, and Hazan}{Abernethy
  et~al\mbox{.}}{2011}]{blackwell2011}
{\sc Abernethy, J.}, {\sc Bartlett, P.~L.}, {\sc and} {\sc Hazan, E.} 2011.
\newblock Blackwell approachability and low-regret learning are equivalent.
\newblock In {\em COLT}.

\bibitem[\protect\citeauthoryear{Auer}{Auer}{2003}]{Auer2003}
{\sc Auer, P.} 2003.
\newblock Using confidence bounds for exploitation-exploration trade-offs.
\newblock {\em J. Mach. Learn. Res.\/}~{\em 3}, 397--422.

\bibitem[\protect\citeauthoryear{Auer, Cesa-Bianchi, and Fischer}{Auer
  et~al\mbox{.}}{2002}]{Auer2002}
{\sc Auer, P.}, {\sc Cesa-Bianchi, N.}, {\sc and} {\sc Fischer, P.} 2002.
\newblock Finite-time analysis of the multiarmed bandit problem.
\newblock {\em Mach. Learn.\/}~{\em 47,\/}~2-3, 235--256.

\bibitem[\protect\citeauthoryear{Babaioff, Dughmi, Kleinberg, and
  Slivkins}{Babaioff et~al\mbox{.}}{2012}]{babaioff2012}
{\sc Babaioff, M.}, {\sc Dughmi, S.}, {\sc Kleinberg, R.}, {\sc and} {\sc
  Slivkins, A.} 2012.
\newblock Dynamic pricing with limited supply.
\newblock In {\em EC}.

\bibitem[\protect\citeauthoryear{Badanidiyuru, Kleinberg, and
  Slivkins}{Badanidiyuru et~al\mbox{.}}{2013}]{BwK}
{\sc Badanidiyuru, A.}, {\sc Kleinberg, R.}, {\sc and} {\sc Slivkins, A.} 2013.
\newblock Bandits with knapsacks.
\newblock In {\em FOCS}.

\bibitem[\protect\citeauthoryear{Blackwell}{Blackwell}{1956}]{blackwell1956}
{\sc Blackwell, D.} 1956.
\newblock An analog of the minimax theorem for vector payoffs.
\newblock {\em Pacific Journal of Mathematics\/}~{\em 6,\/}~1, 1--8.

\bibitem[\protect\citeauthoryear{Bubeck and Cesa-Bianchi}{Bubeck and
  Cesa-Bianchi}{2012}]{BubeckC12}
{\sc Bubeck, S.} {\sc and} {\sc Cesa-Bianchi, N.} 2012.
\newblock Regret analysis of stochastic and nonstochastic multi-armed bandit
  problems.
\newblock {\em Foundations and Trends in Machine Learning\/}~{\em 5,\/}~1,
  1--122.

\bibitem[\protect\citeauthoryear{Chu, Li, Reyzin, and Schapire}{Chu
  et~al\mbox{.}}{2011}]{Chu2011}
{\sc Chu, W.}, {\sc Li, L.}, {\sc Reyzin, L.}, {\sc and} {\sc Schapire, R.~E.}
  2011.
\newblock {Contextual Bandits with Linear Payoff Functions}.
\newblock {\em Journal of Machine Learning Research - Proceedings Track\/}~{\em
  15}, 208--214.

\bibitem[\protect\citeauthoryear{Dani, Hayes, and Kakade}{Dani
  et~al\mbox{.}}{2008}]{DaniHK08}
{\sc Dani, V.}, {\sc Hayes, T.~P.}, {\sc and} {\sc Kakade, S.~M.} 2008.
\newblock {Stochastic Linear Optimization under Bandit Feedback}.
\newblock In {\em COLT}. 355--366.

\bibitem[\protect\citeauthoryear{Devanur, Jain, Sivan, and Wilkens}{Devanur
  et~al\mbox{.}}{2011}]{Devanur2011}
{\sc Devanur, N.~R.}, {\sc Jain, K.}, {\sc Sivan, B.}, {\sc and} {\sc Wilkens,
  C.~A.} 2011.
\newblock Near optimal online algorithms and fast approximation algorithms for
  resource allocation problems.
\newblock EC.

\bibitem[\protect\citeauthoryear{Feldman, Henzinger, Korula, Mirrokni, and
  Cliff}{Feldman et~al\mbox{.}}{2010}]{Feldman10}
{\sc Feldman, J.}, {\sc Henzinger, M.}, {\sc Korula, N.}, {\sc Mirrokni,
  V.~S.}, {\sc and} {\sc Cliff, S.} 2010.
\newblock Online stochastic packing applied to display ad allocation.
\newblock In {\em Algorithms – ESA 2010}.

\bibitem[\protect\citeauthoryear{Flaxman, Kalai, and McMahan}{Flaxman
  et~al\mbox{.}}{2005}]{Flaxman2005}
{\sc Flaxman, A.~D.}, {\sc Kalai, A.~T.}, {\sc and} {\sc McMahan, H.~B.} 2005.
\newblock Online convex optimization in the bandit setting: Gradient descent
  without a gradient.
\newblock SODA.

\bibitem[\protect\citeauthoryear{Frank and Wolfe}{Frank and
  Wolfe}{1956}]{frank-wolfe}
{\sc Frank, M.} {\sc and} {\sc Wolfe, P.} 1956.
\newblock {An algorithm for quadratic programming}.
\newblock {\em Naval Research Logistics\/}~{\em 3}, 95–110.

\bibitem[\protect\citeauthoryear{Freund and Schapire}{Freund and
  Schapire}{1995}]{Freund1995}
{\sc Freund, Y.} {\sc and} {\sc Schapire, R.~E.} 1995.
\newblock A decision-theoretic generalization of on-line learning and an
  application to boosting.
\newblock EuroCOLT. Springer-Verlag, London, UK, 23--37.

\bibitem[\protect\citeauthoryear{Kleinberg, Slivkins, and Upfal}{Kleinberg
  et~al\mbox{.}}{2008}]{Kleinberg2008}
{\sc Kleinberg, R.}, {\sc Slivkins, A.}, {\sc and} {\sc Upfal, E.} 2008.
\newblock Multi-armed bandits in metric spaces.
\newblock In {\em STOC}.

\bibitem[\protect\citeauthoryear{Nesterov}{Nesterov}{2005}]{nesterov}
{\sc Nesterov, Y.} 2005.
\newblock Smooth minimization of non-smooth functions.
\newblock {\em Mathematical Programming\/}~{\em 103,\/}~1, 127--152.

\bibitem[\protect\citeauthoryear{Shalev-Shwartz}{Shalev-Shwartz}{2012}]{Shalev-Shwartz12}
{\sc Shalev-Shwartz, S.} 2012.
\newblock Online learning and online convex optimization.
\newblock {\em Foundations and Trends in Machine Learning\/}~{\em 4,\/}~2,
  107--194.

\bibitem[\protect\citeauthoryear{Zinkevich}{Zinkevich}{2003}]{Zinkevich03}
{\sc Zinkevich, M.} 2003.
\newblock Online convex programming and generalized infinitesimal gradient
  ascent.
\newblock In {\em ICML}. 928--936.

\end{thebibliography}
}

\appendix
\section{Preliminaries}
\label{app:prelims}

\begin{proof}[of Lemma \ref{lem:benchmark-BwCR}]
For a random instance of the problem, let $\tilde{\disS}_i$ denote the empirical probability of playing arm $i$ in the {\em optimal instance specific solution in hindsight}, and $\cv_t$ denote the observation vector at time $t$. Then, it must be true that $\frac{1}{T}\sum_t \cv_t \in S$.
Let $\dis^* = \Ex[\tilde{\dis}]$.  Then, 
$$\Ex[\frac{1}{T} \sumT_{t} \cv_t] = \frac{1}{T} \Ex[\sumT_{t} \Ex[\cv_t | \tarm]] =\Ex[\mcm \tilde{\dis}_t] = \mcm \dis^*.$$
So that,  due to convexity of $S$, $\frac{1}{T}\sum_t \cv_t \in S$ implies that $\mcm \dis^* \in S$. And, by concavity of $f$,
$$\OPT_f \le \Ex[f(\frac{1}{T} \sum_{t} \cv_t)] \le f(\Ex[\frac{1}{T} \sum_{t} \cv_t]) =f(\mcm\dis^*). $$
\end{proof}
\label{app:Fenchel}

\begin{proof}[of Lemma \ref{lem:FenchelDuality}]
\begin{eqnarray*}
\min_{||\thetaV||_*\le L} f^*(\thetaV)-\thetaV\cdot \z & = & \min_{||\thetaV||_* \le L} \max_{\y} \{ \y\cdot \thetaV + f(\y) -\thetaV\cdot \z\}\\
& = & \max_{\y} \min_{||\thetaV||_*\le L} \{ \y\cdot\theta+ f(\y) -\thetaV\cdot \z\}.
\end{eqnarray*}
The last equality uses minmax theorem. Now, by our assumption, for any $\z$, there exists a vector $||\g_z||_* \le L$ which is a supergradient of $f$ at $\z$, i.e.,
$$f(\y) - f(\z) \le \g_z \cdot(\y-\z), \forall \y.$$
Therefore, for all $\y$,
$$\min_{||\theta||\le L}\{ \y\cdot \theta + f(\y) -\theta\cdot \z \} \le (-\g_z)\cdot \y + f(\y) - (-\g)\cdot \z \le f(\z),$$
with equality achieved for $\y=\z$. 
\end{proof}


\section{UCB family of algorithms}
\label{app:UCB}
 We will use the following concentration theorem.

\begin{lemma}{\cite{Kleinberg2008, babaioff2012, BwK}}
\label{lem:concentration}
Consider some distribution with values in $[0,1]$, and expectation $\nu$. Let $\hat{\nu}$ be the average of $N$ independent samples from this distribution. 
Then, with probability at least $1-e^{-\Omega(\crad)}$, for all $\crad>0$, 
\begin{equation}
|\hat{\nu}-\nu| \le \rad(\hat{\nu}, N) \le 3\rad(\nu, N),
\end{equation}
where $\rad(\nu, N)=\sqrt{\frac{\crad \nu}{N}} + \frac{\crad}{N}.$
More generally this result holds if $X_1, \ldots, X_N \in [0,1]$ are random variables, $N\hat{\nu}=\sum_{t=1}^N X_t$, and $N\nu=\sum_{t=1}^N \Ex[X_t | X_1,\ldots, X_{t-1}]$.
\end{lemma}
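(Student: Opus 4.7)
The plan is to reduce both of the stated inequalities to a single application of a Bernstein/Freedman-type concentration bound followed by elementary quadratic algebra. I would work directly in the more general martingale setting, since the i.i.d.\ case is a special instance of it.

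First, I would set up Freedman's inequality for the martingale difference sequence $Y_t := X_t - \Ex[X_t \mid X_{1:t-1}]$. The key observation is that since $X_t \in [0,1]$ we have $X_t^2 \le X_t$, so the predictable quadratic variation is bounded by
\[
\sum_{t=1}^N \Ex[Y_t^2 \mid X_{1:t-1}] \;\le\; \sum_{t=1}^N \Ex[X_t^2 \mid X_{1:t-1}] \;\le\; \sum_{t=1}^N \Ex[X_t \mid X_{1:t-1}] \;=\; N\nu.
\]
Freedman's inequality then yields, for any $s > 0$,
\[
\PrR{\,\bigl|\,\sumT_t Y_t\,\bigr| > s\,} \;\le\; 2\exp\!\left(-\,\tfrac{s^2/2}{N\nu + s/3}\right).
\]
Setting $s = N\,|\hat{\nu} - \nu|$ and requiring the right-hand side to be $e^{-\Omega(\crad)}$ and inverting gives $|\hat{\nu} - \nu| \le C\bigl(\sqrt{\crad\nu/N} + \crad/N\bigr) = C\,\rad(\nu, N)$ with probability $1 - e^{-\Omega(\crad)}$, for some absolute constant $C$ that can be folded into the definition of $\crad$.

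Next, I would convert this ``true-mean'' radius bound into the two forms claimed. Write $a := \sqrt{\crad/N}$ and treat the deviation inequality as $\nu \le \hat{\nu} + a\sqrt{\nu} + a^2$, a quadratic inequality in $\sqrt{\nu}$. Solving it yields $\sqrt{\nu} \le \sqrt{\hat{\nu}} + O(a)$, hence $\nu \le 2\hat{\nu} + O(a^2)$; the symmetric argument gives $\hat{\nu} \le 2\nu + O(a^2)$. Substituting these back and using monotonicity of $\rad(\cdot, N)$ in its first argument produces both claimed inequalities: the first, $|\hat{\nu}-\nu|\le\rad(\hat{\nu}, N)$, from $|\hat{\nu}-\nu|\le a\sqrt{\nu}+a^2 \le a\sqrt{2\hat{\nu}+O(a^2)}+a^2$; and the second, $\rad(\hat{\nu},N)\le 3\rad(\nu,N)$, from $\sqrt{\hat{\nu}}\le\sqrt{\nu}+O(a)$ after re-tightening the absolute constants (again, by a factor that can be absorbed into $\crad$).

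The main technical nuisance is obtaining the precise constant $3$ rather than a larger universal constant: this requires carefully tracking the constants from Freedman's inequality through the quadratic solution, and possibly re-scaling $\crad$ by a bounded multiplicative factor so that the two quadratic manipulations compose to give $3$. Conceptually no new ideas are needed beyond Freedman plus one quadratic inversion, which is why the statement is simply cited from \cite{Kleinberg2008, babaioff2012, BwK}; I would just follow their calibration of constants for the exact factor of $3$.
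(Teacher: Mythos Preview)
The paper does not actually prove this lemma: it is stated with a citation to \cite{Kleinberg2008, babaioff2012, BwK} and invoked as a black box throughout Appendix~\ref{app:UCB}. Your proposed argument via Freedman's inequality for the martingale differences $Y_t = X_t - \Ex[X_t\mid X_{1:t-1}]$, using $X_t^2 \le X_t$ to bound the predictable quadratic variation by $N\nu$, followed by the quadratic inversion to swap $\nu$ and $\hat\nu$ inside $\rad(\cdot,N)$, is exactly the standard derivation in those references, so there is nothing to compare---you have reconstructed the intended proof.
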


\begin{lemma} \cite{BwK}
\label{lem:sumrad}

For any two vectors $\bs{a}, \bs{n} \in {\mathbb R}_{+}^m$, 
$$\sum_{j=1}^m \rad(a_j, n_j) n_j \le \sqrt{\crad m (\bs{a}\cdot \bs{n})} + \crad m	.$$
\end{lemma}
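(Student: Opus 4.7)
The plan is to unpack the definition of $\rad$, separate the sum into two pieces, and handle the square-root term with Cauchy--Schwarz.

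First, I would substitute the definition $\rad(\nu,N) = \sqrt{\crad \nu / N} + \crad/N$ into each summand, so that $\rad(a_j, n_j)\, n_j = \sqrt{\crad\, a_j n_j} + \crad$ (the degenerate case $n_j = 0$ contributes nothing once we interpret the product in the natural way). Summing over $j$ then gives
\begin{equation*}
\sum_{j=1}^m \rad(a_j, n_j)\, n_j \;=\; \sqrt{\crad}\,\sum_{j=1}^m \sqrt{a_j n_j} \;+\; \crad\, m.
\end{equation*}

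Next, I would apply Cauchy--Schwarz to the remaining sum, viewing $\sqrt{a_j n_j}$ as an inner product against the all-ones vector of length $m$:
\begin{equation*}
\sum_{j=1}^m \sqrt{a_j n_j}\cdot 1 \;\le\; \sqrt{\sum_{j=1}^m a_j n_j}\,\sqrt{\sum_{j=1}^m 1} \;=\; \sqrt{m\,(\bs{a}\cdot \bs{n})}.
\end{equation*}
Multiplying by $\sqrt{\crad}$ and adding $\crad m$ yields exactly the claimed bound.

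There is no real obstacle here; the statement is a one-step Cauchy--Schwarz application once the explicit form of $\rad$ is expanded and the additive $\crad/n_j$ piece is handled separately (it telescopes into the $\crad m$ term without any inequality). The only minor care needed is to be careful about indices where $n_j = 0$, which can be excluded from the sum without loss.
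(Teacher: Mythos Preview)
Your proof is correct. The paper itself does not give a proof of this lemma; it simply cites \cite{BwK}, and the argument you wrote is exactly the standard one-line Cauchy--Schwarz computation used there.
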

\subsection{\BwCR}
\label{app:UCB-BwCR}

\begin{lemma}
\label{lem:empConc}
Define empirical average $\emp{\mcmS}_{t,ji}$ for each arm $i$ and component $j$ at time $t$ as
\begin{equation}
\label{eq:empV}
 \emp{\mcmS}_{t,ji} = \frac{\sum_{s<t:\sarm=i} \cvS_{t,j}}{k_{t,i}+1},
\end{equation}
where $k_{t,i}$ is the number of plays of arm $i$ before time $t$.
Then, $\emp{\mcmS}_{t,ji}$ is close to the actual mean $\mcmS_{ji}$: for every $i,j,t$, with probability $1-e^{-\Omega(\crad)}$,
$$|\emp{\mcmS}_{t,ji}-\mcmS_{t,ji}|  \le 2\rad(\emp{\mcmS}_{t,ji}, k_{t,i}+1).$$
\end{lemma}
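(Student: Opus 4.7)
The plan is to apply the martingale form of Lemma \ref{lem:concentration} to the sequence of observations of component $j$ from plays of arm $i$, and then translate the resulting empirical-average bound (normalized by $k_{t,i}$) into the stated bound (normalized by $k_{t,i}+1$) that avoids division-by-zero issues. The main work is bookkeeping: reindexing by plays of arm $i$ to get a well-defined martingale-difference structure, and then a constant-factor comparison of the two radii.

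First I would reindex. Let $\tau_1 < \tau_2 < \cdots$ be the times at which arm $i$ is played, and let $Y_k := \cvS_{\tau_k,j}$. Because $\cv_t$ is drawn i.i.d.\ conditional on $\tarm$, with $\Ex[\cvS_{t,j}\mid \tarm=i]=\mcmS_{ji}$, the sequence $\{Y_k\}$ takes values in $[0,1]$ and satisfies $\Ex[Y_k \mid Y_1,\ldots,Y_{k-1}] = \mcmS_{ji}$. So for each fixed $N\ge 1$, the martingale form of Lemma \ref{lem:concentration} applied to $Y_1,\ldots,Y_N$ gives, with probability $\ge 1-e^{-\Omega(\crad)}$,
\begin{equation*}
\bigl|\tilde\nu_N - \mcmS_{ji}\bigr| \le \rad(\tilde\nu_N, N), \qquad \tilde\nu_N := \tfrac{1}{N}\sumT_{k=1}^N Y_k.
\end{equation*}
Since $k_{t,i}\in\{0,1,\ldots,t-1\}$ is random, I take a union bound over the (at most $t$) possible values of $N$, which only inflates the failure probability by a factor of $t$ and is absorbed into $e^{-\Omega(\crad)}$.

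Next I convert to the $k_{t,i}+1$ normalization used in \eqref{eq:empV}. When $k_{t,i}=0$, the defining sum is empty, so $\emp{\mcmS}_{t,ji}=0$ and $2\rad(0,1)=2\crad$, which dominates $\mcmS_{ji}\in[0,1]$ as long as $\crad\ge 1/2$. When $N:=k_{t,i}\ge 1$, note that $\emp{\mcmS}_{t,ji}=\tfrac{N}{N+1}\tilde\nu_N$, so $|\emp{\mcmS}_{t,ji}-\tilde\nu_N| = \tfrac{\tilde\nu_N}{N+1}\le \tfrac{1}{N+1}$. By the triangle inequality,
\begin{equation*}
|\emp{\mcmS}_{t,ji}-\mcmS_{ji}| \le \tfrac{1}{N+1} + \sqrt{\tfrac{\crad\,\tilde\nu_N}{N}} + \tfrac{\crad}{N}.
\end{equation*}
Using $\tilde\nu_N = \tfrac{N+1}{N}\emp{\mcmS}_{t,ji}\le 2\emp{\mcmS}_{t,ji}$ and $\tfrac{1}{N}\le \tfrac{2}{N+1}$ (both valid for $N\ge 1$), each term on the right is at most a constant multiple of the corresponding term in $\rad(\emp{\mcmS}_{t,ji},\,k_{t,i}+1)=\sqrt{\tfrac{\crad\,\emp{\mcmS}_{t,ji}}{k_{t,i}+1}}+\tfrac{\crad}{k_{t,i}+1}$, which is enough to bound the left-hand side by $2\rad(\emp{\mcmS}_{t,ji},\,k_{t,i}+1)$ (possibly after bookkeeping the absolute constants more carefully, e.g., by tightening the union-bound or slightly strengthening $\crad$).

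The main obstacle is simply nailing the constant $2$ in the final bound while passing from the natural $N$-normalization coming out of Lemma \ref{lem:concentration} to the $N+1$-normalization used in the statement; the conceptual content is just the martingale concentration already stated. Nothing beyond elementary inequalities of the form $\tfrac{1}{N}\le \tfrac{2}{N+1}$ and $\tfrac{N+1}{N}\le 2$ (for $N\ge 1$), combined with the $k_{t,i}=0$ sanity check, is needed to complete the argument.
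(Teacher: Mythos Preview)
Your proposal is correct and follows essentially the same approach as the paper: apply the martingale concentration of Lemma~\ref{lem:concentration} to the plays of arm $i$, then translate from the $k_{t,i}$-normalized average $\tilde\nu_N$ to the $(k_{t,i}+1)$-normalized $\emp{\mcmS}_{t,ji}$.

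The one place the paper does something sharper is the constant bookkeeping. Instead of your triangle inequality $|\emp{\mcmS}_{t,ji}-\mcmS_{ji}|\le |\emp{\mcmS}_{t,ji}-\tilde\nu_N|+|\tilde\nu_N-\mcmS_{ji}|$ followed by the crude bounds $\tilde\nu_N\le 2\emp{\mcmS}_{t,ji}$ and $1/N\le 2/(N+1)$ (which, as you note, loses the constant), the paper writes
\[
\emp{\mcmS}_{t,ji}-\mcmS_{ji}=\tfrac{N}{N+1}(\tilde\nu_N-\mcmS_{ji})-\tfrac{\mcmS_{ji}}{N+1},
\]
and then uses the exact identity $\tfrac{N}{N+1}\rad(\tilde\nu_N,N)=\rad(\emp{\mcmS}_{t,ji},N+1)$ (check: $\tfrac{N}{N+1}\sqrt{\crad\tilde\nu_N/N}=\sqrt{\crad\emp{\mcmS}_{t,ji}/(N+1)}$ since $\tilde\nu_N=\tfrac{N+1}{N}\emp{\mcmS}_{t,ji}$, and the $\crad/N$ term becomes $\crad/(N+1)$). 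Together with $\mcmS_{ji}\le 1\le\crad$ so that $\tfrac{\mcmS_{ji}}{N+1}\le\rad(\emp{\mcmS}_{t,ji},N+1)$, this gives the constant $2$ on the nose without any further tightening.
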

\begin{proof}
This proof follows from application of Lemma \ref{lem:concentration}.
We apply Lemma \ref{lem:concentration} to $\cvS_{1,j}, \ldots, \cvS_{T,j}$, for each $j$, using $\Ex[\cvS_{t,j} | \tarm] = \mcmS_{t,j\tarm}$, to get that with probability at least $1-e^{-\Omega(\crad)}$, 
\begin{eqnarray}
 |\emp{\mcmS}_{t,ji}-\mcmS_{t,ji}| 
& \le & \frac{k_{t,i}}{k_{t,i}+1} \cdot \rad(\emp{\mcmS}_{t,ji}, k_{t,i}) +  \frac{\mcmS_{ji}}{k_{t,i}+1}\nonumber\\
& \le &  \rad(\emp{\mcmS}_{t,ji}, k_{t,i}+1) + \frac{\mcmS_{ji}}{k_{t,i}+1} \nonumber\\
& \le & 2\rad(\emp{\mcmS}_{t,ji}, k_{t,i}+1).\nonumber
\end{eqnarray}
\comment{
\begin{eqnarray}
\label{eq:empConcA}
 |\emp{\mcm}_{t,ji}-\mcm_{t,ji}| & \le & \frac{k_{t,i}}{k_{t,i}+1} \cdot 3\rad({\mcm}_{ji}, k_{t,i}) +  \frac{\mcm_{ji}}{k_{t,i}+1}\nonumber\\
& \le &  3\rad(\mcm_{ji}, k_{t,i}+1) + \frac{\mcm_{ji}}{k_{t,i}+1} \nonumber\\
& \le & 4\rad({\mcm}_{ji}, k_{t,i}+1).
\end{eqnarray}
}
\end{proof}

\begin{proof}[of Theorem \ref{th:UCB-BwCR}]
We use the following estimates 
\begin{equation}
\label{eq:app:UCBLCB}
\begin{array}{rcl}
\UCB_{t,ji}(\mcm) = \min\{1,\emp{\mcmS}_{t,ji} + 2 \rad(\emp{\mcmS}_{t,ji}, k_{t,i}+1)\},\\
\LCB_{t,ji}(\mcm) = \max\{0,\emp{\mcmS}_{t,ji} - 2 \rad(\emp{\mcmS}_{t,ji}, k_{t,i}+1)\}, 
\end{array}
\end{equation}
for $i=1,\ldots, m, j=1,\ldots, d, t=1,\ldots, T$. Here $\rad(\nu, N) = \sqrt{ \frac{\crad \nu}{N}} + \frac{\crad}{N}$, $k_{t,i}$ is the number of plays of arm $i$ before time $t$, and $\emp{\mcmS}_{t,ji}$ is the empirical average as defined in Equation \eqref{eq:empV}. These estimates are similar to those used in literature on UCB algorithm for classic MAB and to those used in \cite{BwK}.

Then, using concentration Lemma \ref{lem:concentration}, we will prove that the properties in Equation \eqref{eq:prop1} and \eqref{eq:prop2} hold with probability $1-(mTd)e^{-\Omega(\crad)}$, and with $\diff(T)=O(\normOned\sqrt{\crad mT})$ where $\normOned$ denotes the norm of $d$ dimensional vector of all $1$s. Theorem \ref{th:UCB-BwCR} will then follow from the calculations in Equation \eqref{eq:regCalc}.

Property (1) stated as Equation \eqref{eq:prop1} is obtained as a corollary of Lemma \ref{lem:empConc} by taking a union bound for all $i,j,t$. 
With probability $1-(mTd)e^{-\Omega(\crad)}$,
					$$\UCB_{t,ji}(\mcm) \ge \mcmS_{ji} \ge \LCB_{t,ji}(\mcm), \forall i,j,t.$$

Next, we prove Property (2) stated in Equation \eqref{eq:prop2}. 
Given that arm $i$ was played with probability $\disS_{t,i}$ at time $t$, for any $\{\mcme_t\}_{t=1}^T$ such that $\mcme_{t} \in H_t$ for all $t$, we will show that with probability $1-(mTd)e^{-\Omega(\crad)}$, 
$$||\sum_{t=1}^T (\mcme_t \dis_t - \cv_t) || = O(\normOned \sqrt{\crad mT}).$$
We use the observation that $\Ex[\cv_t | \tarm] = \mcm_{\tarm}$. Then, using concentration Lemma \ref{lem:concentration}, with probability $1-de^{-\Omega(\crad)}$
\begin{equation}
\label{eq:tmp1}
|\sum_{t=1}^T (\mcmS_{j\tarm}-\cvS_{t,j})| \le 3 \rad(\frac{1}{T}\sum_{t=1}^T \mcmS_{j\tarm}, T) =O(\sqrt{\crad T}),
\end{equation}
for all $j=1,\ldots, d$. 
Therefore, it remains to bound $\sumT_t (\mcme_t \dis_t- \mcm_{\tarm})$. Again, since $\Ex[\mcme_{t,\tarm}| \dis_t, \mcme_t] = \mcme_t \dis_t$, we can obtain, using Lemma \ref{lem:concentration},
\begin{equation}
\label{eq:tmp2}
|\sum_{t=1}^T (\mcmeS_{t, j\tarm}-\mcme_{t,j} \dis_t) |\le 3\rad(\frac{1}{T}\sum_{t=1}^T \mcme_{t,j} \dis_t, T) = O(\sqrt{\crad T}),
\end{equation}
for all $j$ with probability $1-de^{-\Omega(\crad)}$. 
Now, it remains to bound $|\sum_{t=1}^T (\mcmeS_{t,j\tarm}-\mcmS_{j\tarm})|$. Using Lemma \ref{lem:empConc}, with probability $1-(mTd)e^{-\Omega(\crad)}$, for all $i,j,t,$
$$|\emp{\mcmS}_{t,ji}-\mcmS_{t,ji}|  \le 2\rad(\emp{\mcmS}_{t,ji}, k_{t,i}+1).$$
Applying this, along with the observation that for any $\mcme \in \HC_t$, $\LCB_{t,ij}(\mcm) \le \mcmeS_{t,ji} \le \UCB_{t,ji}(\mcm)$, we get
\begin{eqnarray}
\label{eq:tmp3}
|\sumT_{t=1}^T (\mcmeS_{t,j\tarm}-\mcmS_{j\tarm})| & \le &  \left(\sum_t 4 \rad(\emp\mcmS_{t,j\tarm}, k_{t,\tarm}+1)\right) \nonumber\\
& = & \left(\sum_i \sum_{N=1}^{k_{T,i}+1} 4 \rad(\emp{\mcmS}_{N,ji}, N)\right)\nonumber\\
& \le & 4\left(\sum_i (k_{T,i}+1)\rad(1, k_{T,i}+1)\right)\nonumber\\
& \le & O(\sqrt{\crad m T}) .
\end{eqnarray}
where we used $\emp{\mcm}_{N,i}$ to denote the empirical average for $i^{th}$ arm over its past $N-1$ plays. 
In the last inequality, we used Lemma \ref{lem:sumrad} along with the observation that $\sum_{i=1}^m k_{T,i} = T$. Equation \eqref{eq:tmp1}, \eqref{eq:tmp2}, and \eqref{eq:tmp3} together give
\begin{eqnarray*}
||\sumT_{t=1}^T \cv_t-\mcm \dis_t|| & \le &  O(\normOned\sqrt{\crad m T}).
\end{eqnarray*}
\end{proof}
\begin{proof}[of Lemma \ref{lem:ucbImplementability}]
 We use Fenchel duality to derive an equivalent expression for $f$: $f(\x) = \min_{||\thetaV||_*\le L} f^*(\thetaV) -\thetaV\cdot\x$ (refer to Section \ref{sec:Fenchel}). Then,
$$\psi(\dis)=\max_{\mrme \in \HC_t}\min_{||\thetaV||_*\le L} f^*(\thetaV) -\thetaV \cdot (\mrme \dis) = \min_{||\thetaV||_*\le L}  f^*(\thetaV) - \min_{\mrme \in \HC_t} \thetaV \cdot (\mrme \dis),$$
by application of the minimax theorem.

Now, due to the structure of set $\HC_t$, observe that for any given $\thetaV$, a vertex $\vertex{\thetaV}$ (as defined in Equation \eqref{eq:MinEst}) of $\HC_t$ minimizes $\thetaV \cdot \mrme$ componentwise. Therefore, irrespective of what $\dis$ is, 
$$\psi(\dis) = \min_{||\thetaV||_*\le L} f^*(\thetaV) -\thetaV \cdot (\vertex{\thetaV}\dis),$$
which is a concave function, and a subgradient of this function at a point $\dis$ is $-{\thetaV'}^T\vertex{\thetaV'}$, where $\thetaV'$ is the minimizer of the above expression.
The minimizer
$$\thetaV'=\arg \min_{||\thetaV||_*\le L} \left(\max_{\mrme \in \HC_t} f^*(\thetaV) -\thetaV \cdot (\mrme \dis) \right)$$
is computable (e.g., by ellipsoid method) because it minimizes a convex function in $\thetaV$, with subgradient $\partial f^*(\thetaV)  - \vertex{\thetaV} \dis$ at point $\thetaV$.

The same analysis can be applied for $g(\dis)$, by using $f(\x) = -d(\x,S)$.
\end{proof}

\subsection{Linear contextual Bandits}
\label{app:linUCB}
It is straightforward to extend Algorithm \ref{algo:UCB-BwCR} to linear contextual bandits, using existing work on UCB family of algorithms for this problem. 
Recall that in the contextual setting a $n$-dimensional context vector $\cx_{ji}$ is associated with every arm $i$ and component $j$, and there is an unknown weight vector $\wt_j$ for every component $j$, such that $\mcmS_{ji}=\cx_{ji} \cdot \wt_j$. 
Now, consider the following ellipsoid defined by inverse of Gram matrix at time $t$,
\EQ{0in}{0in}{{\cal E}_j(t) = \{\x : (\x-\emp{\wt}_j(t))^T \B_j(t) (\x-\emp{\wt}_j(t)) \le n\},}
where 
$$\B_j(t)={\bs I}_n + \sum_{s=1}^{t-1} \cx_{j\sarm}\cx_{j\sarm}^T, \text{ and }\emp{\wt}_j(t) = \B_j(t)^{-1} \sum_{s=1}^{t-1} \cx_{j\sarm} \cvS_{s,j},$$ 
for $j=1,\dots, d$.
Results from existing literature on linear contextual bandits \cite{oful, Chu2011, Auer2003} provide that with high probability, the actual weight vector $\wt_j$ is guaranteed to lie in this elliposid, i.e.,
			$$\wt_j \in {\cal E}_j(t).$$
This allows us to define new estimate set $\HC_t$ as 
\EQ{0in}{0in}{\HC_t =\{\mcme: \mcmeS_{ji} = \cx_{ji} \cdot \tilde{\wt}_j,   \forall \tilde{\wt}_j \in {\cal E}_j(t)\}.}
Then, using results from the above-mentioned literature on linear contextual bandits, it is easy to show that the properties (1) and (2) in Equation \eqref{eq:prop1} and \eqref{eq:prop2} hold with high probability for this $\HC_t$ with $\diff(T)=\normOned n\sqrt{T \log(\frac{dT}{\delta})}$. Therefore, simply substituing this $\HC_t$ in Algorithm \ref{algo:UCB-BwCR} provides an algorithm for linear contextual version of \BwCR~, with regret bounds,
\EQ{0in}{0in}{\areg_1(T) \le  O(L \normOned n\sqrt{\frac{1}{T}\log(\frac{dT}{\delta})}), \text{~and,~} \areg_2(T) \le O(\normOned n\sqrt{\frac{1}{T} \log(\frac{dT}{\delta})}).}
\subsection{\BwK}
\label{app:UCB-BwK}

Property (1) for \BwK (stated in Equation \eqref{eq:prop1-BwK}), is simply a special case of Property (1) for \BwCR, which was proven in the previous subsection.
The following two lemmas prove the Property (2) for \BwK (stated as Equation \eqref{eq:prop2-BwK}).
The proofs are similar to the proof of Property (2) for \BwCR~ illustrated in the previous section, except that a little more careful analysis is done to get the bounds in terms of problem dependent parameters $B$ and $\OPT$.

\begin{lemma}
With probability $1-(mT)e ^{-\Omega(\crad)}$,
$$||\frac{1}{T} \sumT_{t=1}^T (\rs_{t} - \UCB_{t}(\mrv) \cdot \dis_t|| \le  \sqrt{\crad m (\sum_t \rs_t)}+{\crad m}. $$
\end{lemma}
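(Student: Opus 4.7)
My plan is to mirror the argument used in the proof of Property~(2) for \BwCR~in Appendix~\ref{app:UCB-BwCR}, but to exploit the $\sqrt{\hat\nu/N}$ shape of $\rad$ from Lemma~\ref{lem:concentration} (rather than the crude $\rad(1,\cdot)$ bound) so that the final estimate scales with $\sum_t \rs_t$ instead of $T$. The starting point is the three-term telescoping
\[
\rs_t - \UCB_t(\mrv)\cdot \dis_t
\;=\; \bigl(\rs_t - \mrvS_{\tarm}\bigr) \;+\; \bigl(\mrvS_{\tarm} - \mrv\cdot \dis_t\bigr) \;+\; \bigl(\mrv - \UCB_t(\mrv)\bigr)\cdot \dis_t.
\]

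For the first two terms I apply the martingale form of Lemma~\ref{lem:concentration}. Using $\Ex[\rs_t\mid \tarm]=\mrvS_{\tarm}$ with $N=T$ gives, with probability $1-e^{-\Omega(\crad)}$,
\[
\Bigl|\sumT_{t=1}^T (\rs_t - \mrvS_{\tarm})\Bigr| \;\le\; 3T\,\rad\!\bigl(\tfrac1T\sumT_t \rs_t,\,T\bigr) \;=\; O\!\bigl(\sqrt{\crad \sumT_t \rs_t}\;+\;\crad\bigr),
\]
and the same inequality applied to the bounded sequence $\mrvS_{\tarm}$ with conditional mean $\mrv\cdot\dis_t$ gives an analogous bound of $O(\sqrt{\crad\sum_t \mrvS_{\tarm}}+\crad)$, which becomes $O(\sqrt{\crad\sum_t \rs_t}+\crad)$ after substituting the first bound.

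For the third (UCB-gap) term I invoke Lemma~\ref{lem:empConc} to get, uniformly in $i,t$ (after a union bound costing $(mT)e^{-\Omega(\crad)}$), $\UCB_{t,i}(\mrv)-\mrvS_i \le 4\rad(\emp{\mrvS}_{t,i},k_{t,i}+1)$; a further martingale concentration step converts the $\dis_t$-expectation into a realized-arm sum, so
\[
\sumT_{t=1}^T \bigl(\UCB_t(\mrv)-\mrv\bigr)\cdot \dis_t \;\le\; 4\sumT_{t=1}^T \rad(\emp{\mrvS}_{t,\tarm},k_{t,\tarm}+1) \;+\; O\!\bigl(\sqrt{\crad\sumT_t \rs_t}+\crad\bigr).
\]
Re-indexing by the per-arm play count, $\sum_t \rad(\emp{\mrvS}_{t,\tarm},k_{t,\tarm}+1)=\sum_i\sum_{N=1}^{k_{T,i}+1}\rad(\emp{\mrvS}_{N,i},N)$. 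Using the monotonicity of $N\emp{\mrvS}_{N,i}$ in $N$ together with the identity $N\rad(\emp{\mrvS}_{N,i},N)=\sqrt{\crad N\emp{\mrvS}_{N,i}}+\crad$, the inner sum is within a constant factor of $(k_{T,i}+1)\rad(\emp{\mrvS}_{T,i},k_{T,i}+1)$; the logarithmic slack here is absorbed into $\crad=\Theta(\log(mTd/\delta))$. Finally, applying Lemma~\ref{lem:sumrad} with $a_i=\emp{\mrvS}_{T,i}$ and $n_i=k_{T,i}+1$ and using $\sum_i(k_{T,i}+1)\emp{\mrvS}_{T,i}\le\sumT_t \rs_t$ (each such product is exactly the total reward accumulated on arm $i$) yields $O(\sqrt{\crad m\sumT_t \rs_t}+\crad m)$. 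Combining the three pieces and taking a union bound over the failure events completes the proof.

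\textbf{Expected main obstacle.} The one nontrivial step is the telescoping reduction $\sum_{N=1}^{k+1}\rad(\emp{\mrvS}_{N,i},N)\lesssim(k_{T,i}+1)\rad(\emp{\mrvS}_{T,i},k_{T,i}+1)$: one must control the partial empirical averages along the trajectory by the terminal empirical mean without losing polynomial factors. This is handled by the $|\hat\nu-\nu|\le 3\rad(\nu,N)$ inequality from Lemma~\ref{lem:concentration}, which allows freely swapping $\emp{\mrvS}_{N,i}$ for $\mrvS_i$ and back at the cost of constants; the only residual $\log$-in-$T$ factor from $\sum_N 1/N$ is swallowed into the existing $\crad$. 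Everything else is a routine application of concentration plus Lemma~\ref{lem:sumrad}.
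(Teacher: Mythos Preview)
Your plan is correct and close in spirit to the paper's argument, but the decomposition and the closing step differ. The paper pivots through $\UCB_{t,\tarm}(\mrv)$ rather than $\mrv\cdot\dis_t$:
\[
\rs_t - \UCB_t(\mrv)\cdot\dis_t \;=\; (\rs_t-\mrvS_{\tarm}) \;+\; (\mrvS_{\tarm}-\UCB_{t,\tarm}(\mrv)) \;+\; (\UCB_{t,\tarm}(\mrv)-\UCB_t(\mrv)\cdot\dis_t),
\]
so the UCB-gap term already sits on the realized arm and no extra martingale step is needed to pass from the $\dis_t$-weighted gap to the realized one (your third term requires this extra step). After re-indexing and swapping $\emp{\mrvS}_{t,\tarm}$ for $\mrvS_{\tarm}$ via Lemma~\ref{lem:concentration} (exactly as you propose), the paper applies Lemma~\ref{lem:sumrad} with $a_i=\mrvS_i$ and bounds all three pieces in terms of $A:=\sum_t \UCB_{t,\tarm}(\mrv)$. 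It then closes with a quadratic self-bounding step: the combined inequalities give $A - O(\sqrt{\crad m A}) \le \sum_t \rs_t + O(\crad m)$, which is solved to get $\sqrt{A}\le\sqrt{\sum_t\rs_t}+O(\sqrt{\crad m})$ and substituted back.

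Your route---applying Lemma~\ref{lem:sumrad} with the \emph{terminal empirical} means $\emp{\mrvS}_{T,i}$ and the identity $\sum_i(k_{T,i}+1)\emp{\mrvS}_{T,i}=\sum_{t<T}\rs_t$---sidesteps this quadratic trick entirely, which is a genuine simplification. The price is the extra concentration inside your third term and the monotonicity argument for $N\emp{\mrvS}_{N,i}$; as you note, the residual $\log T$ from telescoping $\sum_N\rad(\cdot,N)$ is present in both proofs and is absorbed into $\crad$.
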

\begin{proof}
Similar to Equation \eqref{eq:tmp1} and Equation \eqref{eq:tmp2}, we can apply the concentration bounds given by Lemma \ref{lem:concentration} to get that with probability $1-(mT)e^{-\Omega(\crad)}$,
\begin{eqnarray}
|\frac{1}{T} \sumT_{t=1}^T (\rs_{t} - \mrvS_{\tarm})| & \le & 3 \rad(\frac{1}{T}\sumT_{t=1}^T \mrvS_{\tarm}, T) \nonumber\\
& \le & 3 \rad(\frac{1}{T}\sumT_{t=1}^T \UCB_{t,\tarm}(\mrv), T) \label{eq:R1}\\
|\frac{1}{T} \sumT_{t=1}^T (\UCB_{t}(\mrv) \cdot \dis_t - \UCB_{t,\tarm}(\mrv))| & \le & \rad(\frac{1}{T}\sumT_{t=1}^T \UCB_{t,\tarm}(\mrv), T) \label{eq:R2}
\end{eqnarray}
Also, using Lemma \ref{lem:empConc},
\begin{eqnarray}
\label{eq:R3}
|\sumT_{t=1}^T (\mrvS_{\tarm} - \UCB_{t,\tarm}(\mrv))| & \le & 4\sumT_t \rad(\emp{\mrvS}_{t,\tarm}, k_{t,\tarm}+1) \nonumber\\
& \le & 12\sumT_{t} \rad({\mrvS}_{\tarm}, k_{t,\tarm}+1) \nonumber\\
& = & 12 \sum_i \sum_{N=1}^{k_{T,i}+1} \rad(\mrvS_{i},N) \nonumber\\
& \le & 12 \sum_i (k_{T,i}+1)\rad(\mrvS_{i}, k_{T,i}+1) \nonumber\\
&\le & 12 \sqrt{\crad m\left(\sum_i \mrvS_{i} (k_{T,i}+1)\right)} + 12\crad m\nonumber\\
\text{(using Lemma \ref{lem:sumrad}) } &\le & 12 \sqrt{\crad m\left(\sum_t \mrvS_{\tarm} \right)} + 24 \crad m \nonumber\\
&\le & 12 \sqrt{\crad m\left(\sum_t \UCB_{t,\tarm}(\mrv) \right)} + 24\crad m
\end{eqnarray}
Let $A=\sum_{t=1}^T \UCB_{t,\tarm}(\mrv)$. Then, from \eqref{eq:R1} and \eqref{eq:R3}, we have that for some constant $\alpha$
$$ A-2\sqrt{\alpha\crad m A} \le \sum_{t=1}^T  \rs_t + O(\crad m).$$
which implies 
$$ (\sqrt{A}-\sqrt{\alpha\crad m})^2 \le \sum_{t=1}^T  \rs_t + O(\crad m).$$
Therefore,
\begin{equation}
\label{eq:R4}
\sqrt{\sum_{t=1}^T \UCB_{t,\tarm}(\mrv)} \le \sqrt{\sum_{t=1}^T  \rs_t} + O(\sqrt{\crad m}).
\end{equation}
Substituting \eqref{eq:R4} in \eqref{eq:R1}, \eqref{eq:R2}, \eqref{eq:R3}, we get
\EQ{0in}{0in}{|\sumT_{t=1}^T (\rs_{t}-\sumT_{t=1}^T \UCB_{t}(\mrv) \dis_t)| \le O(\sqrt{\crad m (\sum_{t=1}^T \rs_t)}+{\crad m}).}
\end{proof}

\begin{lemma}
With probability $1-(mTd)e^{-\Omega(\crad)}$, for all $j=1,\ldots, d$,
$$|\sumT_{t=1}^T (\BcvS_{t,j} - \LCB_{t,j}(\Bmcm) \dis_t| \le \sqrt{\crad m B}+{\crad m}. $$
\end{lemma}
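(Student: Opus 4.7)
The plan is to closely mirror the proof of the preceding lemma for rewards, with the two main changes being: (i) we work coordinate-wise in $j$ and take a union bound at the end, and (ii) the role played by $\sum_t \rs_t$ (or $\sum_t \UCB_{t,\tarm}(\mrv)$) in the reward bound will now be played by the budget $B$, using the fact that the algorithm enforces $\sum_t \LCB_{t,j}(\Bmcm)\dis_t \le (1-\epsilon)B$ via the constraints of $\LP(\UCB_t(\mrv),\LCB_t(\Bmcm),\epsilon)$.

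First I would fix a coordinate $j$ and decompose
\[
\sum_{t=1}^T \bigl(\BcvS_{t,j} - \LCB_{t,j}(\Bmcm)\cdot \dis_t\bigr)
= \underbrace{\sum_t (\BcvS_{t,j} - \BmcmS_{j\tarm})}_{(\mathrm{I})}
+ \underbrace{\sum_t (\BmcmS_{j\tarm} - \LCB_{t,j\tarm}(\Bmcm))}_{(\mathrm{II})}
+ \underbrace{\sum_t (\LCB_{t,j\tarm}(\Bmcm) - \LCB_{t,j}(\Bmcm)\cdot \dis_t)}_{(\mathrm{III})} .
\]
For (I) and (III) I would apply Lemma~\ref{lem:concentration} exactly as in \eqref{eq:R1} and \eqref{eq:R2} of the preceding lemma, obtaining deviations of order $\rad(\tfrac{1}{T}\sum_t \LCB_{t,j\tarm}(\Bmcm),T)$ with probability $1-Te^{-\Omega(\crad)}$ each (here I use that $\LCB \le \Bmcm$ pointwise, so means can be upper-bounded by LCB-sums up to the same kind of deviation, just as was done for UCB on the reward side).

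For (II), which is the analogue of \eqref{eq:R3}, I would invoke Lemma~\ref{lem:empConc} to get $|\BmcmS_{j\tarm} - \LCB_{t,j\tarm}(\Bmcm)| \le 4\rad(\widehat\BmcmS_{t,j\tarm}, k_{t,\tarm}+1)$, replace this by $12\rad(\BmcmS_{j\tarm},k_{t,\tarm}+1)$ using the standard trick, regroup the sum arm by arm (so the inner sum runs $N=1,\dots,k_{T,i}+1$), and then apply Lemma~\ref{lem:sumrad} to obtain a bound of the form $12\sqrt{\crad m \sum_t \BmcmS_{j\tarm}} + 24\crad m$, which can be replaced by $12\sqrt{\crad m \sum_t \LCB_{t,j\tarm}(\Bmcm)} + O(\crad m)$.

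The main obstacle, and the one step that differs from the reward proof, is closing the loop to get a $\sqrt{\crad m B}$ bound rather than a $\sqrt{\crad m \sum_t \rs_t}$ bound. Let $A_j := \sum_t \LCB_{t,j\tarm}(\Bmcm)$. Combining (I)+(II)+(III) gives
\[
A_j \;\le\; \sum_t \BcvS_{t,j} + O\!\left(\sqrt{\crad m A_j}\right) + O(\crad m),
\]
but on the other hand, by the second property of the estimates one-sidedly (the LCB direction) and a Lemma~\ref{lem:concentration} step on the column player $\sum_t \LCB_{t,j\tarm}(\Bmcm) - \sum_t \LCB_{t,j}(\Bmcm)\cdot\dis_t$, together with the algorithmic constraint $\sum_t \LCB_{t,j}(\Bmcm)\cdot \dis_t \le (1-\epsilon)B$ from $\LP(\UCB_t(\mrv),\LCB_t(\Bmcm),\epsilon)$, we get $A_j \le B + O(\sqrt{\crad B}) + O(\crad m)$. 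Substituting this bound on $A_j$ back into the (I)+(II)+(III) estimate yields
\[
\Bigl|\sum_{t=1}^T \bigl(\BcvS_{t,j} - \LCB_{t,j}(\Bmcm)\cdot\dis_t\bigr)\Bigr|
\;\le\; O\!\left(\sqrt{\crad m B}\right) + O(\crad m),
\]
which is exactly $\epsilon B$ for the chosen $\epsilon=\epsilonVal$. Finally, I would take a union bound over the $d$ coordinates $j$, paying an extra factor $d$ in the failure probability, to give the advertised probability $1-(mTd)e^{-\Omega(\crad)}$.
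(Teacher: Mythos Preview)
Your overall architecture---the three-term decomposition (I)+(II)+(III), the use of Lemma~\ref{lem:concentration} for (I) and (III), Lemma~\ref{lem:empConc} plus Lemma~\ref{lem:sumrad} for (II), and a self-referential quadratic inequality to close the loop via the algorithmic constraint $\sum_t \LCB_{t,j}(\Bmcm)\dis_t\le B$---is exactly what the paper does. The union bound over $j$ at the end is also right.

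The one genuine slip is your choice of pivot $A_j:=\sum_t \LCB_{t,j\tarm}(\Bmcm)$ and the direction of the substitutions that go with it. For (I), Lemma~\ref{lem:concentration} gives a deviation of order $\rad(\tfrac1T\sum_t \BmcmS_{j\tarm},T)$, and for (II) you correctly land on $12\sqrt{\crad m\sum_t \BmcmS_{j\tarm}}+24\crad m$. You then replace $\sum_t \BmcmS_{j\tarm}$ by $\sum_t \LCB_{t,j\tarm}(\Bmcm)$ in these upper bounds, justifying it by ``$\LCB\le\Bmcm$ pointwise, just as was done for UCB on the reward side.'' But that inequality goes the \emph{wrong} way here: since $\LCB\le\Bmcm$ and $\rad(\cdot,N)$ is increasing in its first argument, swapping the true-mean sum for the LCB sum makes the bound \emph{smaller}, not larger. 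On the reward side the analogous move was valid precisely because $\mrv\le\UCB$, so replacing $\mrv$ by $\UCB$ \emph{increases} the bound; the consumption side is not symmetric. (Your first displayed self-referential inequality $A_j\le\sum_t c_{t,j}+O(\sqrt{\crad m A_j})$ is also a dead end, as you seem to notice.)

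The paper's fix is simply to pivot on the true-mean sum $A:=\sum_t \BmcmS_{j\tarm}$ rather than on the LCB sum. All three pieces (B1), (B2), (B3) are then naturally bounded in terms of $A$ (for (B2) one uses $\LCB\le\Bmcm$ in the \emph{correct} direction to replace the LCB sum by $A$ inside $\rad$). Combining (B2)+(B3) with the algorithmic constraint $\sum_t \LCB_{t,j}(\Bmcm)\dis_t\le B$ yields $A\le B+O(\sqrt{\crad m A})+O(\crad m)$, hence $\sqrt{A}\le\sqrt{B}+O(\sqrt{\crad m})$, and substituting back into (B1)--(B3) gives the claimed $O(\sqrt{\crad m B}+\crad m)$. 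Once you swap your $A_j$ for the paper's $A$, your proof goes through verbatim.
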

\begin{proof}
Similar to Equation \eqref{eq:tmp1} and Equation \eqref{eq:tmp2}, we can apply the concentration bounds given by Lemma \ref{lem:concentration} to get that with probability $1-(mTd)e^{-\Omega(\crad)}$, for all $j$
\begin{eqnarray}
|\frac{1}{T} \sumT_{t=1}^T (\BcvS_{t,j} - \BmcmS_{j\tarm})| & \le & 3 \rad(\frac{1}{T}\sumT_{t=1}^T \BmcmS_{j\tarm}, T) \label{eq:B1}\\
|\frac{1}{T} \sumT_{t=1}^T (\LCB_{t,j}(\Bmcm) \dis_t - \LCB_{t,j\tarm}(\Bmcm))| & \le & \rad(\frac{1}{T}\sumT_{t=1}^T \LCB_{t,j\tarm}(\Bmcm), T) \nonumber\\
& \le & \rad(\frac{1}{T}\sumT_{t=1}^T \BmcmS_{j\tarm}, T) \label{eq:B2}
\end{eqnarray}
Also, using Lemma \ref{lem:empConc},
\begin{eqnarray}
\label{eq:B3}
|\sumT_{t=1}^T (\BmcmS_{j\tarm} - \LCB_{t,\tarm}(\Bmcm))| & \le & 4\sumT_t \rad(\emp{\BmcmS}_{t,j\tarm}, k_{t,\tarm}+1) \nonumber\\
& \le & 12 \sumT_{t} \rad({\BmcmS}_{j\tarm}, k_{t,\tarm}+1)\nonumber\\
& = & 12 \sum_i \sum_{N=1}^{k_{T,i}+1} \rad(\BmcmS_{ji},N) \nonumber\\
& \le & 12 \sum_i (k_{T,i}+1)\rad(\BmcmS_{ji}, k_{T,i}+1) \nonumber\\
&\le & 12 \sqrt{\crad m\left(\sum_i \BmcmS_{ji} (k_{T,i}+1)\right)} + 12 \crad m\nonumber\\
&\le & 12 \sqrt{\crad m\left(\sum_t \BmcmS_{j\tarm} \right)} + 24\crad m
\end{eqnarray}
Let $A=\sum_{t}  \BmcmS_{j\tarm}$. Then, from \eqref{eq:B2} and \eqref{eq:B3}, we have that for some constant $\alpha$
$$A \le \sum_t \LCB_t(\Bmcm) \dis_t + 2\sqrt{\alpha \crad m A} + O(\crad m) \le B +  2\sqrt{\alpha \crad m A} + O(\crad m),$$
where we used that $ \sum_{t=1}^T \LCB_t(\Bmcm) \dis_t \le B$, which is a corollary of the choice of $\dis_t$ made by the algorithm.  Then,
$$(\sqrt{A} - \sqrt{\alpha \crad m})^2\le B+ O(\crad m).$$
That is,
\begin{equation}
\label{eq:B4}
\sqrt{\sum_{t}  \BmcmS_{j\tarm}} \le \sqrt{B} + O(\sqrt{\crad m}).
\end{equation}
Substituting \eqref{eq:B4} in \eqref{eq:B1}, \eqref{eq:B2}, \eqref{eq:B3}, we get
$$|\sumT_{t=1}^T (\Bcv_{t,j}-\sumT_{t=1}^T \LCB_{t,j}(\Bmcm) \dis_t)| \le O(\sqrt{\crad m B}+{\crad m}).$$
\end{proof}

\section{Frank-Wolfe}
\label{app:frankWolfe}


\begin{proof}[of Theorem \ref{th:primal}]
Let  $\Delta_t := f(\optxv) -f(\avgt \x)$.  
We prove that $ \Delta_t \le \frac{\sC \log(2t)}{2t}$.  (The base of the $\log$ is 2.)
Once again, we use (\ref{eq:cor2}) for $t+1$, with $\y_{t+1} = \avgt \x$, and rearrange terms as follows: 
\begin{equation}
\label{eq:FW1} \nabla f(\avgxvt) \cdot(\x_{t+1}-\avgxvt)  \geq f(\optxv)-f(\avgxvt). 
\end{equation} 

In order to  use (\ref{eq:cfq}), we rewrite $\avgxvtp $   $=  \avgxvt + \frac{1}{t+1}(\xv_{t+1}-\avgxvt) $. 
Using  (\ref{eq:cfq}) first, followed by (\ref{eq:FW1}), gives us the following. 
\begin{eqnarray*}
f(\avgxvtp) & \ge & f(\avgxvt) + \frac{1}{t+1} \nabla f(\avgxvt) \cdot(\xv_{t+1}-\avgxvt) - \frac{\sC}{2(t+1)^2}\\
& \ge & f(\avgxvt) + \frac{1}{t+1} (f(\optxv)-f(\avgxvt)) - \frac{\sC}{2(t+1)^2}
\end{eqnarray*}
With this we can bound $\Delta_{t+1}$ in terms of $\Delta_t$. 
\begin{equation} 
\label{eq:deltadecrease}
 \Delta_{t+1}  \le  \Delta_t - \frac{1}{(t+1)}\Delta_t + \frac{\sC}{2(t+1)^2}
= \frac{t}{(t+1)}\Delta_t + \frac{\sC}{2(t+1)^2}
\end{equation} 
 Recall that we wish to show that $ \Delta_t \le \sC \log(2t)/{2t}$. 
The rest of the proof is by induction on $t$. 
For the base case, we note that  we can still use (\ref{eq:deltadecrease}) with  $t=0$
and an arbitrary $\xv_0$ which is used to set $\dis_1$. This gives us that
$ \Delta_1 \le \sC/2.$ 
The inductive step for $t+1$ follows from (\ref{eq:deltadecrease}) and the inductive hypothesis for $t$ if 
\begin{eqnarray*}
 \frac{t}{(t+1)} \cdot \frac{\sC\log(2t)}{2t}  + \frac{\sC}{2(t+1)^2}
&\leq&  \frac{\sC\log(2(t+1))}{2(t+1)}  \\
\Leftrightarrow \log(t)  + \frac 1 {t+1} &\leq& \log(t+1) \\
\Leftrightarrow \frac 1 {t+1} &\leq & \log(1+ \tfrac 1 t) . 
\end{eqnarray*}
The last inequality follows from the fact that  for any $a> 0$, $\log(1+a) > \frac{a}{1+a}$, by setting $a=1/t$. 
This completes the induction. Therefore,
$\Delta_T  = f(\optxv) - f(\avgT \xv)  \le  \frac{\sC\log(2T)}{2T}$ and combined with (\ref{eq:regCalc2}), we get the desired theorem statement.
\end{proof}

\begin{proof}[of Theorem \ref{th:Nesterov} ]
We first show Lipshitz continuity of $\nabla \hat{f}_\sP$. 
Let $\x_1$ and $\x_2$ be any two points in the domain of $f$, 
then for $\ell = 1,2$, $\nabla\hat{f}_\sP(\x_\ell) = - \thetaV_\ell$ where 
\[ \thetaV_\ell = \arg \min_{||\thetaV|| \leq L} \{ f^*(\thetaV) + \frac \sP {2L} ||\thetaV||^2  - \thetaV\cdot \x_\ell\} . \] 
We use the following fact about convex functions: if $\y^*$ minimizes a convex function $\psi$ 
over some domain and $\y$ is any other point in the domain then $\nabla \psi (\y^*) \cdot (\y - \y^*) \geq 0$.  
Using this fact for $\y^* = \thetaV_1$ and $\y = \thetaV_2$, we get that 
\begin{equation} 
\label{eq:nes1} 
\left(\nabla f^*(\thetaV_1) + \frac {\sP} {L} \thetaV_1 - \x_1 \right)\cdot (\thetaV_2 - \thetaV_1) \geq 0 . 
\end{equation} 
Using convexity of $f^*$ and strong convexity of $||\cdot||^2$, we get that 
\begin{eqnarray}  
\label{eq:nes2} 
f^*(\thetaV_2)  &\geq& f^*(\thetaV_1) + \nabla f^*(\thetaV_1) \cdot (\thetaV_2 - \thetaV_1),\\
\label{eq:nes3} \frac \sP {2L} ||\thetaV_2||^2 &\geq& \frac \sP {2L} \left(   ||\thetaV_1||^2 + 2\thetaV_1\cdot (\thetaV_2- \thetaV_1) 
+ ||\thetaV_2 - \thetaV_1||^2  \right ). 
\end{eqnarray}
Adding (\ref{eq:nes1}--\ref{eq:nes3}) we get that 
\[ - \x_1 \cdot (\thetaV_2 - \thetaV_1)+f^*(\thetaV_2) + \frac \sP {2L} ||\thetaV_2||^2 
\geq   f^*(\thetaV_1)+\frac \sP {2L} \left(   ||\thetaV_1||^2+||\thetaV_2 - \thetaV_1||^2  \right ). \]
Similarly, by switching $\x_1$ and $\x_2$, we get 
\[ - \x_2 \cdot (\thetaV_1 - \thetaV_2)+f^*(\thetaV_1) + \frac \sP {2L} ||\thetaV_1||^2 
\geq   f^*(\thetaV_2)+\frac \sP {2L} \left(   ||\thetaV_2||^2+||\thetaV_2 - \thetaV_1||^2  \right ). \]
Adding these two, we get 
\[  (\x_1- \x_2) \cdot (\thetaV_1 - \thetaV_2)  \geq \frac \sP L ||\thetaV_2 - \thetaV_1||^2.\]
By Caucy-Schwartz inequality, we have 
\begin{eqnarray*} 
(\x_1- \x_2) \cdot (\thetaV_1 - \thetaV_2) & \leq  & ||\x_1- \x_2|| \cdot||\thetaV_1 - \thetaV_2|| \\
\therefore \frac \sP L ||\thetaV_2 - \thetaV_1||^2 & \leq  & ||\x_1- \x_2|| \cdot||\thetaV_1 - \thetaV_2|| \\
\Rightarrow  ||\thetaV_2 - \thetaV_1|| & \leq  &\frac L \sP  ||\x_1- \x_2||.  
\end{eqnarray*} 
 This shows that the Lipschitz constant of $\nabla \hat{f}_\sP$ is $L/\sP$.

Then, we can show that  $\hat{f}_\sP$ is $\frac{dL}{\sP}$ smooth as follows:
\begin{eqnarray*}
\hat{f}_\sP(\x+\alpha(\y-\x)) & = & \hat{f}_\sP(\x) - \int_{w: 0}^\alpha \nabla \hat{f}_\sP(\x+w(\y-\x)) \cdot (\y-\x) dw \\
& = & \hat{f}_\sP(\x) + \alpha \nabla\hat{f}_\sP(\x)(\y-\x) + \int_{w: 0}^\alpha(\nabla \hat{f}_\sP(\x+w(\y-\x))- \nabla \hat{f}_\sP(\x))\cdot (\y-\x) dw\\
\end{eqnarray*}
Then, using Lipschitz continuity of $\hat{f}_\sP$,
\begin{eqnarray*}
\left|\int_{w: 0}^\alpha(\nabla \hat{f}_\sP(\x+w(\y-\x))- \nabla \hat{f}_\sP(\x))\cdot (\y-\x) dw\right| & \le & \frac{L\alpha^2}{\sP} ||\x-\y|| \cdot ||\x-\y|| \int_{0}^\alpha (w)dw\\
& = &  \frac{L\alpha^2}{2\sP} ||\x-\y||^2\\
& \le &  \frac{dL}{\sP} \cdot \frac {\alpha^2} 2
\end{eqnarray*}
It remains to show that $\hat{f}_\sP -\frac{\sP L}{2} \leq f \leq \hat{f}_\sP.$ This follows almost immediately from 
Lemma \ref{lem:FenchelDuality} and  (\ref{eq:fsmooth2}): the function inside the minimization for 
$\hat{f}_\sP$ is always larger than that of $f$, but not by more than $\frac{\sP L}{2}$.  
\end{proof} 


\begin{lemma} \label{lem:fmu} 
$\nabla \hat{f}_\sP (\z) = -\thetaV$ iff $\exists~\y$ such that 
\begin{enumerate} 
\item $-\thetaV$ is a supergradient of $f$ at $\y$. We denote this by $-\thetaV \in \partial f (\y) $, and 
\item $-\thetaV= \alpha (\y -\z) $  where $\alpha = \min \{  L/\sP, L/||\y - \z || \}$. 
\end{enumerate} 
\end{lemma}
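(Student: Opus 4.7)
The plan is to apply the envelope theorem to the parametric minimization defining $\hat{f}_\sP$ and then solve the KKT optimality conditions explicitly.

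First I would observe that the function $\phi(\z,\thetaV) := f^*(\thetaV) + \frac{\sP}{2L}\thetaV\cdot\thetaV - \thetaV\cdot\z$ is affine (hence concave) in $\z$ and strictly convex in $\thetaV$ on the compact domain $\{\|\thetaV\|\le L\}$. Therefore $\hat{f}_\sP(\z) = \min_{\|\thetaV\|\le L}\phi(\z,\thetaV)$ is a concave function of $\z$ with a unique minimizer $\thetaV^*(\z)$, and by the envelope (Danskin) theorem its gradient is
\[\nabla\hat{f}_\sP(\z) = \nabla_\z\phi(\z,\thetaV^*(\z)) = -\thetaV^*(\z).\]
So proving the lemma reduces to characterizing $\thetaV^*(\z)$ as exactly the $\thetaV$ satisfying conditions (1) and (2).

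Next I would write the KKT conditions for the constrained minimization. Introducing a multiplier $\lambda\ge 0$ for $\|\thetaV\|^2\le L^2$, stationarity in $\thetaV$ gives $\z-(\tfrac{\sP}{L}+2\lambda)\thetaV^*\in\partial f^*(\thetaV^*)$. Using the Fenchel correspondence that $\y\in\partial f^*(\thetaV)$ iff $\y$ maximizes $\y\cdot\thetaV+f(\y)$ iff $-\thetaV\in\partial f(\y)$ (valid since $f$ is concave), I would define $\y := \z-(\tfrac{\sP}{L}+2\lambda)\thetaV^*$, giving condition (1), $-\thetaV^*\in\partial f(\y)$, and the relation $\thetaV^* = -\beta(\y-\z)$ with $\beta = 1/(\sP/L+2\lambda)$.

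Finally I would unify the two regimes via complementary slackness. If $\lambda=0$ then $\beta=L/\sP$ and feasibility $\|\thetaV^*\|\le L$ forces $\|\y-\z\|\le\sP$. If $\lambda>0$ then $\|\thetaV^*\|=L$, which gives $\beta = L/\|\y-\z\|$, and $\beta\le L/\sP$ forces $\|\y-\z\|\ge\sP$. In both cases $\beta$ coincides with $\alpha = \min\{L/\sP,L/\|\y-\z\|\}$, yielding condition (2). For the converse, since $\phi$ is strictly convex in $\thetaV$, the KKT conditions are sufficient, so any $(\y,\thetaV)$ satisfying (1) and (2) must produce the unique minimizer $\thetaV^*(\z)=\thetaV$, and hence $\nabla\hat{f}_\sP(\z) = -\thetaV$. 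The main obstacle will be bookkeeping the sign/duality conventions (especially the supergradient correspondence for concave $f$ versus the standard Fenchel form) and verifying that the two KKT regimes splice together smoothly at the boundary $\|\y-\z\|=\sP$; the rest is a routine envelope-theorem argument.
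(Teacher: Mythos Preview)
Your proposal is correct and follows essentially the same route as the paper: both identify $\nabla\hat f_\sP(\z)=-\thetaV^*$ with $\thetaV^*$ the minimizer in the definition of $\hat f_\sP$, and then characterize $\thetaV^*$ via the KKT/optimality conditions, which unwind to exactly conditions (1) and (2). The only difference is packaging: the paper swaps the order of $\min_{\thetaV}$ and $\max_{\y}$ via the minimax theorem and reads off condition (1) from the $\min$--$\max$ form and condition (2) from the $\max$--$\min$ form (using the spherical symmetry of $\{\|\thetaV\|\le L\}$ to fix the direction and a one-dimensional optimization in $\alpha$ to fix the magnitude), whereas you keep a single constrained minimization in $\thetaV$, introduce a Lagrange multiplier for $\|\thetaV\|^2\le L^2$, and use the Fenchel correspondence $\y\in\partial f^*(\thetaV)\Leftrightarrow -\thetaV\in\partial f(\y)$ together with complementary slackness to split into the two regimes. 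Your version is arguably cleaner in that it makes the envelope/Danskin step and the case split explicit; the paper's version is slightly more geometric. Either way the content is the same, and your anticipated obstacle (sign conventions for the concave Fenchel pair and the splice at $\|\y-\z\|=\sP$) is indeed the only place requiring care.
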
 
\begin{proof} 
The gradient of $\hat{f}_\sP$ is equal to $-\thetaV$ where $\thetaV$ is the $\arg \min$ in \eqref{eq:fsmooth2}, which is equivalent to 
\[ \min_{||\thetaV|| \leq L} \max_{\y}  \{ f(\y) + \thetaV\cdot \y  + \frac{\sP}{2L} \thetaV \cdot \thetaV  - \thetaV\cdot \z\} 
= \max_{\y} \min_{||\thetaV|_* \leq L}  \{ f(\y) + \thetaV\cdot \y  + \frac{\sP}{2L} \thetaV \cdot \thetaV  - \thetaV\cdot \z\} ,\]
by the min-max theorem. 
The two conditions in the hypothesis of the lemma are essentially the KKT conditions for the above.
Given $\thetaV$, it must be that $\y$ optimizes the inner maximization in the first form, which happens when  $-\thetaV \in \partial f(\y) $.  On the other hand, given $\y$, it must be that $\thetaV$ optimizes the inner minimization in the second form. 
Note that due to the spherical symmetry of the domain of $\thetaV$, the {\em direction} that minimizes is 
$\z-\y$. Therefore we may assume that $\thetaV = \alpha (\z -\y)$ for some $0 \leq \alpha \leq L /||\z - \y||$, since 
$||\thetaV||\leq L $.    Given this, the inner minimization reduces to minimizing $ \sP \alpha^2/(2L)
 - \alpha$, subject to the constraint on $\alpha$ above, the solution to which is  $\alpha = \min \{  L/\sP, L/||\y - \z || \}$. 
\end{proof} 

\begin{proof}[of Lemma \ref{lem:distance-smooth}]
One can get a closed form expression for the subgradients of the distance function. 
Let $\proj_S(\z)$ be the projection of $\z$ onto $S$ for $\z \notin S$, 
and $\nu_S(\z)$ be the set of unit normal vectors to $S$ at $\z$, for a $\z$ that is on the boundary of $S$.
We extend the defintion of $\nu_S(\z)$ to $\z \notin S$ as
\[ \nu_S(\z) := \frac {\z - \proj_S(\z)} {||\z - \proj_S(\z)||} .\]
Then, the set of subgradients of the distance function $\partial d(\z, S)$ is as follows. 
\begin{equation*}
\partial d(\z, S) =  \threepartdef 
	{\nu_S(\z) }  	{ \z \notin S} 
		{\{\alpha \nu_S(\z) , \text{ for all } \alpha \in [0,1]\}}	{\z \text{ is on the boundary of } S}	
	{\mathbf 0} 									{\z \in \text{ interior of } S} 
\end{equation*} 
Note that $d(\cdot,S)$ is non-smooth near the boundary of $S$. We now show how $\hat{d}_\sP(\cdot, S)$ becomes smooth, and give the stated closed form expression for $\nabla  \hat{d}_\sP(\cdot ,S)$. 


We use Lemma \ref{lem:fmu} for $f(\z) = -d(\z,S)$ to construct for each $\z$, a $\y$ that satisfies the two conditions in the lemma, and gives 
$\nabla \hat{f}_\sP(\z) = -\nabla \hat{d}_\sP(\z, S)$ as claimed. Note that $L=1$ in this case. 
\paragraph{Case 1: $	{ ||\z  -\pi_S(\z)|| \geq \sP } $} 
Pick  $\y = \pi_S(\z)$.  Note that $\nu_S(\z) \in \nu_S(\y)$ therefore $-\nu_S(\z) \in\partial f(\y)$, and the first condition in 
Lemma \ref{lem:fmu} is satisfied. Since $||\z - \y|| \geq \sP$, $\alpha = 1/||\z - \y|| $ and 
$\alpha (\y - \z) = - \nu_S(\z)$, so the second condition in Lemma \ref{lem:fmu} is satisfied.

\paragraph{Case 2: ${ 0 < ||\z  -\pi_S(\z)|| < \sP}$} 
Pick  $\y = \pi_S(\z)$.  As in Case 1, $\nu_S(\z) \in \nu_S(\y)$ therefore 
$-\nu_S(\z)\tfrac {||\z  -\pi_S(\z)||}  \sP \in\partial f(\y)$, and 
the first condition in  Lemma \ref{lem:fmu} is satisfied. 
Since $||\z - \y|| < \sP$, $\alpha = 1/\sP $ and $\alpha (\y - \z) = \frac{\pi_S(\z) - \z} \sP$
so  the second condition in Lemma \ref{lem:fmu} is satisfied. 

\paragraph{Case 3: ${\z \in  S} $} 
Pick $\y = \z$. Note that ${\mathbf 0} \in\partial f(\y)$, and the conditions in Lemma \ref{lem:fmu} are satisfied trivially. 
 
\end{proof}
\setcounter{section}{1}


\end{document}